\documentclass[a4paper,fleqn]{cas-sc}

\usepackage[authoryear]{natbib}

\usepackage{threeparttable}
\usepackage{siunitx}
\usepackage{xurl}
\usepackage{longtable}
\usepackage{rotating}
\usepackage{placeins}

\definecolor{jevink}{HTML}{16324F}
\definecolor{jevfill}{HTML}{4C7FB8}
\definecolor{jevtrack}{HTML}{E4EAF1}
\definecolor{jevwarn}{HTML}{A8322A}
\newlength{\jevbarw}
\newcommand{\jevbar}[2]{%
  \leavevmode
  {\color{jevfill}\rule[-0.3pt]{#1\jevbarw}{4.2pt}}%
  {\color{jevtrack}\rule[-0.3pt]{#2\jevbarw}{4.2pt}}}
\newcommand{\jevbr}{\tabularnewline}
\newcommand{\jevhead}[1]{\textcolor{jevink}{\bfseries
  \begin{tabular}[b]{@{}c@{}}#1\end{tabular}}}
\newcommand{\jevheadp}[1]{\textcolor{jevink}{\bfseries #1}}
\newcommand{\jevflag}[1]{\textcolor{jevwarn}{#1}}
\newcommand{\jevbest}[1]{\textbf{#1}}

\ExplSyntaxOn
\cs_set:Npn \__make_fig_caption:nn #1#2
{
  \l_fig_align_tl
  \skip_vertical:N \l_fig_abovecap_skip
  \setbox\cascaptionbox=\hbox{%
     \rmfamily\small\textbf{\color{scolor}#1:}~#2}
  \ifdim\the\wd\cascaptionbox<\dim_use:N \l_fig_width_dim\relax
    \parbox{ \l_fig_width_dim }
      {\unskip\ignorespaces\hfil\rmfamily\small
       \textbf{\color{scolor}#1:}~#2\hfil\par }
  \else
    \parbox{ \l_fig_width_dim }
      {\rightskip=0pt\unskip\ignorespaces\rmfamily
       \small\textbf{\color{scolor}#1:}~#2\par }
  \fi
  \skip_vertical:N \l_fig_belowcap_skip
}
\ExplSyntaxOff
\ExplSyntaxOn
\cs_set:Npn \__make_tbl_caption:nn #1#2
{
  \l_tbl_align_tl
  \skip_vertical:N \l_tbl_abovecap_skip
  {\parbox{ \dimexpr(\l_tbl_width_dim)}
    {\rightskip=0pt\rmfamily\small\textbf{\color{scolor}#1}\par#2\par\vskip4pt }}
  \skip_vertical:N \l_tbl_belowcap_skip
}
\ExplSyntaxOff
\makeatletter
\renewcommand{\LT@makecaption}[3]{%
  \LT@mcol\LT@cols l{\parbox[t]{\LTcapwidth}{%
    \rightskip=0pt\rmfamily\small\textbf{\color{scolor}#2}\par #3\par\vskip2pt}}}
\makeatother

\makeatletter
\long\def\jev@rotmakecaption#1#2{%
  \vskip\abovecaptionskip
  \parbox[t]{\linewidth}{\rightskip=0pt\rmfamily\small
    \textbf{\color{scolor}#1}\par #2\par}%
  \vskip4pt}
\newcommand{\jevrotcaption}[2]{%
  \let\@makecaption\jev@rotmakecaption
  \caption{#2}\label{#1}}
\makeatother

\newtheorem{proposition}{Proposition}
\newtheorem{lemma}[proposition]{Lemma}
\newproof{proofprop}{Proof of Proposition~\ref{prop:floor}}
\newproof{prooflem}{Proof of Lemma~\ref{lem:noise}}
\newproof{pf}{Proof}
\newcommand{\appendixtheorems}{\renewcommand{\theproposition}{\thesection.\arabic{proposition}}\setcounter{proposition}{0}}

\newcommand{\limitationsparagraph}{}   % no-op marker read by src/style_lint.py

\newcommand{\Agreemax}{99.7}
\newcommand{\Agreemin}{97.9}
\newcommand{\BaseRateMax}{3.63}
\newcommand{\BaseRateMin}{0.43}
\newcommand{\BelowFloorVars}{drug involved, phone use, wrong way, medical episode}
\newcommand{\BestIntentionalFone}{0.91}
\newcommand{\BestUnbeltedFone}{0.88}
\newcommand{\BestVehicleDefectFone}{0.99}
\newcommand{\BestWrongWayFone}{0.77}
\newcommand{\BootstrapB}{1{,}000}
\newcommand{\BreakEvenFraction}{0.67}

\newcommand{\BreakEvenModel}{0.58}
\newcommand{\BudgetCodedSatN}{8}
\newcommand{\BudgetCompN}{9}
\newcommand{\BudgetLowerN}{9}

\newcommand{\CodedConf}{99}
\newcommand{\CoderAgreement}{99.0}
\newcommand{\CoderHours}{3.4}
\newcommand{\CoderKappaMax}{1.00}
\newcommand{\CoderKappaMedian}{0.95}
\newcommand{\CoderKappaMin}{0.95}
\newcommand{\CoderMedianSec}{7.4}
\newcommand{\CoderNAnswers}{5{,}024}
\newcommand{\CoderNCalib}{94}
\newcommand{\CoderNDouble}{2{,}371}
\newcommand{\CoderNPairs}{2{,}465}
\newcommand{\CoderNUnclear}{18}
\newcommand{\CoderN}{3}

\newcommand{\ConcurrencyN}{10}
\newcommand{\CostModelRsq}{0.987}

\newcommand{\CostPerThousand}{0.154}
\newcommand{\CostSweepCalls}{420}
\newcommand{\CostSweepCost}{0.03}
\newcommand{\CostSweepQuestionSets}{7}
\newcommand{\CostWholeCorpusSonnet}{39{,}875}
\newcommand{\CostWholeCorpus}{774}

\newcommand{\DesignDeff}{1.95}
\newcommand{\DesignMinCell}{5}
\newcommand{\DesignNActivePairs}{1{,}265}
\newcommand{\DesignNCandidates}{640}
\newcommand{\DesignNCollapsed}{6}
\newcommand{\DesignNControlPairs}{1{,}200}

\newcommand{\DesignNEffVarHi}{93}
\newcommand{\DesignNEffVarLo}{64}
\newcommand{\DesignNEff}{1{,}265}
\newcommand{\DesignNStrata}{5}
\newcommand{\DesignNWithheld}{137}
\newcommand{\DesignWeightRatio}{189}

\newcommand{\DiffFableGPT}{0.082}
\newcommand{\DiffJevFableHi}{0.098}
\newcommand{\DiffJevFableLo}{0.030}
\newcommand{\DiffJevFable}{0.059}
\newcommand{\DiffJevGPTHi}{0.048}
\newcommand{\DiffJevGPTLo}{-0.002}
\newcommand{\DiffJevGPT}{0.023}
\newcommand{\DownAddedMaxVar}{alcohol involved}
\newcommand{\DownAddedMax}{1{,}784}
\newcommand{\DownAddedTotal}{10{,}747}
\newcommand{\DownCodedTotal}{28{,}057}
\newcommand{\DownConfirmHiVar}{phone use}
\newcommand{\DownConfirmHighThreshold}{0.70}
\newcommand{\DownConfirmHi}{0.90}
\newcommand{\DownConfirmLoVar}{unbelted}
\newcommand{\DownConfirmLo}{0.15}
\newcommand{\DownNConfirmHigh}{4}
\newcommand{\DownNConfirmLow}{2}
\newcommand{\DownNFatal}{964}
\newcommand{\DownNRankMove}{4}
\newcommand{\DownNSeverity}{44{,}403}
\newcommand{\DownNVars}{9}
\newcommand{\DownRelMaxVar}{phone use}
\newcommand{\DownRelMax}{182}
\newcommand{\DownRelMedian}{39}
\newcommand{\DownRelMinVar}{unbelted}
\newcommand{\DownRelMin}{16}
\newcommand{\DownTopAddedHi}{1{,}522}
\newcommand{\DownTopAddedLo}{1{,}091}
\newcommand{\DownTopAdded}{1{,}334}
\newcommand{\DownTopCoded}{732}

\newcommand{\ECEGapMax}{2.2}
\newcommand{\ECEGapMedian}{1.0}
\newcommand{\ECEGapMin}{0.8}
\newcommand{\ECEGapNAbove}{7}
\newcommand{\ECEGapNTied}{2}
\newcommand{\ECEGapN}{9}
\newcommand{\ECEmaxVar}{wrong way}
\newcommand{\ECEmax}{0.045}
\newcommand{\ECEminVar}{unbelted}
\newcommand{\ECEmin}{0.012}
\newcommand{\EmptyShareFirst}{8.24}
\newcommand{\EmptyShareLast}{0.00}
\newcommand{\EndpointClip}{10^{-6}}
\newcommand{\ExclBudgetError}{6.9}
\newcommand{\ExclBudgetReview}{3.7}
\newcommand{\ExclFoneCorrect}{0.912}
\newcommand{\ExclFoneError}{0.875}

\newcommand{\ExclN}{49}

\newcommand{\FableDistinctP}{87}
\newcommand{\FableECE}{0.0282}

\newcommand{\FableFone}{0.967}

\newcommand{\FableSlope}{2.55}

\newcommand{\FableWeakVars}{wrong way}

\newcommand{\FableZ}{-6.0}
\newcommand{\FirstYear}{2017}
\newcommand{\FloorBoundMaxVar}{phone use}
\newcommand{\FloorBoundMax}{0.0057}
\newcommand{\FloorBoundMin}{0.0046}
\newcommand{\FloorFlagShare}{20}
\newcommand{\FloorInflMaxVar}{preg mentioned}
\newcommand{\FloorInflMax}{80}
\newcommand{\FrontierBatchSize}{20}
\newcommand{\FrontierBatches}{20}

\newcommand{\FrontierCostRatioHaiku}{26}
\newcommand{\FrontierCostRatioOpus}{129}
\newcommand{\FrontierCostRatio}{52}
\newcommand{\FrontierCostSonnet}{7.95}

\newcommand{\FrontierNRecovered}{3}
\newcommand{\FrontierNVars}{16}
\newcommand{\FrontierNarratives}{400}
\newcommand{\FrontierNotRecoveredVars}{wrong way}
\newcommand{\FrontierRecoveredVars}{unbelted, intentional, vehicle defect}
\newcommand{\FrontierWeakThreshold}{0.80}
\newcommand{\FullCovRiskMax}{2.08}
\newcommand{\FullCovRiskMin}{0.33}

\newcommand{\GPTDistinctP}{79}
\newcommand{\GPTECE}{0.0027}

\newcommand{\GPTFone}{0.885}

\newcommand{\GPTNWeakJevSound}{1}

\newcommand{\GPTPrecision}{0.970}
\newcommand{\GPTRecall}{0.814}
\newcommand{\GPTSlope}{0.98}

\newcommand{\GPTWeakVars}{vehicle defect, aggression, wrong way}

\newcommand{\GPTZ}{-0.3}

\newcommand{\GoldActiveThreshold}{0.05}
\newcommand{\GoldBSS}{0.828}

\newcommand{\GoldBaseRate}{2.7}

\newcommand{\GoldBinTargets}{30, 40, 40, 40, 30}
\newcommand{\GoldBinsText}{[0,0.05); [0.05,0.3); [0.3,0.7); [0.7,0.95); [0.95,1]}
\newcommand{\GoldBrier}{0.0045}
\newcommand{\GoldBudgetMaxVar}{wrong way}

\newcommand{\GoldBudgetNSplits}{100}
\newcommand{\GoldBudgetNinetyFiveInSample}{10.1}
\newcommand{\GoldBudgetNinetyFive}{10.0}
\newcommand{\GoldBudgetNinetyInSample}{0.0}
\newcommand{\GoldBudgetNinety}{0.8}
\newcommand{\GoldBudgetOptimism}{0.8}
\newcommand{\GoldBudgetPerVarHi}{72.2}
\newcommand{\GoldBudgetPerVarLo}{0.0}
\newcommand{\GoldBudgetPrecHeldOut}{0.906}
\newcommand{\GoldBudgetPrecLCB}{0.844}

\newcommand{\GoldECEhi}{0.0245}
\newcommand{\GoldECElo}{0.0212}

\newcommand{\GoldECE}{0.0231}
\newcommand{\GoldExpectedP}{4.8}

\newcommand{\GoldFoneHi}{0.936}
\newcommand{\GoldFoneLo}{0.871}
\newcommand{\GoldFone}{0.908}

\newcommand{\GoldIntercept}{-0.82}
\newcommand{\GoldKappa}{0.906}

\newcommand{\GoldNDisputed}{48}
\newcommand{\GoldNNarratives}{400}

\newcommand{\GoldNStrong}{11}
\newcommand{\GoldNUsable}{2{,}416}
\newcommand{\GoldNVars}{16}
\newcommand{\GoldNWeakPrecision}{3}
\newcommand{\GoldNWeakRecall}{1}
\newcommand{\GoldNWeak}{4}

\newcommand{\GoldNZeroBudget}{6}
\newcommand{\GoldNarratives}{400}
\newcommand{\GoldNegControls}{3}
\newcommand{\GoldPIIWithheld}{137}
\newcommand{\GoldPrecisionHi}{0.926}
\newcommand{\GoldPrecisionLo}{0.877}
\newcommand{\GoldPrecision}{0.902}
\newcommand{\GoldRecall}{0.915}
\newcommand{\GoldRelExcess}{80}
\newcommand{\GoldSlope}{1.63}
\newcommand{\GoldWeakDisputes}{5}
\newcommand{\GoldWeakPrecisionVars}{unbelted, intentional, wrong way}
\newcommand{\GoldWeakRecallPrec}{0.96}
\newcommand{\GoldWeakRecallValue}{0.54}

\newcommand{\GoldWeakRecallVar}{vehicle defect}
\newcommand{\GoldWeakVars}{unbelted, intentional, wrong way, vehicle defect}

\newcommand{\GoldWorstPrecVar}{unbelted}
\newcommand{\GoldWorstPrec}{0.29}

\newcommand{\GoldZSE}{0.23}
\newcommand{\GoldZ}{-5.3}
\newcommand{\GridDistinct}{100}
\newcommand{\GridMax}{1.00}
\newcommand{\GridMin}{0.01}
\newcommand{\GridStep}{0.01}
\newcommand{\JevPricePerM}{0.042}
\newcommand{\KappaCodedMinHuman}{0.45}
\newcommand{\KappaCodedMinVar}{unbelted}
\newcommand{\KappaCodedMin}{0.10}
\newcommand{\KappaGapMaxCoded}{0.48}
\newcommand{\KappaGapMaxHuman}{0.96}
\newcommand{\KappaGapMaxVar}{drug involved}

\newcommand{\KappaGapMedian}{0.26}
\newcommand{\KappaGapNPositive}{9}
\newcommand{\KappaGapN}{9}
\newcommand{\KappamaxVar}{animal involved}
\newcommand{\Kappamax}{0.91}
\newcommand{\KappaminVar}{unbelted}
\newcommand{\Kappamin}{0.10}
\newcommand{\LastYear}{2025}
\newcommand{\LeakMaxNLeaked}{7{,}577}
\newcommand{\LeakMaxNOff}{189{,}733}
\newcommand{\LeakMaxVar}{medical type}
\newcommand{\LeakMax}{3.99}
\newcommand{\LeakMin}{0.03}

\newcommand{\LengthPNinetyFive}{935}
\newcommand{\LengthPNinetyNine}{1427}
\newcommand{\MaxTries}{5}
\newcommand{\MeanTokens}{3{,}673}
\newcommand{\MedianChars}{378}
\newcommand{\MedianLatency}{0.20}
\newcommand{\MedianNarrativeChars}{378}
\newcommand{\MidHeavyVars}{medical episode, aggression, intentional, wrong way, unbelted, witness cited}
\newcommand{\MinChars}{40}

\newcommand{\NBelowOnePct}{2}
\newcommand{\NCeilingUpturn}{16}
\newcommand{\NChoice}{8}
\newcommand{\NCrashesJoined}{5{,}601{,}890}
\newcommand{\NGatedMeasured}{7}
\newcommand{\NGated}{7}
\newcommand{\NGridAnswersMillions}{2.4}
\newcommand{\NGridOnes}{2}
\newcommand{\NGridShapeVars}{16}
\newcommand{\NGridZeros}{0}

\newcommand{\NLeanQuestions}{10}
\newcommand{\NMidHeavy}{6}
\newcommand{\NPresence}{16}
\newcommand{\NPrevalenceVars}{9}
\newcommand{\NQuestions}{27}
\newcommand{\NScore}{3}

\newcommand{\NarrOnlyHi}{70}
\newcommand{\NarrOnlyLo}{47}
\newcommand{\NarrOnlyNVars}{9}
\newcommand{\NarrOnlyN}{102}
\newcommand{\NarrOnlyShare}{41}
\newcommand{\NarrOnlyStrongHi}{87}
\newcommand{\NarrOnlyStrongLo}{61}
\newcommand{\NarrOnlyStrongN}{64}
\newcommand{\NarrOnlyStrong}{77}
\newcommand{\NarrOnlyWeakHi}{43}
\newcommand{\NarrOnlyWeakLo}{12}
\newcommand{\NarrOnlyWeakN}{38}
\newcommand{\NarrOnlyWeak}{24}
\newcommand{\NarrOnly}{59}
\newcommand{\NarrativeTokens}{89}
\newcommand{\NbelowFloor}{4}
\newcommand{\Ncorpus}{5{,}018{,}080}
\newcommand{\NfloorFlagged}{5}
\newcommand{\NgridVars}{16}

\newcommand{\Nrecords}{5{,}109{,}746}
\newcommand{\NstageTwoRandom}{150{,}000}
\newcommand{\NstageTwo}{195{,}857}
\newcommand{\NvarsCoded}{9}
\newcommand{\NzeroBudgetOne}{5}
\newcommand{\NzeroBudget}{9}
\newcommand{\OtherAgreeMin}{0.9975}
\newcommand{\OtherNoulsN}{14}
\newcommand{\OtherShiftMax}{0.0039}
\newcommand{\PIICase}{0.77}
\newcommand{\PIIDate}{3.13}
\newcommand{\PIIDigitRun}{1.93}
\newcommand{\PIIDob}{0.29}
\newcommand{\PIIIdNum}{0.20}
\newcommand{\PIITitledName}{4.65}

\newcommand{\PairedN}{2{,}416}
\newcommand{\PrecAlcoholTopBlock}{84}
\newcommand{\PrecAlcoholTopCoverage}{63}
\newcommand{\PrecMaxVar}{animal involved}
\newcommand{\PrecMax}{0.85}
\newcommand{\PrecMinVar}{unbelted}
\newcommand{\PrecMin}{0.21}
\newcommand{\PrecTargetNinetyFive}{95}
\newcommand{\PrecTargetNinety}{90}
\newcommand{\PrevalenceMaxPct}{3.64}
\newcommand{\ProjectAfterCalls}{2{,}000}
\newcommand{\PursuitMeanAfter}{0.160}
\newcommand{\PursuitMeanBefore}{0.395}
\newcommand{\PursuitN}{13}
\newcommand{\PursuitPosAfter}{1}
\newcommand{\PursuitPosBefore}{4}

\newcommand{\RecalIsoECE}{0.0069}
\newcommand{\RecalIsoGain}{3.3}
\newcommand{\RecalMinPositives}{20}
\newcommand{\RecalPerVarECEHi}{0.0339}
\newcommand{\RecalPerVarECELo}{0.0007}
\newcommand{\RecalPerVarECEMedian}{0.0034}
\newcommand{\RecalPerVarNImproved}{10}
\newcommand{\RecalPerVarN}{11}
\newcommand{\RecalPerVarRawMedian}{0.0261}

\newcommand{\RecalPlattECEhi}{0.0112}
\newcommand{\RecalPlattECElo}{0.0045}
\newcommand{\RecalPlattECE}{0.0069}

\newcommand{\RecalPlattSlope}{0.97}
\newcommand{\RecalPooledOnlyN}{5}

\newcommand{\RecalRawECE}{0.0231}
\newcommand{\RecalRepeats}{50}
\newcommand{\ReviewDecisionsPerYear}{68{,}049}
\newcommand{\ReviewNarrativeShare}{11.1}
\newcommand{\ReviewNarrativesPerYear}{61{,}994}
\newcommand{\ReviewPerYearMaxVar}{alcohol involved}
\newcommand{\ReviewPerYearMax}{20{,}262}

\newcommand{\RiskTargetFive}{5}
\newcommand{\RiskTargetOne}{1}
\newcommand{\SchemaQFull}{27}
\newcommand{\SchemaQScreen}{10}
\newcommand{\SchemaTokens}{3{,}451}
\newcommand{\SchemaValidateCompared}{2{,}000}

\newcommand{\ScreenCostShare}{0.33}
\newcommand{\Slopemax}{1.24}
\newcommand{\Slopemin}{0.51}

\newcommand{\StageOneCoded}{499{,}500}
\newcommand{\StageOneCost}{25.23}
\newcommand{\StageOneFrame}{500{,}000}

\newcommand{\StageTwoCost}{30.22}
\newcommand{\StageTwoErrors}{0}
\newcommand{\StageTwoPNinetyLatency}{0.33}
\newcommand{\StageTwoReqPerS}{44}

\newcommand{\StrongFoneThreshold}{0.85}
\newcommand{\TauGate}{0.5}
\newcommand{\TexasPerYear}{557{,}564}
\newcommand{\ThreshBestGainVar}{medical episode}
\newcommand{\ThreshBestGain}{0.03}
\newcommand{\ThreshGainThreshold}{0.05}
\newcommand{\ThreshMedianGain}{-0.01}

\newcommand{\ThreshNStable}{10}

\newcommand{\ThreshRepeats}{200}
\newcommand{\ThreshWeakNPosMax}{32}
\newcommand{\ThreshWeakNPosMin}{5}
\newcommand{\ThreshWeakNThick}{2}
\newcommand{\ThreshWeakNThin}{2}
\newcommand{\ThreshWeakThickBestGain}{-0.01}
\newcommand{\ThreshWeakThickVars}{vehicle defect, wrong way}
\newcommand{\ThreshWeakThinBestGain}{0.42}
\newcommand{\ThreshWeakThinVars}{intentional, unbelted}
\newcommand{\ThreshWrongWayNPos}{21}
\newcommand{\ThreshWrongWayOof}{0.48}
\newcommand{\ThreshWrongWayTau}{0.60}
\newcommand{\TokenAlpha}{171}
\newcommand{\TokenBchar}{0.236}
\newcommand{\TokensPerQuestion}{122}
\newcommand{\TotalSpend}{55.79}
\newcommand{\UncertainHi}{0.7}
\newcommand{\UncertainLo}{0.3}
\newcommand{\WeakFoneThreshold}{0.70}

\newcommand{\WrongWayProxyRatio}{17}

\begin{document}
\let\WriteBookmarks\relax

\shorttitle{Calibrated decisions at scale}
\shortauthors{A. Rafe and S. Das}

\title[mode=title]{Calibrated Decisions at Scale: Converting Police Crash Narratives into
Probabilistic Crash Variables with a System One Model (Jev)}

\author[1]{Amir Rafe}
\cormark[1]
\ead{amir.rafe@txstate.edu}
\credit{Conceptualization, Methodology, Software, Formal analysis, Data curation, Validation,
Visualization, Writing - original draft, Writing - review \& editing}

\author[1]{Subasish Das}
\credit{Conceptualization, Methodology, Supervision, Writing - review \& editing}

\affiliation[1]{organization={Ingram School of Engineering, Texas State University},
  city={San Marcos},
  state={TX},
  country={USA}}

\cortext[1]{Corresponding author.}

\begin{abstract}
Crash datasets that carry an investigator narrative hold information the coded fields omit.
Coding those narratives at scale has been blocked by three obstacles. Frontier large language
models are costly at that scale, their generated text cannot be verified, and no rule says how
much output a human must check. This paper formulates narrative coding as gated, typed
decisions answered by Jev, a System One model that returns probabilities over analyst-defined
options and generates no text. A screen covered \StageOneCoded{} Texas narratives and
\NstageTwo{} were coded with a \NQuestions{}-question schema. Cost is governed by schema size
rather than narrative length. The probabilities are audited against
coded fields and against \GoldNUsable{} blinded human judgments drawn under a stated sampling
design. Two frontier large language models are benchmarked on the same records. Against human
labels the typed model attains an $F_1$ of \GoldFone{}. One frontier model gains
\DiffJevFable{} and the other is indistinguishable from it. Calibration varies by model rather
than by paradigm, so each model must be audited. Recalibration on the same labels reduces
calibration error by a factor of \RecalIsoGain{}. Agreement with coded fields understates
fidelity to the narrative by a median of \KappaGapMedian{} in kappa. A resolution-floor bound
covers any model that reports probabilities on a discrete grid. A review budget over flagged
records gives the records a human must read per variable and per year. Adding the calibrated
variables to the coded fields raises the injury and fatal crashes attributed to nine factors by
\DownAddedTotal{} per year.
\end{abstract}

\begin{highlights}
\item A typed decision model codes crash narratives at scale with a measured cost model
\item Its probabilities rank well but overstate prevalence until recalibrated on labels
\item A two-decimal output grid puts a floor on calibration error for rare variables
\item The human-review budget must be defined over flagged records rather than all records
\item Adding calibrated narrative variables changes which factors a safety diagnosis ranks
\end{highlights}

\begin{keywords}
crash narratives \sep probability calibration \sep selective prediction \sep automated coding
\sep crash data quality \sep human-in-the-loop review
\end{keywords}

\maketitle

%% =====================================================================================
\section{Introduction}
\label{sec:intro}

The investigator narrative is the richest field in a crash database and the one that safety
analysis uses least. The Texas Crash Records Information System (CRIS) holds \Nrecords{}
crash records for \FirstYear{} to \LastYear{}, of which \Ncorpus{} carry a narrative longer
than \MinChars{} characters, which is about \TexasPerYear{} new narratives every year. The
coded fields of a crash report are fixed by the design of the form, so they record what the
form asks and nothing else. The narrative is where the officer writes that the driver had a
seizure, fell asleep, hydroplaned, was pregnant, was chased by another motorist, or lost a
wheel. Several of these events have no coded field at all, and others have a field that is
filled inconsistently. Text mining of narratives has already recovered hydroplaning crashes
that no field records \citep{das2020hydroplaning}, work-zone crashes that the coded fields
misclassified \citep{sayed2021identifica}, agricultural crashes hidden in general categories
\citep{kim2021crash}, and speeding designations that the coded flag missed
\citep{fitzpatrick2017an}. Each of those studies treated the narrative as a record of
circumstances that the coded fields do not carry, which is the premise of the present work.
What a narrative supplies is the officer's written account, so the quantity this paper measures
is what the narrative states rather than what occurred, and every claim below is framed that
way.

Three obstacles have kept narrative coding out of routine agency practice despite this
evidence, and the first of them is cost. Running a frontier language model over a state's
full narrative corpus is expensive enough that it is not done, so studies code a few thousand
records and extrapolate. The largest recent benchmark of frontier models against official crash coding
used some four thousand matched Arkansas fatal crashes \citep{bharati2026benchmarking}, and the Kentucky secondary-crash audit worked on a single variable in some sixteen thousand manually reviewed narratives \citep{zhang2025secondary}. A method that cannot be afforded at population scale cannot
replace or supplement coded fields at population scale, whatever its accuracy on a sample.

The second obstacle is the verification of generated output. A generative model asked to
return labels or a JSON object can invent option values, omit fields, and answer a question that was not asked, so
every output needs a parser and a validity check. When such a model is asked how confident it is, the stated number is a piece of generated text rather than a measured probability. Whether that number is calibrated depends on how it is elicited \citep{kadavath2022know,tian2023just}, and verbalized confidence tends toward over-confidence \citep{xiong2024express}. A
coding pipeline whose confidence values cannot be trusted forces an agency either to check
everything or to check nothing.

The third obstacle, and the one this paper is built around, is that no published study tells
a safety office how much human checking a narrative-derived variable needs. Agreement with
coded fields is the usual evaluation, and it answers a different question. A classifier that
is right most of the time does not tell an analyst which records to open. A probability that
is right as often as it claims to be does, because a target precision can then be converted
into a share of flagged records that a person must read. That conversion is only
valid if the probabilities are calibrated, and calibration is a property that has to be
measured against an external reference rather than assumed from a vendor's description.

The situation changed with the arrival of typed decision models. In September 2026 TypeSafe
released Jev, which its developer describes as a System One model
\citep{typesafe2026launch,typesafe2026systemone}. Such a model takes a state object and a set
of typed questions and returns, for each question, an answer from a fixed option set together
with a probability, in one pass and with no generated text. The developer states that the training
objective optimizes probabilities against outcomes and prices the model at
\$\JevPricePerM{} per million input tokens with no charge for output
\citep{typesafe2026models}. If those probabilities are calibrated, the review budget of a
narrative-derived variable becomes a computable quantity and the cost of computing it over a whole state corpus is measured in hundreds of dollars. Whether they are calibrated is an
empirical question on which the developer has published no metrics, and it is the question
this paper answers independently.

No published work combines the four elements that such an answer requires. None uses a
non-generative typed-decision model for crash-narrative coding. None audits the calibration
of narrative-derived probabilities variable by variable against both a large administrative
reference and a human reference with known sampling weights. None converts calibration into
a human-review budget defined over the records an agency would actually examine. And none
does these things at the scale of a state corpus with a cost model measured rather than
projected. The nearest studies evaluate generative models on attributes the database already
codes and report agreement \citep{bharati2026benchmarking,zhang2025secondary}, which is a
sound design for what it measures but leaves the calibration and the review budget
untouched.

Those gaps define the four research questions that this paper answers. The first asks whether
a typed decision model can code narratives at the scale of a state corpus with a schema written
by a safety analyst, and at what cost, so that large-scale coding can replace designs that code
a sample and extrapolate. The second asks whether the probabilities such a model returns are
calibrated, measured first against the coded fields and then against human judgment of the same
narratives, and whether the answer holds for frontier generative models run on the same
records. The third asks how much human review a narrative-derived variable needs once its
calibration is known, expressed per variable in records an agency must read each year. The
fourth asks what changes in a safety diagnosis when calibrated narrative variables are added to
the coded contributing-factor fields, which is the question that decides whether any of the
rest is worth an agency's attention. The paper answers the first affirmatively and reports the
executed sizes rather than a projection. For the second it finds that the typed model and one
frontier arm are not calibrated as delivered, while the other frontier arm is. It answers the
third with a budget defined over flagged records and certified out of fold, and the fourth by
measuring the injury and fatal crashes that the coded fields miss and the narratives supply.

This paper makes five connected contributions to the coding of crash narratives. It first
formulates narrative coding as a set of gated, typed decisions, in which presence questions
open detail questions and named entities are selected from a closed option set rather than
generated. The effect of each design rule is measured instead of asserted, including a paired
before-and-after test of an exclusion rule on a fixed sample. The paper then runs that
decomposition at scale, screening \StageOneCoded{} Texas narratives with a
\NLeanQuestions{}-question schema and coding \NstageTwo{} of them with the full
\NQuestions{}-question schema, at a measured cost of \$\CostPerThousand{} per thousand
narratives. Applied to the whole corpus of \Ncorpus{} narratives at that rate the projected
cost is \$\CostWholeCorpus{}, and the corpus itself was not coded. The fitted cost model shows
that spend is governed by the size of the question schema rather than by the length of the
text, which reverses the premise behind two-stage screening designs.

The third contribution is an independent calibration and selective-prediction audit of every
variable, against coded CRIS fields as a large agreement reference and against a
cell-stratified human reference set with calibration weights. That second reference measures
fidelity to the narrative rather than the facts of the crash, because a coder is asked what the
narrative says and not what probably happened. The audit finds that the model returns
probabilities on a discrete two-decimal grid whose floor exceeds the base rate of several
target variables, which places a floor on the attainable calibration error that no improvement
in ranking can remove. The fourth contribution is a reformulation of the review budget.
Accuracy-based selective risk carries little information when the positive class is rare,
because abstaining on nothing already satisfies conventional risk targets. The budget is
therefore defined over flagged records at a target precision, certified on narratives that did
not select the threshold, and reported per variable per year. The fifth contribution joins the
audit to a transportation outcome. It measures how the injury and fatal crashes attributed to
each contributing factor change when the calibrated narrative variables are added to the coded
fields, and the rate at which coders confirm narrative-only flags decides which of those
changes can be believed. Two frontier generative models are run on the same human-labeled
records as an accuracy comparison, and the schema, the runner, the metric code and every
aggregated output behind a figure or table are released.

The rest of the paper proceeds as follows. Section~\ref{sec:background} places the work in
the literatures on narrative coding, calibration and selective prediction, and typed decision
models. Section~\ref{sec:methods} describes the corpus, the schema, the reference sets and
the evaluation, and states the quantities the analysis computes as numbered equations.
Section~\ref{sec:results} reports the population run, the calibration audit, the review
budgets and the comparison with keyword rules and frontier models. Section~\ref{sec:discussion}
compares the findings with the closest published studies, and Section~\ref{sec:conclusion}
summarizes the contributions, states the boundaries of the study and sets out the work that
follows from them.

%% =====================================================================================
\section{Background}
\label{sec:background}

\subsection{Crash narratives and narrative coding in safety research}
\label{sec:bg-narratives}

Work on crash narratives has moved from keyword rules, through topic models and fine-tuned
encoders, to zero-shot prompting of large language models, and the coded field has served as
the reference at every stage. Rule-based and feature-based methods identified work-zone
crashes that the coded fields misclassified \citep{sayed2021identifica}, agricultural crashes
that a curated keyword list could recover from general categories \citep{kim2021crash}, and
contributory factors beyond the single code a form permits \citep{zou2024crashcausi}. They
also identified hydroplaning involvement, which no CRIS field records and which interpretable
models could associate with vehicle characteristics only after the narrative supplied the
label \citep{das2020hydroplaning}. Topic modeling exposed latent themes across a decade of fatal
crash narratives \citep{kwayu2021}, and fine-tuned encoders extended the approach to pedal
misapplication \citep{bareiss2021finding} and to severity classification on Texas narratives
\citep{oliaee2023using}. The same progression occurred in adjacent domains, with construction-accident narratives classified by support vector machines and other classical learners \citep{goh2017construction} and railway-accident narratives by deep networks
\citep{heidarysafa2018railway}. Two properties recur across this literature and carry into the present design. The narrative
is reliable enough to serve as the standard against which a coded field is judged, as in the
speeding-designation review of \citet{fitzpatrick2017an}, and it exposes under-reporting in
the coded fields themselves, which \citet{arteaga2025a} used a language-model framework to
quantify.

Two recent studies bracket the current state of practice with language models and together
motivate this paper. \citet{bharati2026benchmarking} evaluated six frontier language models
against official Arkansas fatal-crash coding on some four thousand matched crashes over six attributes
the database already codes. They found that a keyword-rule baseline performed comparably to
the best model and that differences across attributes exceeded differences across models,
from which they recommended attribute-specific evaluation and human review. This paper takes
that recommendation as its starting point, evaluates every variable separately, and
quantifies the human review the recommendation calls for. \citet{zhang2025secondary} audited
a single variable, secondary-crash involvement, in Kentucky narratives and showed that a
fine-tuned transformer beat prompted language models at a fraction of their cost, which sets
the cost bar a new method has to clear. Earlier comparative work on ChatGPT, BARD and GPT-4
was qualitative and small \citep{mumtarin2023llm}, and more recent frameworks generate
explanations and causal chains rather than audited probabilities
\citep{crashsage2025,su2026llmdriven}. Fine-tuning a language model to infer hazardous driver actions from narratives improved on inconsistent manual coding and produced probabilistic outputs, but it required labeled training data and reported no calibration audit \citep{hazardous2025}. Instruction-tuned models have been prompted for vulnerability indicators in police text without a confidence audit \citep{vulnerability2024}. A recent systematic review of natural language processing applied to crash reports covers severity, duration and causation and reports no calibrated, selective coding \citep{nlpsafety2026review}. Outside transportation, language models have been evaluated as text annotators \citep{gilardi2023chatgpt,ding2023annotator,ziems2024css}
and for structured extraction from scientific prose \citep{dagdelen2024structured}, and
those studies report agreement with human labels rather than calibration. The
clinical-text literature, which is several years ahead on validating model-extracted
variables, reaches a consistent verdict that carries over here. Agreement with expert adjudication is high on a binary progression question assessed from radiology reports \citep{kristjnsson2026prompting}. It is also high on the presence and grade of adverse events extracted from oncology notes but degrades on their finer attributes \citep{guillot2026extracting}, and it degrades on the finer grades of postoperative complications classified from discharge letters \citep{warmer2026zeroshot}.
That literature also supplies three cautions that shaped the schema design. Extraction performance depends heavily on how
the prompt is constructed \citep{wu2026automated}, prompt architecture induces order and
label biases that survive an instruction that the options were shuffled
\citep{brucks2025prompt}, and the largest model is not always the right one, since
fine-tuned encoders beat decoder-only models on overdose-death classification
\citep{funnell2026improving}. A typed interface with a fixed option set narrows the surface those two cautions apply to,
because the model selects from an option set the analyst fixes and returns no free text, so no
answer can be invalid and none has to be parsed. Whether the option order or the criteria
wording still moves the returned probabilities is a separate question, and this study does not
test it, so the claim made here is about output validity rather than about the absence of
order or wording effects.

The injury-surveillance literature on auto-coding narratives is the same problem in a
neighboring field and is more mature on two points that this paper needs. It confronted the division of labor between machine and human early, with Bayesian classifiers coding narratives into cause groups \citep{lehto2009bayesian} and human-machine ensembles routing the narratives coded with low confidence to manual review \citep{marucciwellm2017classifyin}. The same group developed
semi-automated coding of short injury narratives from large administrative databases
\citep{marucciwellman2015practical}, and Naive Bayes coding was applied to workers'
compensation claims and to near-miss narratives from the fire service
\citep{bertke2012naive,taylor2014nearmiss}. It also confronted rare categories
explicitly, asking whether more training data or filtering rescues them
\citep{nanda2018improving} and comparing auto-coding methods on the same causation task
\citep{bertke2016comparison}, with factorization models as one answer
\citep{chen2015factorization}. Early work showed that computerized coding of narrative text improves surveillance when a
human remains in the loop \citep{wellman2004computerized}, and a systematic review surveys the machine-learning approaches to that surveillance text \citep{vallmuur2015review}. What that line did not do is measure
whether the confidence that governs the routing is calibrated, which is what distinguishes
a review threshold from a review budget.

The coded fields that narrative studies use as their reference carry documented error, and
the direction of that error matters for the audit reported here. The validity of police-reported crash data has been examined since \citet{shinar1983validity}, and \citet{austin1995mistakes} showed that mistakes in the locational variables of accident records can be identified by comparing the report with accurate highway data from other sources. Under-reporting of crashes and injuries to police is large and uneven across countries, severities and road users \citep{elvik1999incomplete,alsop2001underreporting,amoros2006underreporting}. Linked police and hospital records show that the police file misses injured people it should contain \citep{watson2015underreporting,janstrup2016underreporting}, and police-reported injury codes are an inadequate proxy for the injury severity that in-depth investigation records \citep{farmer2003reliability}. Under-reporting also biases the parameters of severity
models \citep{yamamoto2008underreporting,ye2011underreporting}, and a review of crash data
quality for road-safety research lists incompleteness of contributing-factor fields among
its open problems \citep{imprialou2019quality}. The common pattern across these studies is omission, because a coded field fails to record a
factor far more often than it records one that did not occur, and that is the one-sided error
structure that Section~\ref{sec:evaluation} formalizes. De-identification is the last concern
the narrative literatures share. Automated removal of identifiers from free text has a long record in
clinical narratives \citep{uzuner2007deid,neamatullah2008deid,meystre2010deid,stubbs2015i2b2},
and \citet{ma2026pii} address it for crash narratives with a local agentic workflow that
reaches high recall. This paper applies a simpler regular-expression pass and a model-based
screen, and reports residual-identifier rates rather than claiming removal.

\subsection{Calibration, selective prediction, and label-efficient evaluation}
\label{sec:bg-calibration}

A probabilistic classifier is calibrated when, among the cases it assigns probability $p$, a
fraction $p$ are positive, and the tools for assessing that property come from forecast
verification. The Brier score \citep{brier1950verification} is a strictly proper scoring rule
\citep{gneiting2007strictly}, and its decomposition into reliability, resolution and
uncertainty \citep{murphy1973new} separates calibration from discrimination. Subjective
probability forecasts of weather were the first to be assessed this way
\citep{murphy1977reliability}, and \citet{degroot1983forecasters} formalized the comparison of forecasters by calibration and refinement, which is why the forecasting literature frames the target as maximizing sharpness subject to calibration \citep{gneiting2007probabilis}.
Reliability diagrams make calibration visible, and their binning has been put on a firmer
footing by consistency bands \citep{brocker2007diagrams}, by the decomposition of proper
scores \citep{brocker2009reliability}, and by binning-free estimators that make the diagram
reproducible \citep{dimitriadis2021stable,gneiting2023regression,dimitriadis2024evaluating}.
In machine learning the same quantity is summarized by expected calibration error
\citep{naeini2015obtaining}, which is sensitive to its binning \citep{nixon2019measuring},
whose estimation properties were examined by \citet{vaicenavicius2019evaluating}, and whose
verified estimation requires discretized outputs \citep{kumar2019verified}. Modern neural classifiers are usually poorly calibrated \citep{guo2017calibration}, which is why calibration has to be measured rather than assumed. Post-hoc recalibration by logistic scaling
\citep{platt1999probabilistic}, isotonic regression \citep{zadrozny2002transformi}, and
their comparison across learners \citep{niculescumiz2005predicting} remain the standard
remedies, extended by beta calibration \citep{kull2017beta} and Dirichlet calibration
\citep{kull2019dirichlet}.

The clinical prediction-model literature adds the operational reading that this paper adopts.
Calibration is assessed by a logistic regression of the outcome on the logit of the prediction,
whose slope and intercept were introduced by \citet{cox1958two}, and by the test statistic of
\citet{spiegelhalter1986probabilistic}. \citet{van2016a} defined a hierarchy of calibration
from the mean level through the slope to moderate and strong calibration, and
\citet{van2019calibratio} argue that poorly calibrated algorithms are misleading and potentially harmful for decision-making. The quantity that matters for a decision is the net benefit
at the threshold actually used \citep{vickers2006decision,steyerberg2010assessing}, and the
textbook of \citet{steyerberg2009book} treats recalibration as a routine step in model
updating. This paper carries that view to a review budget, where the decision threshold is a
target precision and the benefit is the share of flagged records that need not be read.

Selective classification supplies the decision-theoretic frame for abstention. The optimum
error-reject trade-off was characterized by \citet{chow1970on}, and the modern formulation
traces selective risk against coverage as the abstention threshold moves
\citep{elyaniv2010foundations,geifman2017selective}. Theoretical guarantees exist for
pointwise-competitive selection \citep{wiener2015agnostic}, hinge-loss and support-vector
formulations of the reject option \citep{bartlett2008reject,wegkamp2011support}, and a
general learning theory of rejection \citep{cortes2016rejection}. Deep networks have been
given an integrated reject option \citep{geifman2019selectivenet}, with the maximum softmax
probability as the baseline confidence score \citep{hendrycks2017baseline}. That literature
was developed for balanced problems, and its risk targets assume them. Narrative-derived safety variables are rare, and the precision-recall literature
has long observed that error rates dominated by the negative class conceal what happens on
the positive class \citep{davis2006prroc,saito2015prplot}. Section~\ref{sec:evaluation}
shows that the standard formulation becomes vacuous in that regime and defines the budget
over flagged records instead.

Label-efficient evaluation is the last ingredient of the design. The Horvitz--Thompson
estimator
\citep{horvitz1952generalization} makes population estimates from a sample with unequal
inclusion probabilities, with the design-based variance theory of survey sampling behind it
\citep{kish1965survey,sarndal1992model} and the bootstrap as a general interval method
\citep{efron1993bootstrap}. Active evaluation applies the same idea to classifier assessment,
choosing which items to label so that a performance estimate is precise at a fixed labeling
cost. It has been developed for F-measures \citep{sawade2010fmeasure},
for classifiers at web scale \citep{bennett2010stratified,katariya2012active}, for entity
resolution \citep{marchant2017oasis}, for rare categories \citep{poms2021lowshot}, and for
model evaluation in general \citep{kossen2021activetesting}. The reference set of this paper
follows that design, stratifying on the model's own probability so that the middle of the
range, where calibration is informative and cases are scarce, is estimable at all. Agreement
between coders is measured with Cohen's kappa \citep{cohen1960coefficient} on the scale of
\citet{landis1977measurement}, with the cautions of \citet{mchugh2012kappa}, and exact
binomial intervals \citep{clopper1934use} and the paired test of \citet{mcnemar1947note}
complete the toolkit.

\subsection{Decision models without text generation and the calibration of language-model confidence}
\label{sec:bg-decision}

The model audited here belongs to a class its developer calls System One models
\citep{typesafe2026systemone}. Such a model takes an unstructured state and a set of typed
questions and returns, for each question, an answer from the option set the analyst defined
together with a probability, in a single parallel pass and with no generated text. The developer describes the
training objective as reinforcement learning for calibrated decisions and states that the
probabilities are optimized against outcomes, that the model never makes a type error, and
that higher confidence means higher accuracy \citep{typesafe2026launch}. Three question types
exist, a binary presence question, a choice among named options, and an ordered score, and
the model is priced per input token with output free \citep{typesafe2026models}. These
claims are quoted here rather than adopted. The documentation states no numeric precision,
granularity or bounds for the returned probabilities, and the developer has published no
calibration metrics, so the output grid reported in Section~\ref{sec:res-population} and the
calibration reported in Section~\ref{sec:res-calibration} are measurements of this paper
rather than documented properties.

The contrast with generative models is in how a probability is obtained, and the distinction
has three levels that are easily conflated. A generative model does produce token likelihoods,
which are well defined and describe the words it selected, but a label's probability is not
one of them and the likelihood of the token ``yes'' is not the model's belief that the answer
is yes. A second level is self-evaluation, in which the model is asked whether its own answer
is correct. A third is verbalized confidence, in which the model is asked to state a number.
The second and third have been studied intensively, and the third is what the frontier arms of
this paper supply. Pre-trained transformers were found to be reasonably calibrated in domain out of the box \citep{desai2020calibration}, and generative language models answering factual questions were found to be poorly calibrated \citep{jiang2021calibration}. Large
models can predict whether they know an answer when asked directly, but the property
degrades with fine-tuning \citep{kadavath2022know}, and models can be trained to state
uncertainty in words \citep{lin2022words}. Conversational agents were over-confident until
linguistically calibrated \citep{mielke2022reducing}, and eliciting usable confidence from a
model tuned with human feedback requires deliberate strategies \citep{tian2023just}. An empirical comparison of elicitation methods found that verbalized confidence tends to be over-confident \citep{xiong2024express}. Semantic entropy over sampled answers has been proposed as an alternative \citep{kuhn2023semantic}, and a survey of the field catalogs the confidence-estimation and calibration methods proposed to date together with their open challenges \citep{geng2024survey}. \citet{zhou2024reliable}
report that larger and more instructable models become less reliable in the sense that
they answer more and are wrong more often on hard items. A typed interface that returns a
probability directly sidesteps elicitation, which is exactly why its calibration has to be
audited rather than inferred.

Structured-output and cascade designs are the engineering context. Constrained decoding
guarantees that a generated string matches a grammar \citep{willard2023guided}, which
removes type errors but says nothing about confidence, and cascades route inputs to a
cheaper model first and escalate on low confidence \citep{chen2023frugalgpt}, which
presupposes that the confidence is meaningful. Selective prediction for language models
follows the same logic, abstaining on domain shift \citep{kamath2020selective}, across
settings \citep{varshney2022selective}, and after self-evaluation
\citep{chen2023selfeval}. The vendor's own examples of confidence-gated fallback use
small demonstrations without an external reference \citep{typesafe2026systemone}.

Taken together, the three literatures leave a specific gap. Narrative coding has reached
population scale with rules and sample scale with generative models, and it validates
against coded fields whose error is one-sided. Calibration and selective prediction supply
the estimators and the decision frame, and both were developed for continuous outputs and for
classes that are not extremely rare. Neither assumption is a flaw in that literature, and
neither holds here, so each has to be replaced with something specific to this setting. A
discrete output grid admits an exact calibration error and imposes a bound on how small that
error can be, which is a property of the model's interface rather than of its skill. A rare
positive class makes an error-rate target uninformative and calls instead for a measure of the
workload a precision requirement implies. Typed decision models return probabilities cheaply
and without elicitation, but no independent measurement of those probabilities exists on any
safety corpus. This paper fills that gap on four fronts. It audits a typed decision model against both an
administrative and a human reference, and it derives the grid bound and applies the exact
estimator that bound permits. It defines the review budget over the records an agency would
read, and it measures what the resulting variables change in a safety diagnosis.

%% =====================================================================================
\section{Data and methods}
\label{sec:methods}

Figure~\ref{fig:framework} gives the shape of the pipeline before the parts are defined. A
crash narrative enters as program state, a typed schema of \NPresence{} presence, \NChoice{}
categorical and \NScore{} ordinal questions is put to the model in one call, and the model,
which is fixed and never trained here, returns a probability per question. Those
probabilities form a table in which every cell carries its own uncertainty, and the only
fitted object in the whole pipeline is the one-dimensional recalibration map that turns the
returned probabilities into calibrated ones. Two references, drawn below the pipeline, decide
what a probability means. The coded CRIS fields are compared against, at the scale of the random stratum, but never
fitted on, and the human reference set supplies the labels on which the recalibration map is
fitted and on which accuracy is measured. The right-hand block of the figure is the
deliverable of the audit, the share of flagged records a person must open to reach a target
precision, which the results report as \GoldBudgetNinety\% at the \PrecTargetNinety\% target.
Table~\ref{tab:notation} collects the notation used in this section, and Table~\ref{tab:eqmap}
in Appendix~\ref{app:proofs} maps every numbered equation to the script and result file that
computes it. The section first describes the corpus and its de-identification, then the
schema and the two-stage coding design with its cost model, then the three reference sets
against which the model is scored, and finally the evaluation quantities.

\begin{figure}[pos=t]\centering
\includegraphics[width=\textwidth]{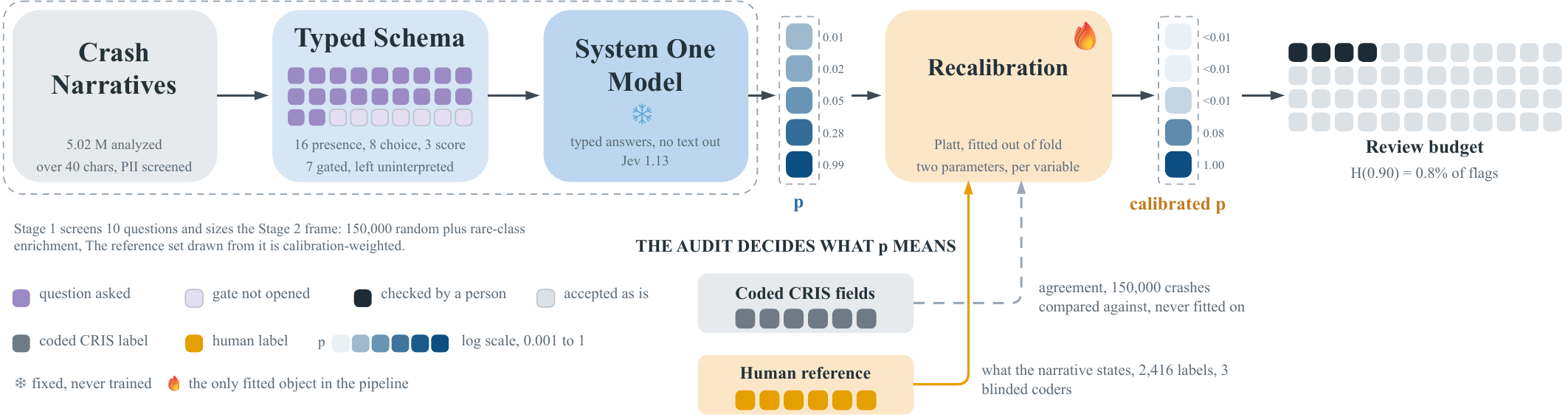}
\caption{The framework. The pipeline runs from left to right, starting with the corpus of
\Ncorpus{} screened narratives and the typed schema, a grid of questions in which filled
squares are asked of every narrative and pale squares wait on a presence question. The
snowflake marks the model as fixed and never trained, the flame marks the recalibration map
as the only fitted object, and the grid at the right is the review budget, its dark squares
the records a person must check. The coded CRIS fields (gray) join by a dashed line because
they are only compared against; the human reference set (orange) joins by a solid line
because it supplies the labels the map is fitted on.}
\label{fig:framework}
\end{figure}

\begin{table}[pos=!htbp]
\centering
\small
\caption{Notation. Inclusion probabilities carry one index and answer distributions carry two,
and the target precision carries a star, so the three uses of $\pi$ do not collide.}
\label{tab:notation}
\begin{tabular}{ll}
\toprule
\jevhead{Symbol} & \jevhead{Meaning} \\
\midrule
$n_i$, $i = 1, \dots, N$ & narrative $i$ of the analyzed corpus of size $N$ \\
$\mathcal{V} = \mathcal{B} \cup \mathcal{C} \cup \mathcal{S}$ & variables, partitioned into presence, categorical and ordinal sets \\
$O_v$, $o_v^{\varnothing}$ & option set of a categorical variable and its no-match option \\
$p_{iv} \in [0,1]$ & returned probability that presence variable $v$ holds in narrative $i$ \\
$\pi_{iv} \in \Delta(O_v)$, $\hat{o}_{iv}$ & returned distribution over options and its mode \\
$c_{iv} = \phi(\pi_{iv})$ & returned confidence attached to a categorical or ordinal answer \\
$g(v)$, $\tau_g$ & gate of a dependent variable and the gate threshold \\
$\lambda_v$ & leakage rate of a gated variable \\
$y_{iv} \in \{0,1\}$, $y'_{iv}$ & true label and coded-field label \\
$\hat{y}_{iv} = \mathbb{1}[p_{iv} > \tau]$ & decision at threshold $\tau$ \\
$\theta$ & abstention threshold on the confidence score, distinct from $\tau$ \\
$w_i$ & calibration weight of a pair in the human reference sample \\
$a_{iv}$ & probability that variable $v$ was assigned to narrative $i$ for labeling \\
$\rho$ & base rate (prevalence) of a variable \\
$G$, $\delta$, $w_0$ & attainable output grid, its smallest positive value, and the mass on zero \\
$\kappa_{iv}$ & confidence score used for abstention \\
$C(\cdot)$, $R(\cdot)$ & coverage and selective risk \\
$\pi^{*}$, $H(\pi^{*})$ & target precision and the review budget at that precision \\
$k$, $\ell_i$ & questions per call and narrative length in characters \\
\bottomrule
\end{tabular}
\end{table}

\subsection{Corpus and de-identification}
\label{sec:corpus}

The corpus is the crash-level narrative file of the Texas Crash Records Information System
for the years \FirstYear{} to \LastYear{}, held under the authors' data agreement with the
Texas Department of Transportation. It is joined to the crash-unit-person extract of the same
system by crash identifier. The narrative file holds \Nrecords{} crash records, and after
removing records whose narrative is empty or has \MinChars{} characters or fewer, \Ncorpus{}
narratives remain and form the analyzed corpus. The median narrative has \MedianChars{}
characters, the ninety-fifth percentile has \LengthPNinetyFive{}, and the ninety-ninth has
\LengthPNinetyNine{}, so a typical narrative contributes about \NarrativeTokens{} input tokens
to a model call. Narratives are written either in mixed case or in capitals, in a ratio that
is stable across years and serves as a proxy for reporting agency (Figure~\ref{fig:corpus}c).
Narrative availability changed sharply over the period, because the share of crash records
with an empty narrative falls from \EmptyShareFirst\% in \FirstYear{} to \EmptyShareLast\% in
\LastYear{} (Figure~\ref{fig:corpus}a). Any analysis that pools years without conditioning on
narrative availability under-represents the early period, and prevalence trends estimated
from narratives alone are confounded with this reporting change. Every prevalence reported
here is therefore stated on the analyzed corpus with its denominator explicit.

Names and telephone numbers were replaced with placeholders by the data owner before the
file was received, and the identifiers that survive that pass were measured over all
\Ncorpus{} narratives with the regular expressions released with the code. Date strings occur
in \PIIDate\% of narratives, long digit runs in \PIIDigitRun\%, case or report numbers in
\PIICase\%, date-of-birth tokens in \PIIDob\%, and driver-license or identification numbers
in \PIIIdNum\%. Titled personal names, such as an officer's rank followed by a surname, match
in \PIITitledName\% of narratives, though most of these name the reporting officer rather than
an involved party. A second-pass regular-expression redaction replaces each of these patterns
with a placeholder in any copy of a narrative used for display. A model-based
residual-identifier screen, which asks a single presence question about any unreplaced name,
date of birth, license number, address or telephone number, is applied before any narrative
is shown to a person outside the research team.

The sequence in which those three protections were applied differs by recipient, and it is
stated here exactly rather than summarized. The data owner's redaction of names and telephone
numbers applies to every narrative and preceded everything else. The typed-model runs, both the
first-stage screen of \StageOneCoded{} narratives and the full coding of \NstageTwo{}, sent
that owner-redacted text and nothing further, so the second-pass regular-expression redaction
and the model-based screen were not applied before them. Those two further protections were
applied to the \DesignNCandidates{} candidates of the reference frame, and only to them, before
any narrative reached a human coder or either frontier vendor. \GoldPIIWithheld{} candidates
were withheld on the screen's account and replaced. The consequence is that the human coders
and the two frontier vendors received twice-redacted and screened text, while the typed-model
service received the owner-redacted text at the rates of residual identifiers reported above.
That asymmetry follows from the order in which the work was done rather than from a judgment
that one recipient needs less protection, and an agency repeating this work should apply the
screen before the first model call. No narrative is reproduced in this paper, and every example
of narrative text is synthetic.

\subsection{Schema design and two-stage coding}
\label{sec:schema}

Narrative coding is formulated as a set of typed decisions over the narratives $n_i$ and the
variable set $\mathcal{V}$, which is partitioned into presence variables $\mathcal{B}$,
categorical variables $\mathcal{C}$ with an explicit no-match option
$o_v^{\varnothing} \in O_v$, and ordinal variables $\mathcal{S}$ with levels $0, \dots, L_v$.
For a presence variable the model returns a probability $p_{iv} \in [0,1]$ that the variable
holds. For a categorical variable it returns a distribution $\pi_{iv} \in \Delta(O_v)$ with
mode $\hat{o}_{iv}$ and a confidence $c_{iv} = \phi(\pi_{iv})$, and for an ordinal variable a
distribution over levels with an expected level and a confidence. A dependent variable $v$
has a gate $g(v) \in \mathcal{B}$, and its answer is interpreted through the gating operator
\begin{equation}
\tilde{o}_{iv} \;=\;
\begin{cases}
\hat{o}_{iv} & \text{if } p_{i,g(v)} > \tau_g,\\
o_v^{\varnothing} & \text{otherwise},
\end{cases}
\qquad \tau_g = \TauGate,
\label{eq:gate}
\end{equation}
so that a detail question is read only when its presence question has fired. The empirical
justification for the operator is the leakage rate
\begin{equation}
\lambda_v \;=\; \Pr\!\left(\hat{o}_{iv} \neq o_v^{\varnothing} \;\middle|\; p_{i,g(v)} \le \tau_g\right),
\label{eq:leak}
\end{equation}
the share of narratives in which the model would have asserted a detail whose presence
question it had itself answered in the negative. Both quantities are computed when the raw
answers are flattened, and the raw answer is kept beside the gated one so that
Eq.~(\ref{eq:leak}) can be reported for every gated variable.

The schema holds \NQuestions{} questions, of which \NPresence{} are presence questions,
\NChoice{} are categorical, \NScore{} are ordinal and \NGated{} are gated, and
Table~\ref{tab:schema-full} in Appendix~\ref{app:schema} reproduces every instruction and
criterion as sent to the model. Seven design rules govern the schema, and they were derived
from the documented failure modes of the model class. Each question makes one narrow judgment,
presence comes before detail with the detail gated by Eq.~(\ref{eq:gate}), and every
categorical question carries an explicit no-match option. Named entities such as drugs are
selected from a closed option set rather than generated, and numbers are pre-bucketed in the
criteria text, so that a stage of pregnancy is described in weeks and months rather than left
for the model to compute. No question uses a double negative or refers to another question,
because questions cannot see each other, and exclusions are written into the criteria after
error analysis. The effect of the last rule was measured rather than asserted. The aggression
question had fired on police pursuits, which are enforcement actions rather than aggression
directed at another road user, so the exclusion was added and the full schema was re-run on a
fixed sample of \SchemaValidateCompared{} narratives before and after the change. Among the
\PursuitN{} narratives containing pursuit language, positives fell from \PursuitPosBefore{}
to \PursuitPosAfter{} and the mean probability from \PursuitMeanBefore{} to
\PursuitMeanAfter{}. The \OtherNoulsN{} other presence questions were unchanged, with label
agreement of at least \OtherAgreeMin{} and a mean absolute probability shift of at most
\OtherShiftMax{}, so the change was surgical rather than a general suppression. The criteria
text is the entire specification of what each variable means, and the human coders and the
frontier models of Section~\ref{sec:reference} received that same text verbatim.

Coding proceeds in two stages over a simple random sample drawn from the analyzed corpus with
a fixed seed. In the first stage a lean schema of \NLeanQuestions{} presence questions is run
over a random frame of \StageOneFrame{} narratives, of which \StageOneCoded{} returned an
answer, at a cost of \$\StageOneCost{}. The stage estimates prevalence at large sample size
with exact binomial intervals and supplies a pool of likely positives for the rare classes
that a purely random second-stage sample would barely contain. In the second stage the full
schema is run over a frame with two parts. The first part is a random subset of
\NstageTwoRandom{} narratives from the first-stage sample, and it is the inferential
backbone, because every population estimate in this paper is computed on it alone. The
second part enriches each rare class from keyword hits and from first-stage positives up to
a per-class ceiling, so that per-variable calibration and precision are estimable for classes
that are otherwise too rare to measure. It is never pooled with the random part for a
prevalence estimate. Every narrative carries its stratum and its inclusion probability,
recorded at coding time because they cannot be recovered afterwards, and estimates that use
the enriched part are inverse-probability weighted. The enriched allocation is trimmed
proportionally within class so that the projected spend stays below its budget line, and the
trim is folded into the recorded inclusion probabilities. The frame is shuffled before
running, so that an interrupted run would leave a representative prefix rather than a
truncation that drops the classes ordered last. The second stage coded \NstageTwo{}
narratives with no errors at a cost of \$\StageTwoCost{}.

Input tokens, and therefore cost, are well described by a linear model in the number of
questions $k$ in a call and the narrative length $\ell_i$ in characters,
\begin{equation}
T(k, \ell_i) \;=\; \alpha + \beta k + b\,\ell_i,
\label{eq:cost}
\end{equation}
fitted on a sweep of \CostSweepQuestionSets{} question counts over \CostSweepCalls{} calls at a
cost of \$\CostSweepCost{}. Let $\varpi$ be the price per token and $\bar{\ell}$ the mean
narrative length. A design that screens every narrative with a lean schema of $k_s$ questions
and then codes a fraction $q$ of them with the full schema of $k_f$ questions costs
$N\varpi[T(k_s, \bar{\ell}) + q\,T(k_f, \bar{\ell})]$, against $N\varpi\,T(k_f, \bar{\ell})$
for coding everything once. Screening therefore pays only when
\begin{equation}
q \;<\; q^{*} \;=\; 1 - \frac{T(k_s, \bar{\ell})}{T(k_f, \bar{\ell})}
\;=\; 1 - \frac{\alpha + \beta k_s + b\bar{\ell}}{\alpha + \beta k_f + b\bar{\ell}}.
\label{eq:twostage}
\end{equation}
Because $\beta k$ dominates $b\bar{\ell}$ for schemas of the size used here, the break-even
fraction depends on how many questions the screen drops and not on how much text is read.
All runs use the pinned model identifier, and the identifier returned by the service is
recorded on every call so that a silent version change is detectable after the fact.
Requests run at a concurrency of \ConcurrencyN{} with exponential backoff over at most
\MaxTries{} attempts. The runner accumulates realized input tokens, projects the full-run cost
after the first \ProjectAfterCalls{} calls, and aborts if that projection exceeds the ceiling
it was given, so a mispriced schema cannot run away unattended. The total spend across
every model call reported here was \$\TotalSpend{}. The schema is
released verbatim, the runner and the metric code are released with their unit tests, and
every aggregated output behind a figure or table is released.

\subsection{Reference sets}
\label{sec:reference}

The first reference is the coded CRIS record. Narratives are joined to the person- and
unit-level extract by crash identifier, and the coded fields are aggregated to crash level for
all \NCrashesJoined{} crashes with the mapping of Table~\ref{tab:coded-map}. That mapping gives
nine presence variables a coded counterpart and records two deviations from the original plan,
the contributing-factor codes for wrong-way travel in place of the broader opposite-direction
collision type, and the ill-driver factor as the reference for medical episodes. Coded fields
are themselves incomplete, so every statistic computed against them is an agreement measure
bounded by the reference's own error, and it is labeled as such throughout. The keyword baseline
scored against that reference is a family of regular expressions per variable, listed in
Table~\ref{tab:keywords} and tuned only on a held-out development sample.

The second reference is a human reference set drawn under a cell-filling design, described
here as it was executed rather than as it was specified. Each variable's returned probability is
cut into the cells \GoldBinsText{}, and narratives were drawn sequentially from the second-stage
random stratum by a greedy rule that repeatedly took a narrative filling the largest number of
still-short cells, against a target of \GoldBinTargets{} per cell. A narrative drawn for one
variable is labeled for all of its active variables, those with
$p_{iv} \ge \GoldActiveThreshold$, plus \GoldNegControls{} randomly chosen low-probability
variables as negative controls. \DesignNCandidates{} candidates were drawn, \DesignNWithheld{}
were withheld by the identifier screen of Section~\ref{sec:corpus}, and the first
\GoldNarratives{} that cleared it were labeled, with the estimand fixed before any label
existed.

A greedy draw followed by a screen has no inclusion probabilities that can be written down, so
the weights are calibration weights in the sense of \citet{deville1992calibration} rather than
inverse inclusion probabilities. The analysis population is the second-stage random stratum, and
the cell separates the two routes into the sample exactly. A variable at or above
\GoldActiveThreshold{} is always labeled, so its assignment probability is one, and a variable
below it is labeled only as one of the \GoldNegControls{} controls, so its assignment
probability is \GoldNegControls{} divided by that narrative's count of low-probability
variables. A pair's design weight is its narrative's base weight divided by that assignment
probability, scaled within its cell so that the weighted sample reproduces the population count
of the cell. Appendix~\ref{app:coders} gives the construction in full, including the rule that
merges cells holding fewer than \DesignMinCell{} labeled pairs, and both factors are recorded
for every pair in the released audit table.

The base rate, the precision and the observed positive rate at a grid value are all population
means of some quantity $z_i$, and each is estimated by the H\'ajek ratio
\begin{equation}
\hat{\theta} \;=\; \frac{\sum_{i \in s} w_i\, z_i}{\sum_{i \in s} w_i},
\label{eq:ht}
\end{equation}
where $s$ is the labeled sample and $w_i$ the calibration weight of pair $i$. The ratio of two
weighted totals is consistent rather than unbiased, at the order Appendix~\ref{app:proofs}
gives. The weights span a factor of \DesignWeightRatio{} and give a design effect of
\DesignDeff{}, so the \GoldNUsable{} usable judgments carry the information of about
\DesignNEff{}, which is what every interval below is built on.

Intervals resample narratives rather than judgments, because one narrative supplies up to
\FrontierNVars{} judgments and they are not independent. The cluster bootstrap draws narratives
with replacement within the \DesignNStrata{} draw routes the frame records, carrying all of a
narrative's pairs and holding their weights fixed. With $B = \BootstrapB{}$ replicates the
interval is
\begin{equation}
\left[\hat{\theta}^{(\alpha/2)} - \bar{b},\; \hat{\theta}^{(1-\alpha/2)} - \bar{b}\right],
\qquad \bar{b} = \frac{1}{B}\sum_{r=1}^{B} \hat{\theta}^{(r)} - \hat{\theta},
\label{eq:htvar}
\end{equation}
where the percentile interval is re-centered by the bootstrap bias, because an absolute
deviation such as calibration error can only be pushed upward by resampling noise, and the raw
bounds are kept in the released result files. Where a binomial interval is wanted for a weighted
count, the Kish effective sample size $n_{\mathrm{eff}} = (\sum w_i)^2 / \sum w_i^2$ replaces
the raw count, and every interval obtained that way is approximate.

\CoderN{} coders with graduate training in transportation safety labeled \CoderNPairs{}
narrative-variable pairs, \CoderNAnswers{} judgments in total, under the protocol and the fixed
roster that Appendix~\ref{app:coders} sets out. They were recruited to three different profiles
so that agreement between them would not measure a shared domain inference. Each saw the
redacted narrative and the same criteria text the model was given, under the rule to label what
the narrative states rather than what probably happened, and saw nothing else, so neither the
model's probability, nor the coded field, nor another coder's answer. \CoderNDouble{} pairs
were double-coded and
\CoderNCalib{} triple-coded as a shared calibration block, on which pairwise Cohen's kappa,
\begin{equation}
\kappa \;=\; \frac{p_o - p_e}{1 - p_e},
\label{eq:kappa}
\end{equation}
with $p_o$ the observed and $p_e$ the chance agreement, runs from \CoderKappaMin{} to
\CoderKappaMax{} with a median of \CoderKappaMedian{}. That coder-to-coder statistic is
unweighted, because it describes the panel rather than the population, whereas every kappa
comparing the model with the human reference carries the calibration weights and is labeled
weighted. Raw agreement across all pairs is \CoderAgreement\%, high partly because the frame is
dominated by clear negatives, and the median judgment took \CoderMedianSec{} seconds. A pair is
resolved by majority, and ties and pairs any coder marked unclear are dropped from every rate,
which leaves \GoldNUsable{} usable judgments with \GoldNDisputed{} disputed pairs and
\CoderNUnclear{} unclear answers. Scoring the model against a label the coders could not agree
on would measure nothing, so the reference is defined on the cases people find clear.

The third reference is a pair of frontier generative models from different vendors, run on the
same \FrontierNarratives{} narratives and the same \CoderNPairs{} pairs, and scored against the
same human labels with the same weights and metric code. They received the criteria text
verbatim and the same per-narrative variable assignment, and nothing else, so neither the coded
fields, nor the base rates, nor the typed model's probabilities. A generative model does carry
token likelihoods, but those describe the words it chose rather than its belief about the event,
and neither vendor's interface exposed them here. Each arm was therefore asked to state a
probability and to use the full range, which makes the elicited number a verbalized confidence
rather than a read-out of the model's internals. The arms ran through an agent harness in
\FrontierBatches{} batches of \FrontierBatchSize{} narratives, with all criteria in context and
no constraint on intermediate reasoning, so they had more context and more inference-time
computation per judgment than the typed model. Their cost was not metered by that harness, which
is why the price comparison in the results prices a comparable tier rather than these arms. Two
exact statistics complete the toolkit, of which the first is the Clopper--Pearson interval
that prevalences and confirmation rates carry for $x$ successes in $n$ trials,
\begin{equation}
\left[\,L,\; U\right], \qquad
L = \begin{cases} 0 & x = 0\\ B^{-1}\!\left(\tfrac{\alpha}{2};\, x,\, n - x + 1\right) & x > 0\end{cases},
\qquad
U = \begin{cases} 1 & x = n\\ B^{-1}\!\left(1 - \tfrac{\alpha}{2};\, x + 1,\, n - x\right) & x < n\end{cases},
\label{eq:cp}
\end{equation}
where $B^{-1}$ is the beta quantile function. The two boundary cases are stated because the
beta quantile is undefined at them. Paired comparisons of two methods on the
same narratives use the exact McNemar test on the discordant counts $n_{10}$ and $n_{01}$,
\begin{equation}
P \;=\; \min\!\left\{1,\; 2\sum_{j \le \min(n_{10}, n_{01})} \binom{n_{10}+n_{01}}{j} 2^{-(n_{10}+n_{01})}\right\}.
\label{eq:mcnemar}
\end{equation}

\subsection{Evaluation}
\label{sec:evaluation}

Calibration is assessed with proper scores, binned and exact calibration errors, and the
logistic recalibration statistics. With weights $w_i$ normalized to sum to the sample size,
the Brier score and its Murphy decomposition over bins $k$ with weight $W_k$, mean prediction
$\bar{p}_k$ and observed rate $\bar{y}_k$ are
\begin{equation}
\mathrm{BS} = \frac{1}{n}\sum_i w_i (p_i - y_i)^2,\qquad
\mathrm{BS} \approx \underbrace{\sum_k \tfrac{W_k}{n}(\bar{p}_k - \bar{y}_k)^2}_{\text{reliability}}
\;-\; \underbrace{\sum_k \tfrac{W_k}{n}(\bar{y}_k - \bar{y})^2}_{\text{resolution}}
\;+\; \underbrace{\bar{y}(1-\bar{y})}_{\text{uncertainty}},
\label{eq:brier}
\end{equation}
where the approximation is exact on the binned representation. The expected and maximum
calibration errors over equal-mass bins are
\begin{equation}
\mathrm{ECE} = \sum_k \frac{W_k}{n}\,\bigl|\bar{p}_k - \bar{y}_k\bigr|,\qquad
\mathrm{MCE} = \max_k \bigl|\bar{p}_k - \bar{y}_k\bigr|.
\label{eq:ece}
\end{equation}
The model returns probabilities on a discrete grid $G$, so a binning-free estimator exists.
Grouping by attainable value $v \in G$ with weight $W_v$ and observed rate $r_v$ gives
\begin{equation}
\mathrm{ECE}_G = \sum_{v \in G} \frac{W_v}{n}\,\bigl|v - r_v\bigr|,
\label{eq:ecegrid}
\end{equation}
which needs no binning parameter. It is the estimator used for the human reference set, for every bootstrap interval and for the reliability curves. The coded-field errors tabulated in the results use the equal-mass estimator of Eq.~(\ref{eq:ece}) computed upstream, and the two agree at the printed precision except for one variable noted there. The calibration slope and intercept are the maximum-likelihood coefficients of
\begin{equation}
\Pr(y_i = 1 \mid p_i) = \sigma\!\left(a + b\,\mathrm{logit}(p_i)\right),
\label{eq:cox}
\end{equation}
with $a = 0$ and $b = 1$ under calibration. A slope above one means probabilities that are
not extreme enough and a slope below one means probabilities that are too extreme, and
quasi-complete separation is flagged rather than reported as a slope. The Spiegelhalter
statistic
\begin{equation}
z = \frac{\sum_i w_i (y_i - p_i)(1 - 2p_i)}{\sqrt{\sum_i w_i^2 (1 - 2p_i)^2\, p_i (1 - p_i)}}
\label{eq:spieg}
\end{equation}
is standard normal for independent observations with known probabilities. The reference set
is neither weighted nor independent in that sense, so the statistic is reported here as a
descriptive index and its reference distribution is obtained by the cluster bootstrap of
Eq.~(\ref{eq:htvar}) instead. Its sign is not interpreted, because the factor $1 - 2p$ in the
summand reverses above one half, and the direction of the miscalibration is read from the
weighted mean probability minus the prevalence, which Appendix~\ref{app:proofs} justifies.

The discrete grid has a consequence that the continuous calibration literature does not
treat, and it is stated here as a proposition. Let $\delta$ be the smallest positive value in
$G$, $w_0$ the probability mass the model places on exactly zero, and $\rho$ the base rate of
the variable.

\begin{proposition}[Resolution floor]
\label{prop:floor}
If the model is calibrated then $\rho = \mathbb{E}[p] \ge \delta(1 - w_0)$, so calibration
requires $w_0 \ge 1 - \rho/\delta$. If $\rho < \delta(1 - w_0)$, no allocation of grid values
is calibrated, the mean probability exceeds the prevalence, and
\begin{equation}
\mathrm{ECE}_G \;\ge\; \mathbb{E}[p] - \rho \;\ge\; \delta(1 - w_0) - \rho \;>\; 0.
\label{eq:floor}
\end{equation}
\end{proposition}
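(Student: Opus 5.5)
The argument has three steps: a lower bound on $\mathbb{E}[p]$ from the grid structure, the calibration identity $\rho = \mathbb{E}[p]$, and a triangle-inequality bound on $\mathrm{ECE}_G$ of Eq.~(\ref{eq:ecegrid}).

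The setting is the population version. The model's output $p$ takes values in the finite grid $G \subset [0,1]$. $W_v/n$ is read as $\Pr(p = v)$ and $r_v$ as $\Pr(y = 1 \mid p = v)$, so $\rho = \Pr(y=1) = \sum_{v\in G} \Pr(p=v)\, r_v$ by the law of total probability.

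\emph{Grid lower bound.} Every positive grid value is at least $\delta$, so
\[
\mathbb{E}[p] = \sum_{v \in G,\, v>0} v \Pr(p=v) \;\ge\; \delta \sum_{v>0}\Pr(p=v) = \delta(1-w_0).
\]
This step uses only the definition of $\delta$ as the smallest positive value and of $w_0$ as the mass on zero.

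\emph{First claim.} Calibration means $r_v = v$ for every $v$ carrying positive mass. Averaging over $v$ gives $\rho = \sum_v \Pr(p=v)\, v = \mathbb{E}[p]$. Combining this with the grid lower bound gives $\rho \ge \delta(1-w_0)$. Rearranging, using $\delta>0$, gives $w_0 \ge 1-\rho/\delta$.

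\emph{Second claim.} Suppose $\rho < \delta(1-w_0)$. The grid lower bound gives $\mathbb{E}[p] \ge \delta(1-w_0) > \rho$, so the mean probability exceeds the prevalence. Calibration would force $\mathbb{E}[p] = \rho$, which is now impossible. This holds for every assignment of the non-zero mass $1-w_0$ among the positive grid values, which is the sense of ``no allocation is calibrated.''

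\emph{ECE bound.} Apply the triangle inequality to the identity $\mathbb{E}[p] - \rho = \sum_v \Pr(p=v)(v - r_v)$:
\[
\mathrm{ECE}_G = \sum_v \Pr(p=v)\,|v - r_v| \;\ge\; \Bigl|\sum_v \Pr(p=v)(v-r_v)\Bigr| = |\mathbb{E}[p]-\rho| = \mathbb{E}[p]-\rho .
\]
The grid lower bound then gives $\mathbb{E}[p]-\rho \ge \delta(1-w_0)-\rho > 0$, which completes Eq.~(\ref{eq:floor}).

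The only real obstacle is identifying $\mathrm{ECE}_G$ with its population form. In the sample, with weights normalized to $n$, $\sum_v (W_v/n)\, r_v$ is exactly the Hájek estimate of $\rho$ from Eq.~(\ref{eq:ht}), and $\sum_v (W_v/n)\, v$ is the weighted mean of $p$. The same triangle-inequality step therefore holds verbatim for the estimates, with $w_0$ read as the weighted share at zero. I would note this so that the bound applies to the reported estimator as well as to its population target.
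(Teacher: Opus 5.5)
Your proof is correct and matches the paper's own argument step for step. You use the bound $\mathbb{E}[p] \ge \delta(1-w_0)$ from the grid alone, the identity $\rho = \mathbb{E}[p]$ under calibration, and the triangle inequality applied to $\sum_v (W_v/n)(v - r_v) = \mathbb{E}[p] - \rho$. The paper runs the same argument on the weighted sample and notes that it carries over verbatim to the population, whereas you work in the population and note the carry-over to the sample; the paper also records when the bound is attained, which the statement does not require.
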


The proof is given in Appendix~\ref{app:proofs} and takes five lines. The model audited here
never returned zero in \NGridAnswersMillions{} million answers, so $w_0 = 0$ and the bound is
$\delta - \rho$ with $\delta = \GridStep$, and the proposition applies to every variable whose
base rate lies below \GridStep. The base rates that decide this are the coded-field
prevalences of Table~\ref{tab:calibration}, so every application of the proposition in this
paper is an application against the coded reference and is named as such. For such a variable
the mean probability is not a prevalence estimate, whatever the model's ranking quality.

Selective prediction is evaluated in two forms, the conventional one and the one defined
over flagged records. Let $\kappa_{iv} = \max(p_{iv}, 1 - p_{iv})$ be the confidence of a
presence decision and $\theta$ an abstention threshold, which is a different quantity from the
decision threshold $\tau$ of Table~\ref{tab:notation} and is given its own symbol for that
reason. The risk-coverage curve and its area are
\begin{equation}
C(\theta) = \Pr(\kappa \ge \theta),\qquad
R(\theta) = \Pr(\hat{y} \ne y \mid \kappa \ge \theta),\qquad
\mathrm{AURC} = \int_0^1 R\,\mathrm{d}C,
\label{eq:riskcov}
\end{equation}
integrated as a step function over tied confidence values, with the excess area measured
against an oracle that ranks every correct decision above every incorrect one. The coverage
attainable at a target risk $r$ is $C^{*}(r) = \max\{C(\theta) \mid R(\theta) \le r\}$. This
formulation carries little information when the positive class is rare. A rule that answers in
the negative everywhere has full-coverage risk $R(0) = \rho$, so any target $r \ge \rho$ is
already met by a rule that flags nothing and reads nothing. An error target at or above the
base rate therefore certifies a screener that does no work, which is a statement about the
target rather than about the model, and it does not follow from this that the model's own
review budget is zero. All \NPrevalenceVars{} variables with a coded reference have
thresholded rates below five percent, running up to \PrevalenceMaxPct\%, and \NBelowOnePct{}
of them fall below the \RiskTargetOne\% target as well. The quantity a safety office faces is instead defined over the records the
model flags. Let $F = \{i \mid p_{iv} > \tau\}$ be the flagged set, ranked by probability, and
for an acceptance threshold $t \ge \tau$ define the coverage and precision
\begin{equation}
c(t) = \frac{\sum_{i \in F} w_i\,\mathbb{1}[p_{iv} \ge t]}{\sum_{i \in F} w_i},\qquad
\pi(t) = \frac{\sum_{i \in F} w_i\,\mathbb{1}[p_{iv} \ge t]\,y_{iv}}{\sum_{i \in F} w_i\,\mathbb{1}[p_{iv} \ge t]},
\label{eq:prec}
\end{equation}
so that $1 - \pi(t)$ is the risk of accepting the top share $c(t)$ without reading it. The
review budget at target precision $\pi^{*}$ is the share of flagged records that cannot be
auto-accepted,
\begin{equation}
H(\pi^{*}) \;=\; 1 - \max\bigl\{\, c(t) \mid \pi(t) \ge \pi^{*} \,\bigr\},
\label{eq:budget}
\end{equation}
which is the construction of Eq.~(\ref{eq:riskcov}) restricted to the flagged set with risk
defined as one minus precision. Two conventions are needed to complete the definition of that maximum. If no acceptance
threshold on the grid attains $\pi^{*}$ the feasible set is empty, the maximum is taken to be
zero and $H(\pi^{*}) = 1$, so a variable no threshold can certify routes every flagged record
to review. Ties on the grid are never split, because a threshold cannot separate two records
carrying the same probability, and a tied block is therefore accepted or reviewed whole.

Selecting $t$ and certifying the precision it delivers on the same labels reports the best of
many thresholds as though it were the value of one. Against the human reference the budget is
therefore cross-fitted, with $t$ chosen on one half of the narratives and both the coverage it
buys and the precision it delivers measured on the other, in both directions and over repeated
splits. The in-sample value is reported beside it so that the difference is visible. The
delivered precision carries a lower confidence bound from Eq.~(\ref{eq:cp}) at the effective
accepted count, which is approximate for the reason given in Section~\ref{sec:reference}. The
budget is reported in absolute terms per year by multiplying the flagged share of the corpus by
\TexasPerYear{}, the mean annual count of analyzed narratives, and that product counts
variable-level review decisions. Where the quantity wanted is the number of documents a reader
opens, the union of the flagged sets across variables is reported instead and named as such.

The two references disagree in a direction that the structure of coded-field error predicts,
and the prediction is stated as a lemma. Let $r_v = \Pr(y = 1 \mid p = v)$ be the true
observed rate at grid value $v$, and let the coded field miss a factor the narrative states
with probability $f_v$ while never recording a factor the narrative does not state, so that
$r'_v = (1 - f_v)\,r_v$.

\begin{lemma}[One-sided reference noise]
\label{lem:noise}
For every grid value, $r'_v \le r_v$, so the reliability curve measured against the coded
field lies on or below the true curve. For every grid value at which
$(1 - f_v)\,r_v \ge v$, $|v - r'_v| \le |v - r_v|$, and if that condition holds at every
grid value carrying weight then
\begin{equation}
\mathrm{ECE}_G(y') \;\le\; \mathrm{ECE}_G(y).
\label{eq:noise}
\end{equation}
\end{lemma}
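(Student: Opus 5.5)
The plan is to argue pointwise at each grid value and then sum, using that the grouping in Eq.~(\ref{eq:ecegrid}) depends only on the returned probability $p$ and not on the label. The first claim, $r'_v \le r_v$, is immediate from the one-sided noise model. Since $f_v \in [0,1]$ we have $0 \le 1 - f_v \le 1$. With $r_v \ge 0$ this gives $r'_v = (1-f_v)\,r_v \le r_v$. Read as a statement about the reliability curve, the points $(v, r'_v)$ therefore lie on or below the points $(v, r_v)$ at every $v \in G$, which is the first sentence of the lemma.

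For the pointwise comparison of absolute deviations, I will use the hypothesis $(1-f_v)\,r_v \ge v$, that is $r'_v \ge v$. Combined with the first step this gives the ordering $v \le r'_v \le r_v$. Both differences $r'_v - v$ and $r_v - v$ are then nonnegative, so the absolute values can be dropped:
\[
|v - r'_v| \;=\; r'_v - v \;\le\; r_v - v \;=\; |v - r_v|.
\]
The point worth stating explicitly is why the hypothesis is needed. Without it, $r'_v$ could fall below $v$ while $r_v$ lies above it, and the deviation could grow. This is exactly the case of a model whose probabilities are already too high, where shrinking the observed rate makes the apparent over-confidence worse.

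For the aggregate inequality (\ref{eq:noise}), I will note that $\mathrm{ECE}_G(y)$ and $\mathrm{ECE}_G(y')$ are computed over the same narratives grouped by the same grid values. The weights $W_v$ and the normaliser $n$ are therefore identical in the two sums, and only the observed rate changes. If the hypothesis holds at every $v$ with $W_v > 0$, each summand satisfies $\tfrac{W_v}{n}|v - r'_v| \le \tfrac{W_v}{n}|v - r_v|$. Grid values with $W_v = 0$ contribute zero to both sums. Summing over $G$ gives $\mathrm{ECE}_G(y') \le \mathrm{ECE}_G(y)$.

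The only step I expect to need care is making precise that $r_v$ and $r'_v$ are population quantities conditional on $p = v$, so that the weights are unaffected by which label is used. If the lemma is applied to the weighted sample estimator of Eq.~(\ref{eq:ht}) rather than to the population, the same argument goes through with $r_v$ replaced by the Hájek ratio within the cell. That replacement requires the noise relation $r'_v = (1-f_v)\,r_v$ to hold for the cell's weighted rates, which I would state as an assumption rather than derive.
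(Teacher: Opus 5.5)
Your proof is correct and follows the paper's argument essentially step for step: it establishes the ordering $v \le r'_v \le r_v$ under the hypothesis, compares the deviations pointwise with the absolute values dropped, and sums with weights $W_v$ that depend on $p$ alone and so are common to both references. One small correction to your motivating aside: when $r_v > v > r'_v$ the model is too \emph{low} at $v$ under the true labels and only looks too high against the coded field, whereas the case of probabilities already too high is $r_v < v$, where omission can only enlarge $|v - r'_v|$ (the paper's proof records both failure cases).
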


The proof is given in Appendix~\ref{app:proofs} beside the previous one. The lemma is
conditional, and its condition is that the coded field never records a factor the narrative
does not state. That is an assumption about this pair of sources rather than a property of
coded data in general, and nothing in the literature establishes it. The human reference set
supplies the empirical check, because the cell in which the coded field records a factor the
model does not flag is observable and is reported in Section~\ref{sec:res-frontier}. The
results also report the measured ratio of the two calibration errors as the companion of the
bound, and that ratio turns out to be close to one, so the bound is not where the difference
between the two references shows itself. Post-hoc recalibration is assessed as an
add-on rather than as the headline. Platt scaling and isotonic regression are fitted on one half of the narratives and scored on
the other half, over \RecalRepeats{} repeats in both directions,
\begin{equation}
q^{\mathrm{Platt}}(p) = \sigma\!\left(a + b\,\mathrm{logit}(\tilde{p})\right),\qquad
\hat{m} = \arg\min_{m \uparrow} \sum_i w_i \bigl(m(p_i) - y_i\bigr)^2,\qquad
q^{\mathrm{iso}}(p) = \hat{m}(p),
\label{eq:platt}
\end{equation}
where $\tilde{p}$ is $p$ clipped to $[\EndpointClip, 1 - \EndpointClip]$ before the logit is
taken. The clip applies in fitting and in evaluation alike, because the logit is undefined at
the endpoints and the output grid reaches them. The isotonic minimization is over non-decreasing maps $m$ and
defines a fitted function, which is then evaluated at $p$ and held constant outside the range
it was fitted on. The Platt map is fitted without a penalty, so it is a two-parameter
correction of the intercept and the slope, and temperature scaling is the special case with
$a = 0$. Splitting on narratives rather than on judgments matters here for the reason given in
Section~\ref{sec:reference}. Maps are fitted per variable wherever the reference set carries at
least \RecalMinPositives{} positive labels for that variable, and pooled otherwise, because an
analyst recalibrates one variable at a time. The decision threshold $\tau$ is also retuned per
variable to maximize weighted $F_1$ on one random half of the narratives and scored on the
other, over \ThreshRepeats{} repeats, so that a threshold failure can be told from a model
failure by the out-of-fold gain. Every quantity in this section is computed by the released
code, and Table~\ref{tab:eqmap} names the function and the result file for each equation.

%% =====================================================================================
\section{Results}
\label{sec:results}

\subsection{Population coding, cost, and the probability landscape}
\label{sec:res-population}

The second stage coded \NstageTwo{} of the \Ncorpus{} narratives in the analyzed corpus with
the full \NQuestions{}-question schema, and \NstageTwoRandom{} of them form the random
stratum on which every population estimate rests (Table~\ref{tab:run}). Coding cost
\$\StageTwoCost{} in total, which is a realized \$\CostPerThousand{} per thousand narratives,
so coding the whole corpus at that rate would cost \$\CostWholeCorpus{}. The median latency
was \MedianLatency{} seconds, the ninetieth percentile \StageTwoPNinetyLatency{} seconds, the
sustained rate about \StageTwoReqPerS{} requests per second, and the mean input
\MeanTokens{} tokens per narrative, with \StageTwoErrors{} failed calls. The fitted token
model of Eq.~(\ref{eq:cost}) has $\alpha = \TokenAlpha$, $\beta = \TokensPerQuestion$ and
$b = \TokenBchar$ tokens per character, with $R^2 = \CostModelRsq$ for the question-count model (Figure~\ref{fig:corpus}d), and Table~\ref{tab:run} also reports the direct fit at \NQuestions{} questions, whose single intercept absorbs the schema term. Its shape is the substantive finding of the run, because at
\NQuestions{} questions the schema accounts for about \SchemaTokens{} tokens per call while
a median narrative of \MedianChars{} characters contributes about \NarrativeTokens{}. Each
additional question costs a constant \TokensPerQuestion{} tokens regardless of the corpus,
so a lean screen is cheaper only in proportion to the questions it omits. Two break-even
fractions follow and they are not the same quantity. Evaluating Eq.~(\ref{eq:twostage}) with
the fitted token model at the median narrative length, with \SchemaQScreen{} questions in the
screen against \SchemaQFull{} in the full schema, predicts that screening pays only when fewer
than a fraction \BreakEvenModel{} of narratives go on to full coding. The screen as executed
cost \ScreenCostShare{} of the full call per narrative, which puts the realized break-even at
\BreakEvenFraction{}. The two differ because the lean schema carries shortened instruction
text as well as fewer questions, so its intercept is below the one the question-count model
assumes, and the executed screen is therefore cheaper than the model predicts. Under either
figure the conclusion is the same, and asking more questions in one call is the dominant
efficiency lever.

\begin{table}[pos=!htbp]
\centering
\rmfamily\small
\setlength{\tabcolsep}{4.0pt}
\caption{Corpus and run statistics.}
\label{tab:run}
\begin{threeparttable}
\begin{tabular}{>{\raggedright\arraybackslash}p{195.8pt}>{\raggedleft\arraybackslash}p{164.7pt}}
\toprule
\jevheadp{Quantity} & \jevheadp{Value} \\
\midrule
Crash records in the narrative file & 5,109,746 \\
Narratives empty & 91,175 (1.78\%) \\
Narratives analyzed ($>$40 chars) & 5,018,080 \\
Narrative length, median / p95 / p99 (chars) & 378 / 935 / 1427 \\
Stage-2 narratives coded & 195,857 \\
  of which random stratum & 150,000 \\
Model & jev-1.13.0 (195,857) \\
Mean input tokens per narrative & 3,673 \\
Median latency (s) & 0.20 \\
Token model, direct fit at 27 questions & $a$ = 3562, $b$ = 0.236/char, $R^2$ = 0.975 \\
Schema overhead per question (tokens) & 122 ($R^2$ = 0.987) \\
Cost per 1{,}000 narratives, realised (USD) & 0.1543 \\
Stage-2 spend (USD) & 30.22 \\
\bottomrule
\end{tabular}
\begin{tablenotes}[flushleft]\small\rmfamily
\item \textit{Note:}~Prices are TypeSafe list prices accessed 2026-09-17 and are quoted rather than endorsed.
\end{tablenotes}
\end{threeparttable}
\end{table}

\begin{figure}[pos=!htbp]\centering
\includegraphics[width=\textwidth]{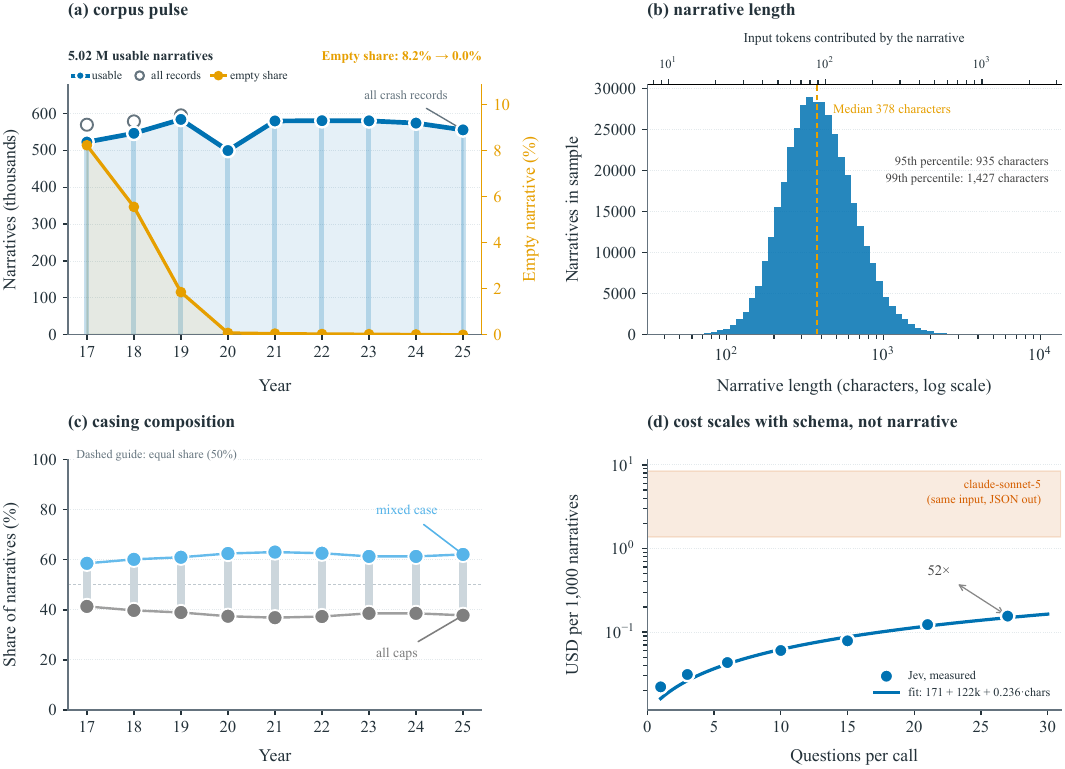}
\caption{Corpus and cost. (a) Usable narratives per year in thousands (blue) against all crash
records (hollow), with the share of records whose narrative is empty (orange, right axis)
falling from \EmptyShareFirst\% in \FirstYear{} to zero by the middle of the period.
(b) Narrative length on a logarithmic axis, median and upper percentiles marked, with the
input tokens on the upper axis. (c) Share of narratives in mixed case (light blue) and in
capitals (gray). (d) Measured cost per thousand narratives against questions per call (blue),
the fitted model of Eq.~(\ref{eq:cost}), and the list-price band of a frontier generative
model on the same input (orange).}
\label{fig:corpus}
\end{figure}

The returned probabilities pile up at the two ends of the grid in every one of the
\NgridVars{} presence variables (Figure~\ref{fig:landscape}). Most narratives receive a value at
or near the floor, and every panel turns up again at the ceiling, where the top decile holds
more mass than the decile below it in all \NCeilingUpturn{} of \NGridShapeVars{} variables. The
uncertain band between \UncertainLo{} and \UncertainHi{} holds under three percent of narratives
for every variable. The two ends do not carry equal weight everywhere. For \NMidHeavy{}
variables, namely \MidHeavyVars{}, the middle of the range holds more narratives than the
ceiling does, so the upper mode is a rise rather than a second peak. That shape is
favorable for selective prediction, because the population a threshold would route to review
is small.
The model returns its probabilities on a discrete grid, and this property was not documented
by the developer. The returned values take at most \GridDistinct{} distinct levels, running
from \GridMin{} to \GridMax{} in steps of \GridStep{}, and across \NGridAnswersMillions{}
million answers on the random stratum the value zero occurred \NGridZeros{} times and the
value one occurred \NGridOnes{} times. The same grid governs the confidence attached to
categorical and ordinal answers. The lower end of the grid has a first visible consequence in
Table~\ref{tab:prevalence}. The expected probability is the estimate a calibrated model would
match to the true prevalence, but for the rarest variables most narratives sit at the grid's
smallest value, so it largely measures the floor rather than the data. For
\FloorInflMaxVar{}, \FloorInflMax\% of the expected probability is contributed by narratives
at $p = \GridStep$. \NfloorFlagged{} variables exceed the \FloorFlagShare\% reporting
threshold, one of them by a margin that rounds away in print, and they are marked in the
table, where thresholded rates or keyword rates are the appropriate summaries. Gating behaves as designed, because the measured leakage rate of
Eq.~(\ref{eq:leak}) across the \NGatedMeasured{} gated variables ranges from \LeakMin\% to
\LeakMax\%, with the maximum for \LeakMaxVar{}, where \LeakMaxNLeaked{} of \LeakMaxNOff{}
gate-off narratives carried a substantive answer that the operator suppressed.

\begin{table}[pos=!htbp]
\centering
\rmfamily\small
\setlength{\tabcolsep}{3.0pt}
\caption{Prevalence estimates on the Stage-2 random stratum.}
\label{tab:prevalence}
\begin{threeparttable}
\begin{tabular}{l>{\raggedleft\arraybackslash}p{51.6pt}>{\raggedleft\arraybackslash}p{78.9pt}>{\raggedleft\arraybackslash}p{42.7pt}>{\raggedleft\arraybackslash}p{28.0pt}r>{\raggedleft\arraybackslash}p{71.8pt}>{\raggedleft\arraybackslash}p{45.8pt}}
\toprule
\jevhead{variable} & \jevheadp{keyword \%\newline of 5{,}018{,}080} & \jevheadp{$p>0.5$ \%\newline of 150{,}000} & \jevheadp{E[p] \%\newline of 150{,}000} & \jevheadp{floor \%} & \jevhead{hits} & \jevheadp{confirmed \%\newline of hits} & \jevheadp{uncertain \%\newline of 150{,}000} \\
\midrule
hit\_and\_run & -- & 15.863 [15.679, 16.049] & 17.384 & 0 & -- & -- & 2.02 \\
witness\_cited & -- & 6.127 [6.006, 6.249] & 9.033 & 0 & -- & -- & 2.87 \\
transported & -- & 4.997 [4.887, 5.108] & 6.778 & 2 & -- & -- & 0.42 \\
alcohol\_involved & -- & 3.635 [3.541, 3.731] & 5.498 & 4 & -- & -- & 0.38 \\
hydroplane & 0.895 & 2.352 [2.276, 2.430] & 3.623$^{\ddagger}$ & 20 & 1{,}328 & 99.77 [99.34, 99.95] & 0.41 \\
vehicle\_defect & 0.500 & 2.209 [2.135, 2.284] & 3.965 & 8 & 728 & 93.13 [91.05, 94.86] & 0.91 \\
animal\_involved & 1.997 & 2.016 [1.945, 2.088] & 3.224$^{\ddagger}$ & 25 & 2{,}981 & 83.76 [82.39, 85.07] & 0.16 \\
wrong\_way & 0.332 & 1.427 [1.368, 1.489] & 5.087 & 0 & 513 & 78.36 [74.54, 81.85] & 1.85 \\
fatigue & 1.170 & 1.399 [1.340, 1.459] & 2.653$^{\ddagger}$ & 28 & 1{,}824 & 96.77 [95.85, 97.53] & 0.15 \\
aggression & 0.566 & 1.103 [1.050, 1.157] & 3.916 & 4 & 842 & 48.46 [45.03, 51.89] & 1.11 \\
medical\_episode & 0.742 & 1.055 [1.004, 1.108] & 2.984 & 15 & 1{,}148 & 82.06 [79.71, 84.23] & 0.77 \\
phone\_use & -- & 1.038 [0.987, 1.091] & 2.463$^{\ddagger}$ & 27 & -- & -- & 0.28 \\
unbelted & -- & 0.596 [0.558, 0.636] & 2.844 & 4 & -- & -- & 0.79 \\
drug\_involved & 0.242 & 0.562 [0.525, 0.601] & 2.501 & 12 & 377 & 88.06 [84.36, 91.16] & 0.26 \\
intentional & 0.165 & 0.359 [0.329, 0.390] & 3.267 & 4 & 225 & 59.56 [52.83, 66.03] & 0.54 \\
preg\_mentioned & 0.108 & 0.116 [0.099, 0.135] & 1.179$^{\ddagger}$ & 80 & 173 & 97.11 [93.38, 99.06] & 0.01 \\
\bottomrule
\end{tabular}
\begin{tablenotes}[flushleft]\small\rmfamily
\item \textit{Note:}~Keyword \% is over all 5{,}018{,}080 narratives. Jev \% thresholds at $p>0.5$ with Clopper--Pearson intervals. E[p] is the mean probability. The floor column is the share of E[p] from narratives at the smallest probability the model can express, $p=0.01$; the double dagger marks a share above 20\%, where E[p] is inflated by the output grid and is not a prevalence estimate (Section~\ref{sec:res-population}). Confirmation \% is the share of keyword hits Jev also calls positive; uncertain \% the share with $p \in [0.3, 0.7]$.
\end{tablenotes}
\end{threeparttable}
\end{table}

\begin{figure}[pos=!htbp]\centering
\includegraphics[width=\textwidth]{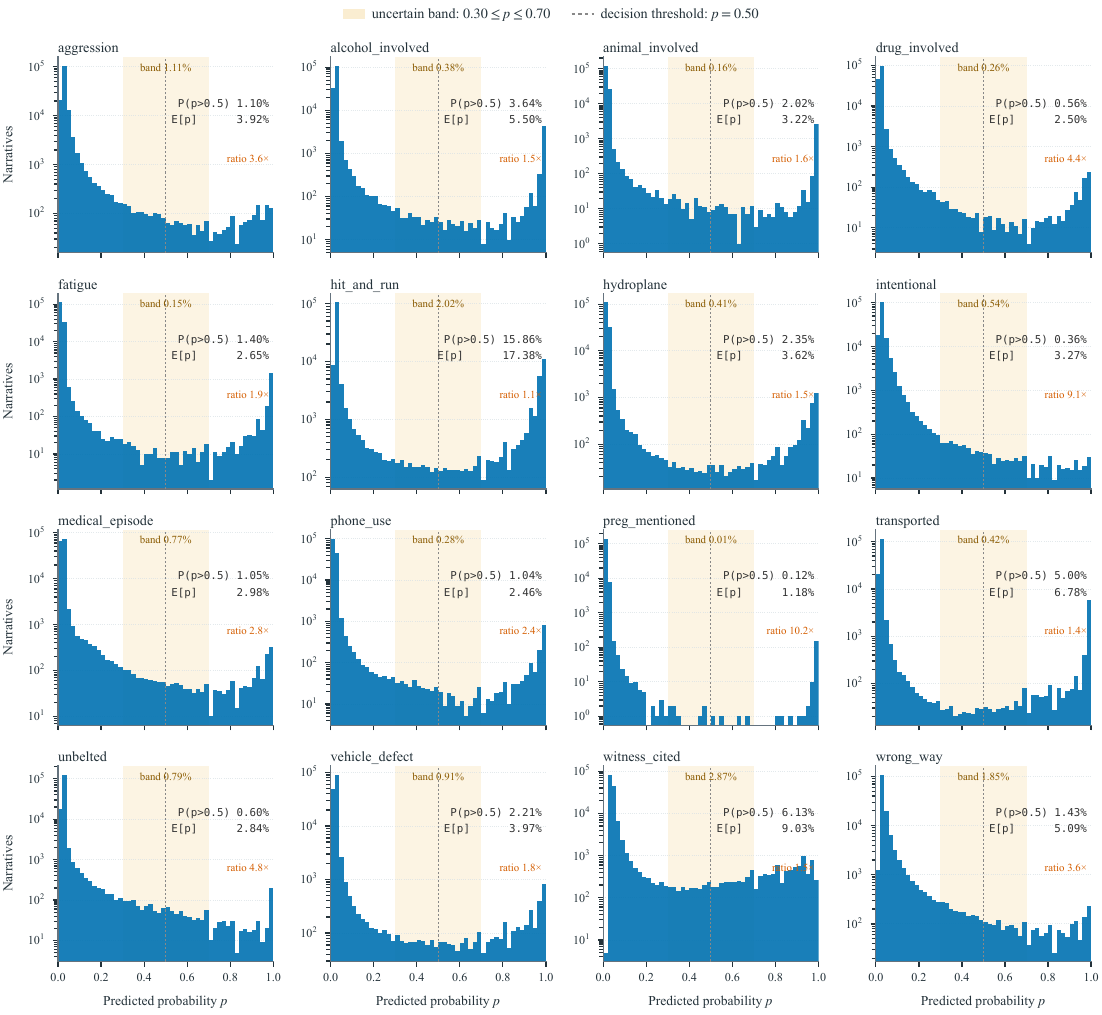}
\caption{Returned probabilities for each presence variable on the random stratum, with a
logarithmic count axis. The shaded band is the uncertain region from \UncertainLo{} to
\UncertainHi{} with its share annotated and the dashed line the decision threshold, and each
panel gives the thresholded rate, the expected probability and their ratio. Every panel
spikes at the floor and turns up again at the ceiling.}
\label{fig:landscape}
\end{figure}

\subsection{Calibration against coded fields and against human judgment}
\label{sec:res-calibration}

Table~\ref{tab:calibration} reports calibration against the coded CRIS fields for the
\NvarsCoded{} variables that have a usable reference, and Figure~\ref{fig:calibration} draws
the reliability curves on the model's own output grid with the human reference set in the
same panels. Against the coded reference the tabulated calibration error ranges from \ECEmin{} for
\ECEminVar{} to \ECEmax{} for \ECEmaxVar{}, and the calibration slope of Eq.~(\ref{eq:cox})
from \Slopemin{} to \Slopemax{}. Every calibration error reported against either reference in
this paper is the exact grid error of Eq.~(\ref{eq:ecegrid}) rather than the binned error of
Eq.~(\ref{eq:ece}). Each interval is a bootstrap of that same quantity, so every point
estimate lies inside its own interval and the two references can be compared with each other.
Slopes on both sides of one appear, so the departures are variable-specific rather than a
single systematic bias. The
curves are drawn by grid value rather than in equal-mass bins, because for a rare variable
almost all mass lands on two or three grid values and a ten-bin scheme collapses to a
handful of populated bins. The grid floor bounds what calibration is achievable. Coded base
rates run from \BaseRateMin\% to \BaseRateMax\%, and for \NbelowFloor{} variables, namely
\BelowFloorVars{}, the base rate falls below the smallest expressible probability of
\GridStep{}. Proposition~\ref{prop:floor} therefore applies to them, and the bound of
Eq.~(\ref{eq:floor}) on the attainable calibration error runs from \FloorBoundMin{} to
\FloorBoundMax{}, with the maximum for \FloorBoundMaxVar{}. The floor and the coded base rate
are marked on every panel of Figure~\ref{fig:calibration} so that the affected cases are
visible rather than inferred.

\begin{table}[pos=!htbp]
\centering
\rmfamily\small
\setlength{\tabcolsep}{3.0pt}
\caption{Calibration against coded CRIS fields (agreement).}
\label{tab:calibration}
\begin{threeparttable}
\begin{tabular}{lrr>{\raggedleft\arraybackslash}p{80pt}rrrrrrr}
\toprule
\multicolumn{3}{c}{} & \multicolumn{5}{c}{\jevhead{Error}} & \multicolumn{3}{c}{\jevhead{Cox calibration}} \\
\cmidrule(lr){4-8}\cmidrule(lr){9-11}
\jevhead{variable} & \jevhead{coded field} & \jevhead{$n$} & \jevheadp{ECE} & \jevhead{MCE} & \jevhead{Brier} & \jevhead{rel.} & \jevhead{res.} & \jevhead{slope} & \jevhead{intcpt} & \jevhead{$z$} \\
\midrule
alcohol\_involved & coded\_alcohol & 150,000 & 0.0187 [0.0180, 0.0194] & 0.022 & 0.0197 & 0.0004 & 0.0003 & 0.70 & -1.81 & -1.5 \\
drug\_involved & coded\_drug & 150,000 & 0.0199 [0.0195, 0.0203] & 0.025 & 0.0054 & 0.0004 & 0.0000 & 0.88 & -2.57 & -42.5 \\
fatigue & coded\_fatigue & 150,000 & 0.0148 [0.0144, 0.0152] & 0.034 & 0.0069 & 0.0003 & 0.0003 & 0.83 & -2.42 & -23.0 \\
animal\_involved & coded\_animal & 150,000 & 0.0149 [0.0146, 0.0152] & 0.035 & 0.0031 & 0.0003 & 0.0011 & 1.24 & -3.28 & -34.9 \\
phone\_use & coded\_phone & 150,000 & 0.0203 [0.0199, 0.0207] & 0.043 & 0.0070 & 0.0007 & 0.0000 & 0.85 & -3.58 & -27.2 \\
unbelted & coded\_unbelted & 150,000 & 0.0122 [0.0115, 0.0128] & 0.013 & 0.0186 & 0.0002 & 0.0000 & 0.51 & -2.21 & -13.1 \\
hydroplane$^{\dagger}$ & coded\_hydro\_proxy & 150,000 & 0.0166 [0.0162, 0.0171] & 0.030 & 0.0090 & 0.0003 & 0.0012 & 0.97 & -1.59 & -22.3 \\
wrong\_way & coded\_wrongway & 150,000 & 0.0455 [0.0450, 0.0460] & 0.188 & 0.0112 & 0.0048 & 0.0002 & 1.10 & -3.09 & -64.0 \\
medical\_episode & coded\_medical & 150,000 & 0.0248 [0.0244, 0.0252] & 0.037 & 0.0065 & 0.0008 & 0.0000 & 1.02 & -2.99 & -43.3 \\
\bottomrule
\end{tabular}
\begin{tablenotes}[flushleft]\small\rmfamily
\item \textit{Note:}~Coded CRIS fields are themselves incomplete, so these measure \emph{agreement} rather than accuracy; accuracy against the human reference set of Section~\ref{sec:reference} is in Table~\ref{tab:gold}. Incompleteness costs agreement rather than calibration, because a factor the narrative states and the code omits turns a correct extraction into an apparent false positive. The dagger marks a coded field that is a proxy rather than a direct match. ECE and its interval are both the exact estimator on the model's two-decimal output grid, so every point estimate lies inside its own interval. The interval is a bootstrap over crashes (B = 1{,}000), the sampling unit of the stratum.
\end{tablenotes}
\end{threeparttable}
\end{table}

\begin{figure}[pos=!htbp]\centering
\includegraphics[width=\textwidth]{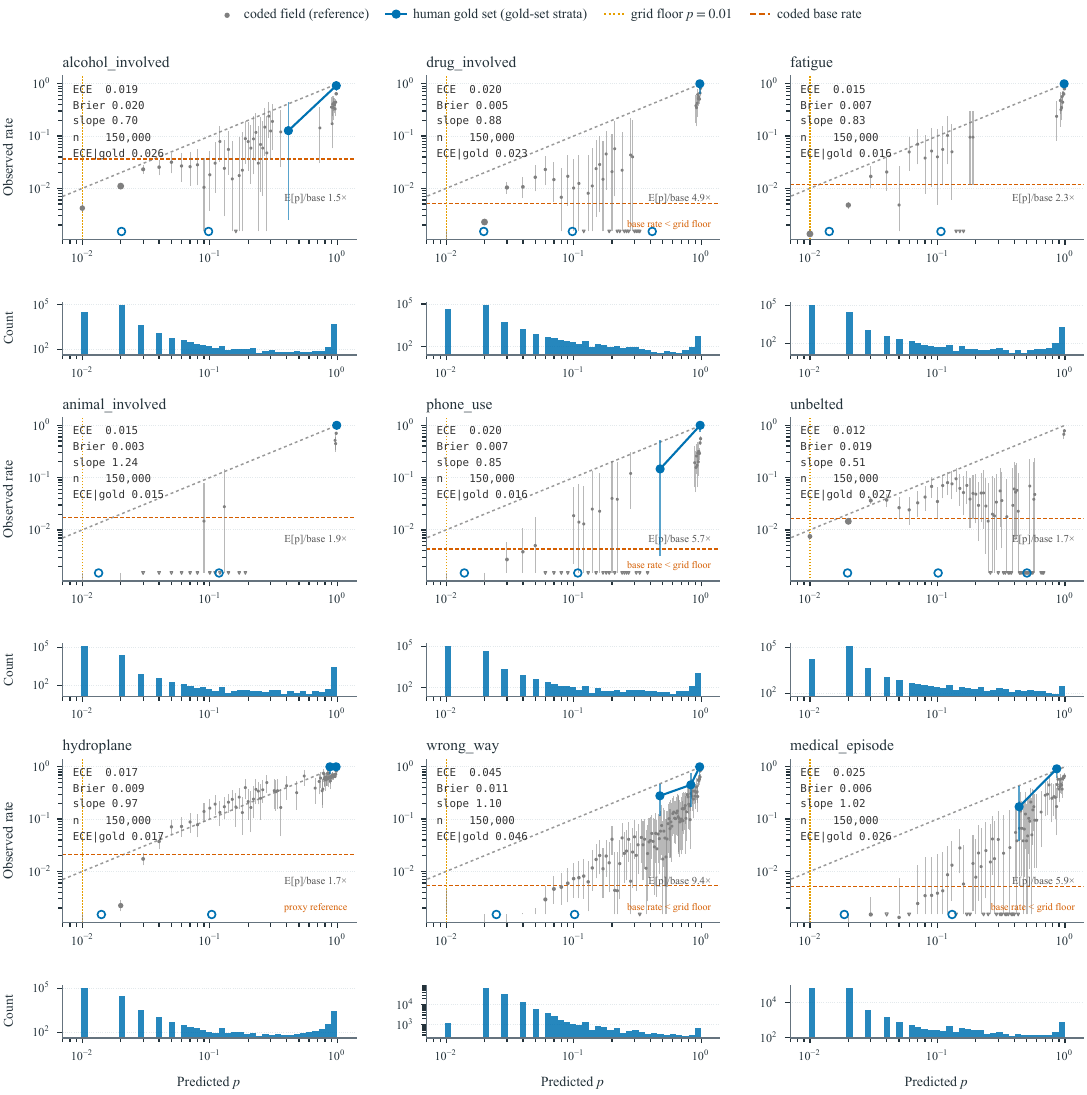}
\caption{Reliability against two references for the nine variables with a coded counterpart.
Gray markers are the observed rate in the coded CRIS field at each attainable grid value,
with exact binomial intervals and marker area proportional to support; observed rates of zero
are marked on the floor. Blue markers are the human reference set on its calibration cells,
weighted by the design weights of Section~\ref{sec:reference}, with approximate intervals at
Kish's effective sample size. Open circles are strata with no positive, and strata too thin
to support a rate are not drawn, which is why the restraint-use panel carries no filled blue
marker. Axes are logarithmic. The dotted vertical guide is the grid
floor and the dashed horizontal guide the coded base rate, and where the latter lies below
the former no allocation of grid values can be calibrated. Each panel annotates the
calibration error against both references.}
\label{fig:calibration}
\end{figure}

Against the human labels the calibration error is almost unchanged, and the difference between
the two references appears elsewhere. Table~\ref{tab:gold} scores the model over \GoldNVars{}
variables and \GoldNUsable{} usable judgments. Per variable the calibration error against the
human labels is \ECEGapMin{} to \ECEGapMax{} times the error against the coded field, with a
median ratio of \ECEGapMedian{}, and it is the larger of the two for \ECEGapNAbove{} of
\ECEGapN{} variables. The margin does not survive rounding to the three decimals
Figure~\ref{fig:calibration} prints for \ECEGapNTied{} of them, which is why a count
taken from the figure's annotations comes out lower. Once both references are weighted to the same population and measured
with the same estimator, neither one makes the model's probabilities look materially better
calibrated than the other, and Figure~\ref{fig:calibration} annotates the two errors side by
side in each panel to show it.

The two references separate sharply on agreement rather than on calibration, and that
separation is the result the rest of this paper rests on. Cohen's kappa against the coded field
is below the calibration-weighted kappa against human judgment for all \KappaGapNPositive{} of
\KappaGapN{} variables that carry a coded counterpart, by a median of \KappaGapMedian{}. The
widest case is \KappaGapMaxVar{}, which reaches \KappaGapMaxCoded{} against the coded field and
\KappaGapMaxHuman{} against human judgment, so the same extraction reads as mediocre under one
reference and as strong under the other. The narrowest agreement of all is \KappaCodedMinVar{}
at \KappaCodedMin{} against the coded field, and there the human reference agrees that the model
is weak, at \KappaCodedMinHuman{}. The mechanism is the one Lemma~\ref{lem:noise} describes,
namely one-sided omission. A factor the narrative states and the coded field does not record
turns a correct extraction into an apparent false positive. That costs the model agreement
without costing it calibration, because its probabilities stay close to the rate at which the
factor is actually stated. The practical consequence is that a coded reference
understates how faithfully the model reads a narrative, and Section~\ref{sec:res-downstream}
prices what those unrecorded factors are worth.

Calibration against the human reference is nonetheless rejected, and the failure is a level
shift rather than over-confidence. Pooled across variables the calibration error is \GoldECE{}
with a cluster-bootstrap interval of [\GoldECElo{}, \GoldECEhi{}], the Brier score is
\GoldBrier{} with a skill score of \GoldBSS{}, and the Spiegelhalter statistic of
Eq.~(\ref{eq:spieg}) is $\GoldZ$ with a bootstrap standard error of \GoldZSE{}. The calibration
slope is \GoldSlope{}, above one, so the probabilities are not extreme enough and the model
hedges toward the middle of the range, which is the opposite of the over-confidence usually
reported for neural classifiers. The probabilities also sit too high overall, with a mean of
\GoldExpectedP\% against a weighted base rate of \GoldBaseRate\% and an intercept of
$\GoldIntercept$, an over-statement of about \GoldRelExcess\% in relative terms. A safety
office summing probabilities to estimate prevalence would therefore over-count, whereas one
ranking narratives for review is unaffected, because ranking is invariant to a monotone
transformation, and that invariance is why the remedy is a recalibration rather than a
retrained model. Figure~\ref{fig:goldrel} in Appendix~\ref{app:coders} draws the same
human-referenced reliability for the seven variables without a coded counterpart.

Recalibration recovers most of the error at a small cost in labels. Table~\ref{tab:recal}
fits the maps of Eq.~(\ref{eq:platt}) on half of the narratives and scores them on the other
half in both directions, over \RecalRepeats{} repeats. Splitting on narratives rather than on
pairs matters here, because one narrative supplies up to sixteen judgments and a split over
pairs would fit the map partly on the records it is then scored against. Out of fold, the
pooled calibration error falls from \RecalRawECE{} to \RecalPlattECE{} under Platt scaling and
to \RecalIsoECE{} under isotonic regression, a factor of \RecalIsoGain{}, with an interval of
[\RecalPlattECElo{}, \RecalPlattECEhi{}] on the Platt value. The fitted map is a two-parameter
correction with an unpenalized intercept and slope, and after it the pooled slope is
\RecalPlattSlope{} against \GoldSlope{} before, so the map removes the level shift rather than
merely rescaling it.

The pooled map is a demonstration and the per-variable maps are the deployment evidence, since
an analyst recalibrates one variable at a time. Table~\ref{tab:recal} therefore carries a
per-variable block for the \RecalPerVarN{} variables whose reference labels contain at least
\RecalMinPositives{} positives. The held-out calibration error under a variable's own Platt
map runs from \RecalPerVarECELo{} to \RecalPerVarECEHi{} with a median of
\RecalPerVarECEMedian{}, against a median of \RecalPerVarRawMedian{} for the raw output, and
\RecalPerVarNImproved{} of the \RecalPerVarN{} variables improve under it. The remaining
\RecalPooledOnlyN{} variables carry too few positives to estimate their own map and can only be
recalibrated under the pooled one, which is a statement about the size of the reference set
rather than about the method. The practical claim is that a few thousand human judgments, about
\CoderHours{} hours per coder at the median response time, are enough to recalibrate the
variables that carry enough positive labels, over a corpus of \Ncorpus{} narratives.

\begin{table}[pos=!htbp]
\centering
\rmfamily\small
\setlength{\tabcolsep}{3.0pt}
\caption{Accuracy against the human reference set.}
\label{tab:gold}
\begin{threeparttable}
\begin{tabular}{lrrrrrrr>{\raggedleft\arraybackslash}p{25.4pt}r>{\raggedleft\arraybackslash}p{24.8pt}rr>{\raggedleft\arraybackslash}p{32.4pt}}
\toprule
\multicolumn{3}{c}{} & \multicolumn{6}{c}{\jevhead{Accuracy vs.\ human labels}} & \multicolumn{4}{c}{\jevhead{Calibration}} & \multicolumn{1}{c}{} \\
\cmidrule(lr){4-9}\cmidrule(lr){10-13}
\jevhead{variable} & \jevhead{$n$} & \jevhead{base \%} & \jevhead{prec.} & \jevhead{rec.} & \jevhead{$F_1$} & \jevhead{} & \jevhead{$\kappa$\jevbr  weighted} & \jevheadp{$\kappa$\newline vs.\ coded} & \jevhead{ECE} & \jevheadp{vs.\ coded} & \jevhead{gap} & \jevhead{slope} & \jevheadp{$H(0.90)$ \%\newline held out} \\
\midrule
preg\_mentioned & 107 & 0.2 & 1.00 & 1.00 & 1.00 & \jevbar{1.000}{0.000} & 1.00 & -- & 0.011 & -- & -- & sep. & 5 \\
animal\_involved & 129 & 1.9 & 1.00 & 0.98 & 0.99 & \jevbar{0.979}{0.021} & 0.99 & 0.91 & 0.015 & 0.015 & 1.0$\times$ & sep. & 0 \\
hit\_and\_run & 205 & 16.1 & 0.99 & 0.96 & 0.97 & \jevbar{0.955}{0.045} & 0.97 & -- & 0.035 & -- & -- & 2.12 & 0 \\
transported & 146 & 4.4 & 0.94 & 1.00 & 0.97 & \jevbar{0.948}{0.052} & 0.97 & -- & 0.024 & -- & -- & sep. & 0 \\
fatigue & 120 & 1.3 & 0.93 & 1.00 & 0.97 & \jevbar{0.942}{0.058} & 0.97 & 0.71 & 0.016 & 0.015 & 1.1$\times$ & sep. & 0 \\
hydroplane & 130 & 2.1 & 0.96 & 0.95 & 0.96 & \jevbar{0.929}{0.071} & 0.96 & 0.75 & 0.017 & 0.017 & 1.0$\times$ & 2.35 & 0 \\
drug\_involved & 148 & 0.4 & 0.92 & 1.00 & 0.96 & \jevbar{0.927}{0.073} & 0.96 & 0.48 & 0.023 & 0.020 & 1.1$\times$ & sep. & 0 \\
phone\_use & 123 & 0.8 & 0.93 & 0.97 & 0.95 & \jevbar{0.915}{0.085} & 0.95 & 0.48 & 0.016 & 0.020 & 0.8$\times$ & sep. & 0 \\
alcohol\_involved & 140 & 3.0 & 0.88 & 1.00 & 0.94 & \jevbar{0.895}{0.105} & 0.93 & 0.70 & 0.026 & 0.019 & 1.4$\times$ & sep. & 5 \\
aggression & 165 & 0.9 & 0.86 & 0.91 & 0.88 & \jevbar{0.806}{0.194} & 0.88 & -- & 0.029 & -- & -- & 3.23 & 11 \\
witness\_cited & 206 & 5.3 & 0.84 & 0.91 & 0.87 & \jevbar{0.792}{0.208} & 0.87 & -- & 0.055 & -- & -- & 2.15 & 19 \\
medical\_episode & 164 & 0.9 & 0.78 & 0.91 & 0.84 & \jevbar{0.741}{0.259} & 0.84 & 0.53 & 0.026 & 0.025 & 1.0$\times$ & sep. & 14 \\
\jevflag{vehicle\_defect} & 132 & 3.9 & \jevflag{0.96} & 0.54 & \jevflag{0.69} & \jevbar{0.480}{0.520} & 0.68 & -- & 0.032 & -- & -- & 1.39 & 0 \\
\jevflag{wrong\_way} & 183 & 1.0 & \jevflag{0.42} & 0.68 & \jevflag{0.52} & \jevbar{0.204}{0.796} & 0.52 & 0.37 & 0.046 & 0.045 & 1.0$\times$ & 1.55 & 72 \\
\jevflag{intentional} & 173 & 0.2 & \jevflag{0.33} & 1.00 & \jevflag{0.49} & \jevbar{0.152}{0.848} & 0.49 & -- & 0.035 & -- & -- & sep. & 68 \\
\jevflag{unbelted} & 145 & 0.2 & \jevflag{0.29} & 1.00 & \jevflag{0.45} & \jevbar{0.083}{0.917} & 0.45 & 0.10 & 0.027 & 0.012 & 2.2$\times$ & sep. & 64 \\
\textbf{pooled} & 2,416 & 2.7 & 0.90 & 0.91 & 0.91 & \jevbar{0.847}{0.153} & 0.91 & -- & 0.023 & -- & -- & 1.63 & 1 \\
\bottomrule
\end{tabular}
\begin{tablenotes}[flushleft]\small\rmfamily
\item \textit{Note:}~These are accuracy rather than agreement measures, because the reference is \GoldNUsable{} human judgments of the criteria text the model was given. Every rate is calibration-weighted to the Stage-2 random stratum under the design of Section~\ref{sec:reference}, so the base rate is the population rate and both kappa columns are population quantities. Bars scale $F_1$ from 0.40 and \jevflag{red} marks $F_1<0.70$. The gap column is ECE divided by the coded-field ECE of Table~\ref{tab:calibration}. The budget column is cross-fitted, the threshold chosen on one half of the narratives and certified on the other. ``sep.'' denotes quasi-complete separation.
\end{tablenotes}
\end{threeparttable}
\end{table}

\begin{table}[pos=!htbp]
\centering
\rmfamily\small
\setlength{\tabcolsep}{3.0pt}
\caption{Post-hoc recalibration on the human reference set, validated across narrative splits.}
\label{tab:recal}
\begin{threeparttable}
\begin{tabular}{l>{\raggedright\arraybackslash}p{112.9pt}r>{\raggedleft\arraybackslash}p{90.2pt}>{\raggedleft\arraybackslash}p{59.5pt}rr}
\toprule
\jevhead{scope} & \jevheadp{mapping} & \jevhead{$n$} & \jevheadp{ECE, held out} & \jevheadp{Brier} & \jevhead{slope} & \jevhead{intercept} \\
\midrule
pooled & none (raw model output) & 2{,}416 & 0.0231 [0.0212, 0.0244] & 0.0045 & 1.63 & -0.82 \\
pooled & Platt (logistic on the logit) & 2{,}416 & 0.0069 [0.0045, 0.0112] & 0.0036 & 0.97 & -0.06 \\
pooled & isotonic & 2{,}416 & 0.0069 [0.0045, 0.0111] & 0.0036 & 0.77 & -0.38 \\
animal\_involved & raw / Platt & 129 & 0.0146 / 0.0007 & 0.0006 / 0.0005 & -- & 0.10 \\
fatigue & raw / Platt & 120 & 0.0159 / 0.0008 & 0.0008 / 0.0004 & -- & 0.00 \\
transported & raw / Platt & 146 & 0.0240 / 0.0014 & 0.0020 / 0.0008 & -- & -0.31 \\
hydroplane & raw / Platt & 130 & 0.0169 / 0.0020 & 0.0013 / 0.0015 & 0.79 & -0.41 \\
medical\_episode & raw / Platt & 164 & 0.0260 / 0.0025 & 0.0028 / 0.0015 & 0.92 & -0.11 \\
aggression & raw / Platt & 165 & 0.0290 / 0.0034 & 0.0037 / 0.0018 & 0.93 & -0.06 \\
alcohol\_involved & raw / Platt & 140 & 0.0261 / 0.0053 & 0.0046 / 0.0030 & 0.90 & 0.08 \\
wrong\_way & raw / Platt & 183 & 0.0462 / 0.0103 & 0.0096 / 0.0059 & 0.96 & -0.11 \\
hit\_and\_run & raw / Platt & 205 & 0.0349 / 0.0114 & 0.0074 / 0.0059 & 0.93 & -0.05 \\
witness\_cited & raw / Platt & 206 & 0.0549 / 0.0194 & 0.0119 / 0.0108 & 0.96 & -0.02 \\
vehicle\_defect & raw / Platt & 132 & 0.0324 / 0.0339 & 0.0173 / 0.0185 & 0.62 & -1.20 \\
\bottomrule
\end{tabular}
\begin{tablenotes}[flushleft]\small\rmfamily
\item \textit{Note:}~Each mapping is fitted on one half of the narratives and scored on the other, both ways and over \RecalRepeats{} repeats, because an in-sample fit would be circular and a split over pairs would place one narrative on both sides of it. The row ``none'' uses the same protocol, so it is comparable to the rows below but not to the full-sample ECE of Table~\ref{tab:gold}. The Platt map is unpenalised, fitting an intercept and a slope. The pooled block is one map over all variables; the per-variable block is what an analyst deploys, and variables with fewer than \RecalMinPositives{} positive labels are omitted from it.
\end{tablenotes}
\end{threeparttable}
\end{table}

\subsection{Accuracy, selective prediction, and the review budget}
\label{sec:res-accuracy}

Against human judgment of the same criteria text, and calibration-weighted to the population
under the design of Section~\ref{sec:reference}, pooled precision is \GoldPrecision{} with an
interval of [\GoldPrecisionLo{}, \GoldPrecisionHi{}] and recall is \GoldRecall{}. Pooled
$F_1$ is \GoldFone{} with an interval of [\GoldFoneLo{}, \GoldFoneHi{}], the weighted Cohen's
kappa is \GoldKappa{}, and \GoldNStrong{} of the \GoldNVars{} variables reach an $F_1$ of
\StrongFoneThreshold{} or more (Table~\ref{tab:gold}). Every interval here resamples narratives
rather than judgments, which is the unit the reference set was drawn in. This subsection carries
the paper's only accuracy claim, and every other number in this section is agreement with coded
fields. These rates are also conditional on the pairs the coders found clear, and the bounds
that conditioning puts on them are reported at the end of this subsection.

\GoldNWeak{} variables fall below an $F_1$ of \WeakFoneThreshold{}, namely \GoldWeakVars{}, and
they do not all fail the same way. For \GoldNWeakPrecision{} of them, \GoldWeakPrecisionVars{},
the failure is precision, and \GoldWorstPrecVar{} reaches only \GoldWorstPrec{} at full recall.
For \GoldNWeakRecall{} of them, \GoldWeakRecallVar{}, the failure is the opposite, with
precision \GoldWeakRecallPrec{} and recall \GoldWeakRecallValue{}, so the model states the
factor accurately when it states it and misses more than a third of the records that carry it.
Only \GoldWeakDisputes{} of the \GoldNDisputed{} disputed pairs fall in these variables, so the
coders were essentially unanimous and these are model errors rather than ambiguous criteria,
which is why Table~\ref{tab:gold} is presented per variable. It would be natural to conclude
that these concepts are simply not recoverable from a crash narrative, but
Section~\ref{sec:res-frontier} shows that conclusion to be wrong for most of them. Two frontier
models given the same narratives and the same criteria recover \FrontierNRecovered{} of the
\GoldNWeak{} weak variables above an $F_1$ of \FrontierWeakThreshold{}, namely
\FrontierRecoveredVars{}, reaching \BestIntentionalFone{} on intentional acts,
\BestUnbeltedFone{} on restraint use and \BestVehicleDefectFone{} on vehicle defects. The
information is therefore in the narrative for those, and the failures belong to the interaction
between this model and this schema rather than to the narrative. The exception is
\FrontierNotRecoveredVars{}, where the best frontier arm reaches \BestWrongWayFone{} and no arm
clears the threshold, so that concept may genuinely be hard to state from a narrative alone.

The obvious remedy does not work, and the test is reported rather than the intuition.
Precision \GoldWorstPrec{} at full recall looks like a threshold set too low, particularly
given the upward level shift established above, so Table~\ref{tab:threshold} retunes $\tau$
per variable to maximize $F_1$ on one random half of the narratives and scores it on the
other over \ThreshRepeats{} repeats. No variable gains as much as \ThreshGainThreshold{} in
$F_1$ among the \ThreshNStable{} for which the procedure is stable, the median change is
\ThreshMedianGain{}, and the best is \ThreshBestGain{} for \ThreshBestGainVar{}. The weak
variables divide by how much evidence they carry. \ThreshWeakNThin{} of them,
\ThreshWeakThinVars{}, appear to gain as much as \ThreshWeakThinBestGain{}, but each carries
only \ThreshWeakNPosMin{} positive labels and both exceed the stability bound. The apparent
improvement is a threshold fitted to noise, and the interquartile range of the chosen threshold
in Table~\ref{tab:threshold} shows how little the two halves agreed. The other
\ThreshWeakNThick{}, \ThreshWeakThickVars{}, carry up to \ThreshWeakNPosMax{} positives and do
not gain at all, the best of them changing by \ThreshWeakThickBestGain{}. Wrong-way travel is
the clearest case, since at a tuned threshold of \ThreshWrongWayTau{} on
\ThreshWrongWayNPos{} positives its out-of-fold $F_1$ is \ThreshWrongWayOof{}, below what the
default already achieves. The default threshold of \TauGate{} is therefore kept throughout,
since the well-behaved variables are not improved by retuning either.

These failures belong to the pairing of this model with this schema rather than to the schema
alone. The frontier arms of Section~\ref{sec:res-frontier} read the same criteria text and
recover \FrontierNRecovered{} of the \GoldNWeak{} weak variables, so the wording cannot be the
whole explanation, and a rewrite of the criteria would repair the interaction rather than a
defect in the concepts. That rewrite is not attempted here, because the reference set was
drawn against the current schema and a schema tuned until its weak variables pass on the same
labels would be circular.

\begin{table}[pos=!htbp]
\centering
\rmfamily\small
\setlength{\tabcolsep}{3.0pt}
\caption{Per-variable decision threshold, retuned on the human reference set and validated out of fold.}
\label{tab:threshold}
\begin{threeparttable}
\begin{tabular}{lr>{\raggedleft\arraybackslash}p{46.6pt}>{\raggedleft\arraybackslash}p{34.4pt}>{\raggedleft\arraybackslash}p{73.3pt}>{\raggedleft\arraybackslash}p{91.7pt}rrr}
\toprule
\multicolumn{3}{c}{} & \multicolumn{6}{c}{\jevhead{Retuned threshold, out of fold}} \\
\cmidrule(lr){4-9}
\jevhead{variable} & \jevhead{$n^{+}$} & \jevheadp{$F_1$ at $\tau=0.5$} & \jevheadp{tuned $\tau^{*}$} & \jevheadp{half-sample $\tau$\newline median [IQR]} & \jevheadp{$F_1$ tuned, out of fold} & \jevhead{s.d.} & \jevhead{change} & \jevhead{stable?} \\
\midrule
unbelted & 5 & 0.45 & 0.77 & 0.75 [0.73, 0.77] & 0.87 & 0.17 & +0.42 & no \\
intentional & 5 & 0.49 & 0.66 & 0.66 [0.64, 0.70] & 0.65 & 0.22 & +0.16 & no \\
wrong\_way & 21 & 0.52 & 0.60 & 0.60 [0.40, 0.76] & 0.48 & 0.09 & -0.05 & yes \\
vehicle\_defect & 32 & 0.69 & 0.40 & 0.37 [0.15, 0.47] & 0.68 & 0.15 & -0.01 & no \\
medical\_episode & 24 & 0.84 & 0.73 & 0.70 [0.53, 0.73] & 0.87 & 0.07 & +0.03 & yes \\
witness\_cited & 41 & 0.87 & 0.42 & 0.42 [0.39, 0.50] & 0.84 & 0.05 & -0.04 & yes \\
aggression & 31 & 0.88 & 0.49 & 0.49 [0.48, 0.61] & 0.85 & 0.06 & -0.03 & yes \\
alcohol\_involved & 25 & 0.94 & 0.94 & 0.79 [0.70, 0.94] & 0.93 & 0.04 & -0.00 & yes \\
phone\_use & 15 & 0.95 & 0.66 & 0.59 [0.54, 0.66] & 0.95 & 0.05 & -0.00 & no \\
drug\_involved & 8 & 0.96 & 0.60 & 0.53 [0.46, 0.60] & 0.93 & 0.09 & -0.02 & no \\
hydroplane & 31 & 0.96 & 0.54 & 0.27 [0.24, 0.54] & 0.94 & 0.03 & -0.02 & yes \\
fatigue & 27 & 0.97 & 0.62 & 0.62 [0.52, 0.66] & 0.96 & 0.03 & -0.00 & yes \\
transported & 35 & 0.97 & 0.71 & 0.71 [0.62, 0.71] & 0.98 & 0.02 & +0.01 & yes \\
hit\_and\_run & 92 & 0.97 & 0.30 & 0.35 [0.30, 0.43] & 0.97 & 0.01 & -0.00 & yes \\
animal\_involved & 23 & 0.99 & 0.39 & 0.34 [0.28, 0.39] & 0.98 & 0.04 & -0.01 & yes \\
preg\_mentioned & 4 & 1.00 & 0.06 & 0.06 [0.05, 0.06] & 0.94 & 0.07 & -0.06 & no \\
\bottomrule
\end{tabular}
\begin{tablenotes}[flushleft]\small\rmfamily
\item \textit{Note:}~$n^{+}$ counts positive reference labels. $\tau^{*}$ maximizes weighted $F_1$ on the full reference set; the out-of-fold column picks $\tau$ on a random half and scores it on the other, both ways over 200 repeats, splitting on narratives rather than pairs, and is what retuning actually buys. The median and interquartile range are of the thresholds those half samples chose. An unstable threshold has a spread above 0.10 or fewer than 20 positives, is fitted to noise, and must not be transported.
\end{tablenotes}
\end{threeparttable}
\end{table}

The conventional selective-prediction formulation returns nothing on these variables. At
full coverage the accuracy-based selective risk of Eq.~(\ref{eq:riskcov}) against the coded
reference runs from \FullCovRiskMin\% to \FullCovRiskMax\% (Figure~\ref{fig:riskcoverage}a).
Every variable therefore sits below the \RiskTargetFive\% target usually adopted in the
selective-classification literature, \NzeroBudgetOne{} of \NvarsCoded{} sit below the
\RiskTargetOne\% target as well, and the implied review budget is zero for \NzeroBudget{} of
\NvarsCoded{} variables at the \RiskTargetFive\% target. Applied unchanged, the standard
analysis would certify a rare-event screener as needing no human review at all. The budget
over flagged records carries the information instead. Precision among flagged positives
against the coded reference ranges from \PrecMin{} for \PrecMinVar{} to \PrecMax{} for
\PrecMaxVar{}, so the review effort lies entirely in the flagged set. Ranking that set by
probability and auto-accepting the most confident traces the curves of
Figure~\ref{fig:riskcoverage}b. Those curves are drawn as steps because the output grid is
discrete, so the only operating points that exist are the attainable thresholds and the
intervals between them carry the precision of the lower one. The alcohol curve shows why that distinction
matters in practice. It reaches full precision only on a negligible share of its flags, and it
then drops
straight through the policy band to \PrecAlcoholTopBlock\% as soon as the block of records at
the highest grid value is accepted, which is \PrecAlcoholTopCoverage\% of its flags at once.
Eq.~(\ref{eq:budget}) reads those attainable points and converts them into the budgets of
Table~\ref{tab:selective}. Against the coded reference the budget
saturates, because \BudgetCodedSatN{} of the \BudgetCompN{} variables measurable against both
references require reviewing essentially every flagged record to reach \PrecTargetNinety\%
precision (Figure~\ref{fig:riskcoverage}c), and only \PrecMaxVar{} can auto-accept most of its
flags. As annual workload for Texas, that precision across all \NvarsCoded{} variables implies
\ReviewDecisionsPerYear{} variable-level review decisions per year, of which
\ReviewPerYearMax{} are for \ReviewPerYearMaxVar{} alone (Figure~\ref{fig:riskcoverage}d).
That total counts decisions rather than documents, because a narrative that more than one
variable routes to review contributes once for each of them. A reader opens such a narrative
once and answers every question it raises, so the staffing figure is the union of the flagged
sets, which is \ReviewNarrativesPerYear{} narratives per year, or \ReviewNarrativeShare\% of
the corpus.

These budgets answer how much review is needed to match the coded field rather than how much is
needed to be right, and the human reference shows the difference to be the dominant term. The
budget also has to be certified on labels that did not choose it, because the acceptance
threshold is selected from a grid and the best of many thresholds is not the value of one.
Table~\ref{tab:selective} therefore reports the budget twice, chosen on one half of the
narratives and measured on the other over \GoldBudgetNSplits{} splits, and chosen and measured
on the same half. Pooled, the held-out budget of Eq.~(\ref{eq:budget}) is \GoldBudgetNinety\%
of flagged records at \PrecTargetNinety\% precision against \GoldBudgetNinetyInSample\% in
sample, so the optimism of in-sample selection is \GoldBudgetOptimism{} percentage points at
this pooled level. The precision that budget actually delivers out of fold is
\GoldBudgetPrecHeldOut{}, with a lower confidence bound of \GoldBudgetPrecLCB{} computed at the
effective accepted count and therefore approximate.

The pooled budget is a mixture over variables rather than a policy, and Table~\ref{tab:gold}
gives the figure an agency would act on. Held out, the per-variable budgets run from
\GoldBudgetPerVarLo\% to \GoldBudgetPerVarHi\%, the largest being \GoldBudgetMaxVar{}, and
\GoldNZeroBudget{} of \GoldNVars{} variables already meet the \PrecTargetNinety\% target with
no review at all. The human-referenced budget is below the coded-field budget for all
\BudgetLowerN{} comparable variables, so the coded-field budgets of Table~\ref{tab:selective}
are an upper bound on human effort. The curve of Figure~\ref{fig:goldprec} in
Appendix~\ref{app:coders} pools the reference set across variables and is read as a summary of
that mixture rather than as an operating point.

One qualification applies to every accuracy and budget figure in this subsection. The
\ExclN{} pairs the coders left unclear or disagreed on are excluded from all of them, so each
is conditional on the pairs the coders found clear. Treating every excluded pair as a model
error lowers pooled $F_1$ to \ExclFoneError{} and raises the pooled budget to
\ExclBudgetError\%, treating them all as model-correct raises $F_1$ to \ExclFoneCorrect{}, and
routing them to review regardless of probability raises the budget to \ExclBudgetReview\%.
Those are the bounds, and Table~\ref{tab:exclusions} in Appendix~\ref{app:coders} reports the
exclusions by variable and by probability bin.

\begin{table}[pos=!htbp]
\centering
\rmfamily\small
\setlength{\tabcolsep}{3.0pt}
\caption{Selective prediction and the human-review budget (agreement).}
\label{tab:selective}
\begin{threeparttable}
\begin{tabular}{lrrr>{\raggedleft\arraybackslash}p{34.2pt}>{\raggedleft\arraybackslash}p{34.2pt}>{\raggedleft\arraybackslash}p{34.2pt}>{\raggedleft\arraybackslash}p{43.4pt}>{\raggedleft\arraybackslash}p{43.4pt}>{\raggedleft\arraybackslash}p{84.1pt}}
\toprule
\jevhead{variable} & \jevhead{AURC} & \jevhead{excess} & \jevhead{prec.} & \jevheadp{flagged \%} & \jevheadp{$H$(0.90) \%} & \jevheadp{$H$(0.95) \%} & \jevheadp{human $H$(0.90) \%\newline held out} & \jevheadp{human $H$(0.90) \%\newline in sample} & \jevheadp{review decisions/yr\newline at 90\% precision} \\
\midrule
alcohol\_involved & 0.0163 & 0.0159 & 0.712 & 3.635 & 100.0 & 100.0 & 5.1 & 3.9 & 20{,}262 \\
drug\_involved & 0.0018 & 0.0018 & 0.464 & 0.562 & 100.0 & 100.0 & 0.0 & 0.0 & 3{,}134 \\
fatigue & 0.0036 & 0.0035 & 0.653 & 1.399 & 100.0 & 100.0 & 0.1 & 0.0 & 7{,}798 \\
animal\_involved & 0.0014 & 0.0014 & 0.848 & 2.016 & 7.0 & 100.0 & 0.0 & 0.0 & 792 \\
phone\_use & 0.0025 & 0.0024 & 0.340 & 1.038 & 100.0 & 100.0 & 0.2 & 0.0 & 5{,}788 \\
unbelted & 0.0136 & 0.0132 & 0.209 & 0.596 & 100.0 & 100.0 & 63.6 & 71.0 & 3{,}323 \\
hydroplane & 0.0012 & 0.0011 & 0.719 & 2.352 & 100.0 & 100.0 & 0.0 & 0.0 & 13{,}114 \\
wrong\_way & 0.0016 & 0.0014 & 0.259 & 1.427 & 100.0 & 100.0 & 72.2 & 70.9 & 7{,}958 \\
medical\_episode & 0.0014 & 0.0013 & 0.398 & 1.055 & 100.0 & 100.0 & 14.4 & 18.9 & 5{,}880 \\
\bottomrule
\end{tabular}
\begin{tablenotes}[flushleft]\small\rmfamily
\item \textit{Note:}~Agreement against the incomplete coded fields rather than accuracy; see the note to Table~\ref{tab:calibration}. $H(\pi)$ is the share of \emph{flagged positives} a human must review to reach precision $\pi$; accuracy-based budgets are degenerate because the positive class is rare (Section~\ref{sec:res-accuracy}). The two human columns are measured against the human reference set. The first picks the threshold on one half of the narratives and certifies it on the other, and the second does both on the same half, so their difference is the optimism of in-sample selection. The last column counts \emph{variable-level review decisions} per year rather than narratives.
\end{tablenotes}
\end{threeparttable}
\end{table}

\begin{figure}[pos=!htbp]\centering
\includegraphics[width=\textwidth]{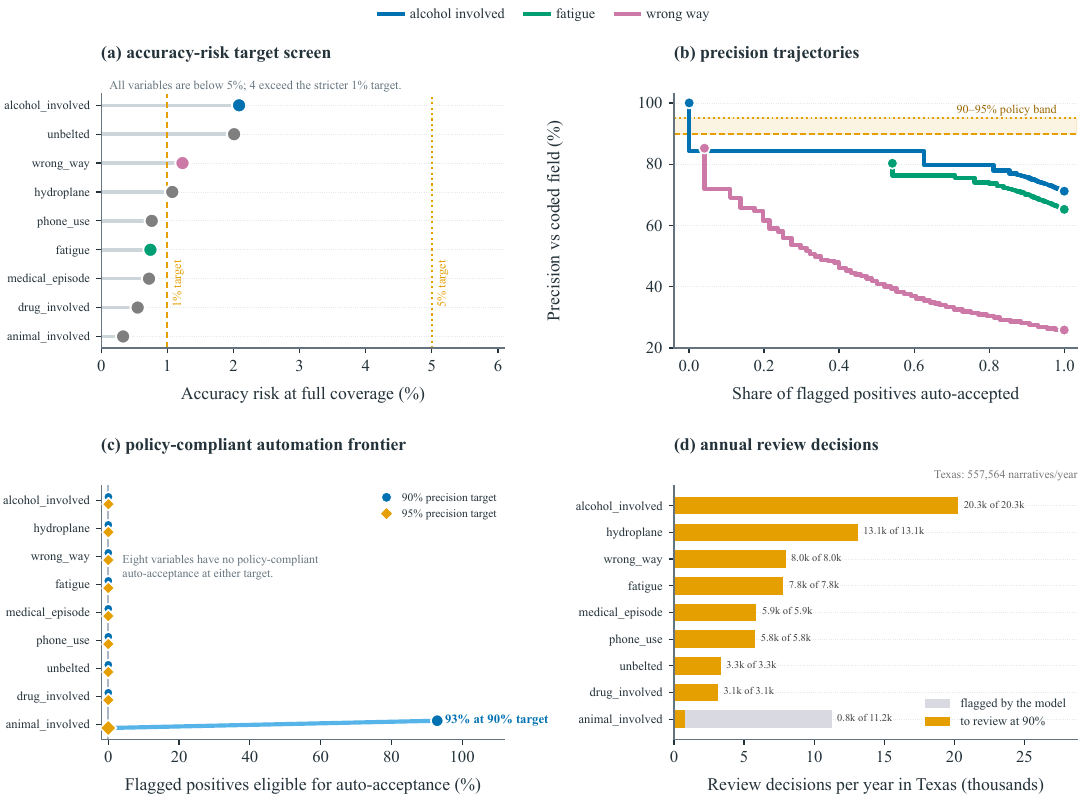}
\caption{Selective prediction against the coded reference. (a) Accuracy-based selective risk at
full coverage for each variable against the two conventional targets, with every variable
below the \RiskTargetFive\% line and four above the \RiskTargetOne\% line. (b) Precision
against the share of flagged positives auto-accepted for alcohol involvement (blue), fatigue
(green) and wrong-way travel (pink), with the \PrecTargetNinety{} to
\PrecTargetNinetyFive\% policy band shaded. The curves are steps, each interval taking the
precision of the lower attainable threshold, because any coverage inside it means accepting a
block of records tied at one grid value. No curve rests inside the band. (c) The share of
flagged positives a variable could auto-accept while holding each target, the complement of
the budget of Eq.~(\ref{eq:budget}), zero for eight of nine variables at the lower target and
for all nine at the higher. (d) The variable-level review decisions this implies for Texas
each year (orange) against the volume the model flags (gray). Precision here inherits the
omissions of the coded fields.}
\label{fig:riskcoverage}
\end{figure}

\subsection{Frontier comparison and value added over the coded fields}
\label{sec:res-frontier}

Two frontier generative models were scored on the same \CoderNPairs{} narrative-variable pairs
against the same human labels with the same weights and metric code
(Table~\ref{tab:frontier}). Calibration turns out to be a property of the individual model
rather than of the paradigm, and that is the result which generalizes. The first frontier arm
fails in the same direction as the typed model, rejected by the Spiegelhalter test at
$z = \FableZ$ and carrying a slope of \FableSlope{} against \GoldSlope{}, so both over-state
prevalence. The second behaves differently, with a slope of \GPTSlope{}, a statistic of $\GPTZ$
close to zero, and a calibration error of \GPTECE{} against \FableECE{} and \GoldECE{} for the
other two. It elicits only \GPTDistinctP{} distinct values against \FableDistinctP{} for the
first arm, so a coarse output grid is not by itself an obstacle to calibration. Calibration
cannot therefore be inferred from the vendor or the interface, and the best-calibrated arm is
not the most accurate one, which is why the audit cannot be skipped by buying a better
model.

On accuracy the frontier arms do not agree with each other. The first reaches an $F_1$ of
\FableFone{} against \GoldFone{} for the typed model, a paired difference of \DiffJevFable{}
with an interval of [\DiffJevFableLo{}, \DiffJevFableHi{}] over \PairedN{} judgments, which is a
material advantage. The second reaches \GPTFone{} and is not distinguishable from the typed
model, because its paired difference of \DiffJevGPT{} has an interval of
[\DiffJevGPTLo{}, \DiffJevGPTHi{}] that straddles zero, and it gets there by trading recall for
precision, \GPTRecall{} against \GPTPrecision{}. The two frontier models differ from each other
by \DiffFableGPT{}, more than either differs from the typed model, so the choice among frontier
models matters more than the choice between paradigms. Which variables a model fails on is
idiosyncratic. At least one frontier arm recovers \FrontierNRecovered{} of the typed model's
\GoldNWeak{} weak variables, the second arm falls below an $F_1$ of \FrontierWeakThreshold{} on
\GPTWeakVars{}, where the typed model is sound on \GPTNWeakJevSound{}, and the first is weak
only on \FableWeakVars{}. This is the attribute-specific pattern that
\citet{bharati2026benchmarking} report across six models, reproduced here against human labels
rather than coded fields.

Cost is reported separately from accuracy, because the two cannot be measured on the same arms
here. The benchmarked arms ran outside this pipeline and their outputs carry no token counts, so
no statement in this paper attaches a cost to either of them. What can be priced is a scenario,
applying the list prices of three commercially available tiers to the token counts this study
measured. A mid-tier model would cost \$\FrontierCostSonnet{} per thousand narratives against
\$\CostPerThousand{} for the typed model, a factor of \FrontierCostRatio{}, with the largest
tier at \FrontierCostRatioOpus{} and the smallest at \FrontierCostRatioHaiku{}. Projected to the
whole corpus that is about \$\CostWholeCorpusSonnet{} against \$\CostWholeCorpus{}, which
separates a routine analysis from one that does not happen. The scenario prices tiers rather
than the arms of Table~\ref{tab:frontier}, and the two must not be read as one curve.

\begin{table}[pos=!htbp]
\centering
\rmfamily\small
\setlength{\tabcolsep}{4.0pt}
\caption{Jev against frontier generative models on the same human reference labels.}
\label{tab:frontier}
\begin{threeparttable}
\begin{tabular}{lrrrrrrrrrrr}
\toprule
\multicolumn{2}{c}{} & \multicolumn{5}{c}{\jevhead{Agreement with human labels}} & \multicolumn{4}{c}{\jevhead{Calibration}} & \multicolumn{1}{c}{} \\
\cmidrule(lr){3-7}\cmidrule(lr){8-11}
\jevhead{model} & \jevhead{$n$} & \jevhead{prec.} & \jevhead{rec.} & \jevhead{$F_1$} & \jevhead{} & \jevhead{$\kappa$} & \jevhead{ECE} & \jevhead{Brier} & \jevhead{slope} & \jevhead{$z$} & \jevhead{prob.} \\
\midrule
Jev 1.13 & 2,416 & 0.902 & 0.915 & 0.908 & \jevbar{0.541}{0.459} & 0.906 & 0.0231 & 0.0045 & 1.63 & \jevflag{-5.3} & native \\
claude-fable-5-1 & 2,416 & 0.959 & \jevbest{0.976} & \jevbest{0.967} & \jevbar{0.837}{0.163} & \jevbest{0.966} & 0.0282 & \jevbest{0.0025} & 2.55 & \jevflag{-6.0} & elicited \\
gpt-5.6-sol & 2,416 & \jevbest{0.970} & 0.814 & 0.885 & \jevbar{0.427}{0.573} & 0.883 & \jevbest{0.0027} & 0.0056 & 0.98 & \jevflag{-0.3} & elicited \\
\bottomrule
\end{tabular}
\begin{tablenotes}[flushleft]\small\rmfamily
\item \textit{Note:}~All arms answer the same narratives and variables under the same criteria text, scored against the same labels with the same code. \jevbest{Bold} is the leader per metric and bars scale $F_1$ from 0.80. The last column states whether the model returns a probability natively or was asked to state one. Every $z$ is \jevflag{red} because every arm is rejected, and they fail in opposite directions, since a slope above one with $z<0$ over-states prevalence and a slope below one with $z>0$ under-states it. Cost is not measured here, and Figure~\ref{fig:frontier} prices a comparable tier at list rates. No arm is held out from the reference set.
\end{tablenotes}
\end{threeparttable}
\end{table}

Crash-level agreement with the coded fields runs from \Agreemin\% to \Agreemax\%, but at these
base rates agreement is dominated by shared negatives, so Cohen's kappa is the informative
summary (Table~\ref{tab:discrepancy}). It ranges from \Kappamin{} for \KappaminVar{} to
\Kappamax{} for \KappamaxVar{}, from slight to almost perfect on the Landis and Koch scale.
Figure~\ref{fig:agreement}c plots how often each source flags a factor the other does not. Seven
of the nine variables fall below the diagonal, so the narrative asserts these factors more often
than the coded fields record them, and alcohol involvement sits on it. Restraint use is the one
variable above, because the restraint field records an unbelted occupant in many crashes whose
narrative says nothing about seat belts. That pattern is what the study was built to detect, and
it is also why the coded fields cannot serve as an accuracy reference, since a narrative-only
flag may be a model error or a coded omission.

Which of the two it is can only be settled on the human reference set. Its narratives that the
model flags and the coded field records as negative are \NarrOnlyShare\% of all its flags,
\NarrOnlyN{} judgments over \NarrOnlyNVars{} variables. The coders confirm \NarrOnly\% of them,
with an interval of \NarrOnlyLo{} to \NarrOnlyHi\%, against \CodedConf\% where the coded field
agrees. Rather more than half of the narrative-only region is therefore a factor the narrative
states and the coded field does not record, which is a statement about what the two sources
contain rather than about what happened in the crash. That rate divides sharply by the model's
accuracy on the variable. Restricted to variables the model codes well it is \NarrOnlyStrong\%
on \NarrOnlyStrongN{} judgments, with an interval of \NarrOnlyStrongLo{} to
\NarrOnlyStrongHi\%, and restricted to the weak variables it is \NarrOnlyWeak\% on
\NarrOnlyWeakN{}, with an interval of \NarrOnlyWeakLo{} to \NarrOnlyWeakHi\%. Where the model is accurate a narrative-only flag is usually a factor the
coded field omits, and where it is weak the flag is usually the model, so the value-added
reading is licensed per variable by that variable's accuracy.

Against the same coded reference the typed model recovers more of the coded positives than
keyword rules do, with precision moving in both directions (Figure~\ref{fig:agreement}b,
Table~\ref{tab:baseline}). Recall rises for every variable, by the largest margins for
hydroplaning and wrong-way travel, while precision rises for animal involvement, phone use and
alcohol involvement and falls elsewhere, most for restraint use. The paired McNemar test of
Eq.~(\ref{eq:mcnemar}) rejects equal correctness for every variable except fatigue. Keyword
rules therefore remain a strong and nearly free baseline where the vocabulary is closed, which
matches the Arkansas finding of \citet{bharati2026benchmarking}, and the typed model's advantage
concentrates in variables whose expression is paraphrastic. Figure~\ref{fig:frontier} places the
two routes on the cost axis of the scenario above, with keyword rules at a macro-averaged $F_1$
near one half and the typed model somewhat above it at a price two orders of magnitude
higher.

\begin{table}[pos=!htbp]
\centering
\rmfamily\small
\setlength{\tabcolsep}{4.0pt}
\caption{Discrepancy taxonomy against coded fields (agreement).}
\label{tab:discrepancy}
\begin{threeparttable}
\begin{tabular}{lrrrrrrr}
\toprule
\jevhead{variable} & \jevhead{$n$} & \jevhead{agreement \%} & \jevhead{$\kappa$} & \jevhead{both} & \jevhead{narr.\ only} & \jevhead{code only} & \jevhead{neither} \\
\midrule
alcohol\_involved & 150,000 & 97.92 & 0.702 & 3,882 & 1,571 & 1,556 & 142,991 \\
drug\_involved & 150,000 & 99.45 & 0.484 & 391 & 452 & 374 & 148,783 \\
fatigue & 150,000 & 99.26 & 0.707 & 1,369 & 729 & 387 & 147,515 \\
animal\_involved & 150,000 & 99.67 & 0.910 & 2,565 & 459 & 37 & 146,939 \\
phone\_use & 150,000 & 99.24 & 0.477 & 529 & 1,028 & 118 & 148,325 \\
unbelted & 150,000 & 97.99 & 0.103 & 187 & 707 & 2,307 & 146,799 \\
hydroplane$^{\dagger}$ & 150,000 & 98.93 & 0.753 & 2,536 & 992 & 619 & 145,853 \\
wrong\_way & 150,000 & 98.77 & 0.371 & 554 & 1,587 & 256 & 147,603 \\
medical\_episode & 150,000 & 99.28 & 0.534 & 629 & 953 & 131 & 148,287 \\
\bottomrule
\end{tabular}
\begin{tablenotes}[flushleft]\small\rmfamily
\item \textit{Note:}~Agreement against the incomplete coded fields rather than accuracy; see the note to Table~\ref{tab:calibration}. The dagger marks a coded field that is a proxy rather than a direct match. ``Narrative only'' is not an error rate, because a factor stated in the narrative and absent from the coded field is what this study sets out to find.
\end{tablenotes}
\end{threeparttable}
\end{table}

\begin{table}[pos=!htbp]
\centering
\rmfamily\small
\setlength{\tabcolsep}{4.0pt}
\caption{Keyword baseline versus Jev on the same coded reference (agreement).}
\label{tab:baseline}
\begin{threeparttable}
\begin{tabular}{lrrrrrrr}
\toprule
\multicolumn{1}{c}{} & \multicolumn{3}{c}{\jevhead{Keyword rules}} & \multicolumn{3}{c}{\jevhead{Jev 1.13}} & \multicolumn{1}{c}{} \\
\cmidrule(lr){2-4}\cmidrule(lr){5-7}
\jevhead{variable} & \jevhead{prec.} & \jevhead{rec.} & \jevhead{$\kappa$} & \jevhead{prec.\ } & \jevhead{rec.\ } & \jevhead{$\kappa$\ } & \jevhead{McNemar $p$} \\
\midrule
medical\_episode & 0.425 & 0.642 & 0.509 & 0.398 & 0.828 & 0.534 & $<$0.001 \\
drug\_involved & 0.576 & 0.284 & 0.378 & 0.464 & 0.511 & 0.484 & $<$0.001 \\
hydroplane & 0.846 & 0.356 & 0.495 & 0.719 & 0.804 & 0.753 & $<$0.001 \\
fatigue & 0.675 & 0.701 & 0.684 & 0.653 & 0.780 & 0.707 & 0.960 \\
wrong\_way & 0.388 & 0.246 & 0.298 & 0.259 & 0.684 & 0.371 & $<$0.001 \\
animal\_involved & 0.751 & 0.860 & 0.798 & 0.848 & 0.986 & 0.910 & $<$0.001 \\
alcohol\_involved & 0.693 & 0.692 & 0.681 & 0.712 & 0.714 & 0.702 & $<$0.001 \\
phone\_use & 0.238 & 0.672 & 0.348 & 0.340 & 0.818 & 0.477 & $<$0.001 \\
unbelted & 0.589 & 0.034 & 0.063 & 0.209 & 0.075 & 0.103 & $<$0.001 \\
\bottomrule
\end{tabular}
\begin{tablenotes}[flushleft]\small\rmfamily
\item \textit{Note:}~Agreement against the incomplete coded fields rather than accuracy; see the note to Table~\ref{tab:calibration}. McNemar tests paired per-narrative correctness. The frontier arm is scored against human labels in Table~\ref{tab:frontier}.
\end{tablenotes}
\end{threeparttable}
\end{table}

\begin{figure}[pos=!htbp]\centering
\includegraphics[width=\textwidth]{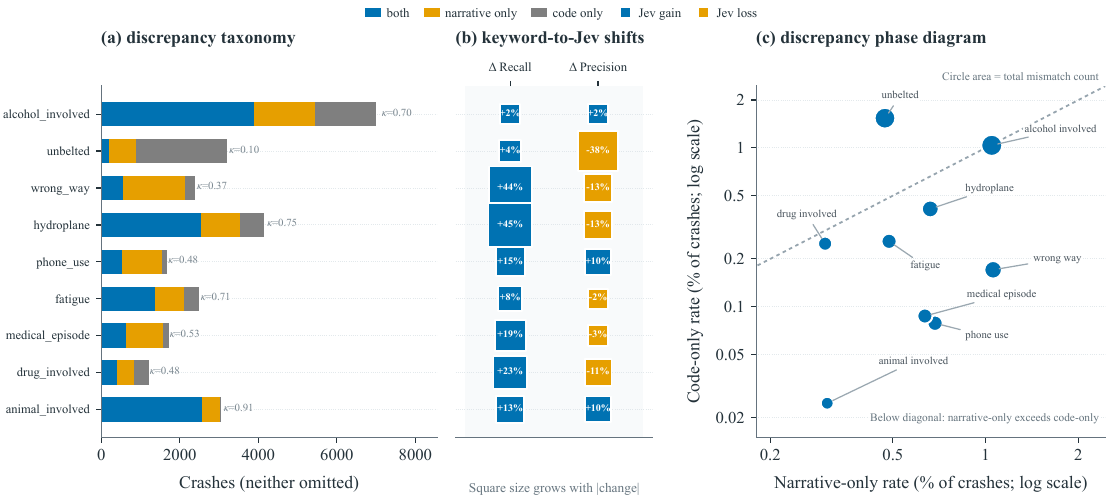}
\caption{Agreement with the coded fields. (a) Discrepancy taxonomy per variable, with crashes
flagged by both sources (blue), by the narrative only (orange) and by the code only (gray),
and Cohen's kappa beside each bar. (b) The change in recall and in precision against the same
coded reference when keyword rules are replaced by the typed model; the square grows with the
size of the change, gains in blue and losses in orange. (c) The rate at which each source
flags a factor the other does not, logarithmic on both axes, with the dashed diagonal marking
equal rates and marker area the total mismatch count. Narrative-only cases are candidates for
information the coded fields omit rather than errors.}
\label{fig:agreement}
\end{figure}

\begin{figure}[pos=!htbp]\centering
\includegraphics[width=0.85\textwidth]{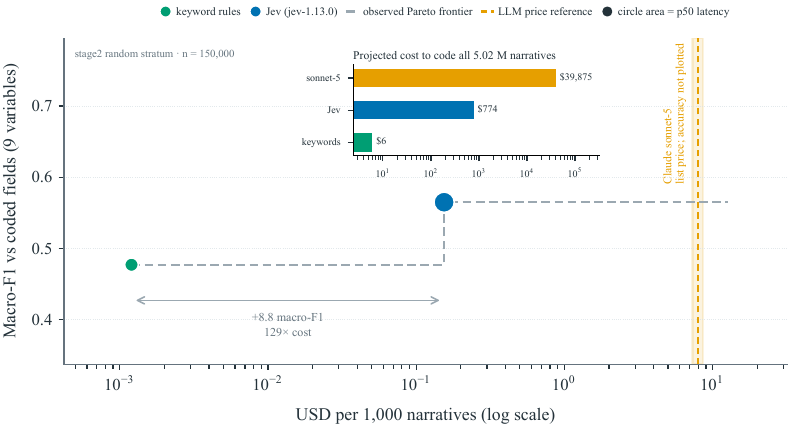}
\caption{Cost per thousand narratives against macro-averaged $F_1$ with respect to the coded
fields over the same nine variables, for keyword rules (green) and the typed model (blue),
with marker area proportional to median latency and the dashed step the observed Pareto
frontier. The vertical orange line is the list price of a mid-tier frontier model applied to
the measured token counts. Frontier-arm accuracy is not plotted, because it was measured
against the human reference set rather than the coded fields. The inset projects the cost of
coding the full corpus by each route, which none of them coded.}
\label{fig:frontier}
\end{figure}

\subsection{What changes in a safety diagnosis when narrative variables are added}
\label{sec:res-downstream}

The preceding subsections establish that the model reads narratives faithfully for most
variables and that the coded contributing-factor fields omit a good deal of what the narratives
state. Neither result says how much a transportation agency's picture of its own crashes would
change if the narrative variables were used alongside the coded fields, and that question is
the one a safety office actually asks. This subsection answers it on the Stage-2 random
stratum, which contains \DownNSeverity{} injury or fatal crashes of which \DownNFatal{} are
fatal, using only data already produced and no further model calls.

The quantity is the number of injury and fatal crashes that would be attributed to each factor
under each of two sources. Under the coded fields alone it is the count of such crashes whose
coded field records the factor. Under the two sources together it is that count plus the
expected number of crashes whose coded field is negative and whose narrative states the factor,
obtained by summing the recalibrated probability of Section~\ref{sec:res-calibration} over
those crashes. Summing a calibrated probability rather than counting model flags is what makes
the second term an estimate of a population total, because a calibrated probability already
carries the model's error rate inside it. The recalibration map is fitted per variable wherever
the reference set supports one. The interval on the second term comes from resampling
the reference narratives and refitting the map, since the map is estimated on
\GoldNNarratives{} narratives and the stratum holds \NstageTwoRandom{}, so the map is where the
uncertainty lives.

Table~\ref{tab:downstream} gives the result for all \DownNVars{} mapped factors, scaled to
Texas per year. The coded fields attribute \DownCodedTotal{} injury and fatal crashes per year
across the nine factors, and the narrative variables add \DownAddedTotal{} more. The relative
increase runs from \DownRelMin\% for \DownRelMinVar{} to \DownRelMax\% for \DownRelMaxVar{},
with a median of \DownRelMedian\%. The largest single addition in absolute terms is
\DownAddedMax{} crashes per year for \DownAddedMaxVar{}. The largest in relative terms is
\DownRelMaxVar{}, where the coded fields record \DownTopCoded{} injury and fatal crashes per
year and the narratives add \DownTopAdded{} more, with an interval of \DownTopAddedLo{} to
\DownTopAddedHi{}. A factor that the coded fields make the smallest of the nine therefore
becomes comparable to several of the others, which is a change in the diagnosis rather than
only in the counts. Ranking the nine factors by attributed crashes, \DownNRankMove{} of them
change place when the narrative variables are added.

The confirmation rates of Section~\ref{sec:res-frontier} decide how much of that change can be
believed, and they differ enough across factors that the answer is not uniform. For
\DownNConfirmHigh{} of the \DownNVars{} factors a coder confirmed at least
\DownConfirmHighThreshold{} of the narrative-only flags, and \DownConfirmHiVar{} is the
strongest at \DownConfirmHi{}, so for those the added count is evidence about the crashes rather
than about the model. For \DownNConfirmLow{} of them the coder confirmed fewer than half, and
\DownConfirmLoVar{} is the weakest at \DownConfirmLo{}, so most of what the model adds there is
model error and the increase is not a finding. The pattern is the one
Section~\ref{sec:res-accuracy} would predict, since the factors with low confirmation are the
factors the model codes poorly. A safety office can therefore adopt the narrative variables one
factor at a time, on evidence it already holds, rather than as a single decision. Read that
way, adding calibrated narrative variables to the coded fields raises the attributed burden of
the well-measured factors by between \DownRelMin\% and \DownRelMax\%. \DownRelMaxVar{} in
particular moves from a minor entry in the coded record to a factor of a size that would
ordinarily attract a countermeasure program.

\begin{table}[pos=!htbp]
\centering
\rmfamily\small
\setlength{\tabcolsep}{3.0pt}
\caption{Injury and fatal crashes attributed to each factor, from the coded fields alone and with calibrated narrative variables added.}
\label{tab:downstream}
\begin{threeparttable}
\begin{tabular}{lr>{\raggedleft\arraybackslash}p{32.8pt}>{\raggedleft\arraybackslash}p{68.2pt}>{\raggedleft\arraybackslash}p{51.4pt}>{\raggedleft\arraybackslash}p{40.9pt}>{\raggedleft\arraybackslash}p{78.1pt}}
\toprule
\jevhead{factor} & \jevhead{coded field} & \jevheadp{coded\newline per year} & \jevheadp{added by narratives\newline per year} & \jevheadp{95\% interval} & \jevheadp{increase \%} & \jevheadp{narrative-only flags\newline confirmed} \\
\midrule
alcohol\_involved & coded\_alcohol & 8{,}531 & 1{,}784 & [1{,}530, 2{,}649] & 21 & 0.71 \\
medical\_episode & coded\_medical & 1{,}918 & 1{,}644 & [1{,}371, 2{,}150] & 86 & 0.60 \\
phone\_use & coded\_phone & 732 & 1{,}334 & [1{,}091, 1{,}522] & 182 & 0.90 \\
wrong\_way & coded\_wrongway & 1{,}543 & 1{,}248 & [924, 2{,}791] & 81 & 0.29 \\
hydroplane$^{\dagger}$ & coded\_hydro\_proxy & 2{,}959 & 1{,}154 & [932, 1{,}437] & 39 & 0.88 \\
unbelted & coded\_unbelted & 6{,}364 & 1{,}049 & [800, 1{,}423] & 16 & 0.15 \\
drug\_involved & coded\_drug & 1{,}833 & 998 & [817, 1{,}317] & 54 & 0.52 \\
fatigue & coded\_fatigue & 2{,}476 & 910 & [792, 1{,}020] & 37 & 0.89 \\
animal\_involved & coded\_animal & 1{,}702 & 625 & [509, 750] & 37 & -- \\
\bottomrule
\end{tabular}
\begin{tablenotes}[flushleft]\small\rmfamily
\item \textit{Note:}~Counts are estimated on the Stage-2 random stratum and scaled to Texas per year. The added column is the expected number of injury or fatal crashes whose coded field is negative and whose narrative states the factor, summed over the recalibrated probability of Section~\ref{sec:res-calibration}, so it is a calibrated total rather than a count of flags. Intervals resample the reference narratives and refit the map. The last column is the share of narrative-only flags a coder confirmed, from Table~\ref{tab:gold}, which disciplines the added column rather than feeding it. The dagger marks a coded field that is a proxy rather than a direct match.
\end{tablenotes}
\end{threeparttable}
\end{table}

%% =====================================================================================
\section{Discussion}
\label{sec:discussion}

The central answer of this study is that adding calibrated narrative variables to a state's
coded contributing-factor fields changes the safety diagnosis those fields support. Across the
\DownNVars{} factors that carry a coded counterpart, the narratives add \DownAddedTotal{}
injury and fatal crashes per year to the \DownCodedTotal{} the coded fields record, a median
increase of \DownRelMedian\%, and \DownNRankMove{} of the nine factors change rank. The
strongest single case is \DownRelMaxVar{}, which the coded fields make the smallest of the nine
and which the narratives raise by \DownRelMax\%. That result rests on two prior ones that this paper also establishes. A typed decision model
can code narratives at a cost low enough for a state to run the schema over a population rather
than a sample. Its probabilities are usable for ranking but not for counting until they are
recalibrated against a few thousand human judgments. Once that
recalibration is in hand, the human review a narrative-derived variable needs can be stated per
variable and per year, which is what makes the added counts actionable rather than merely
interesting. No comparable study has priced that change against the coded record. Safety
diagnoses built on coded contributing factors alone, of the kind the crash data-quality
literature describes \citep{imprialou2019quality}, therefore understate the factors whose
evidence sits mainly in prose, and the size of the understatement is measurable rather than
presumed. The closest prior study is the Arkansas
benchmark of \citet{bharati2026benchmarking}, which evaluated six frontier generative models
on some four thousand matched fatal crashes over six attributes the database already codes. That
study found a keyword baseline comparable to the best model, with differences across
attributes larger than differences across models. The present results reproduce both
findings on a different corpus and a different model class. Keyword rules remain a strong and
nearly free baseline for closed-vocabulary variables, and the spread of kappa across
variables against the coded fields is wider than any difference between methods. What this
study adds is the layer that agreement measures cannot supply. It measures the probabilities
rather than the labels, and it quantifies how much the administrative reference understates the
model's fidelity to the narrative. It then converts the result into a review budget defined over the records an agency
would read, which is the operational form of the human review that the Arkansas authors
recommended. The paragraphs that follow take each of the study's findings in turn and place
it against the closest published work.

The calibration finding sits between two literatures that have not been connected. Work on verbalized confidence in generative models reports over-confidence \citep{xiong2024express} and shows that calibration depends on the elicitation format \citep{kadavath2022know,tian2023just}. Neither of the frontier arms audited
here behaved in that way. The second arm elicited its probabilities on only \GPTDistinctP{}
distinct values and was nonetheless the best calibrated of the three, at a slope of \GPTSlope{} and a
statistic of $\GPTZ$, which says that a coarse verbalized scale is not by itself a source of
over-confidence. The typed model and the first frontier arm failed in the other direction
instead, with slopes above one and a level too high. That is the pattern the clinical prediction-model literature describes as a failure
of calibration in the large that survives a reasonable slope
\citep{van2016a,steyerberg2010assessing}. The hierarchy of \citet{van2016a} places mean calibration first, then weak calibration, which
requires an intercept of zero and a slope of one together, then moderate and strong
calibration. The typed model fails at the first level, with a mean probability of
\GoldExpectedP\% against a prevalence of \GoldBaseRate\%, and it fails at the second as well,
because its slope is \GoldSlope{} rather than one. Describing the remedy as a one-parameter
correction would therefore be wrong. The Platt map of Eq.~(\ref{eq:platt}) fits an intercept
and a slope, which is what a weak-calibration failure requires, and after it the pooled slope
is \RecalPlattSlope{}. The result also extends the observation of \citet{zhou2024reliable} that larger and more
instructable models become less reliable, because the more accurate frontier arm was the one
rejected on calibration. Its slope of \FableSlope{} lies further from one than the typed
model's, even though its accuracy is higher. The arm that is best calibrated here, at a slope
of \GPTSlope{} and a calibration error of \GPTECE{}, is neither the most accurate arm nor the
one with the finest output grid. Calibration is therefore a property of the model and the
elicitation together rather than of size, price or interface, and an audit against an external
reference is the only way to learn both its presence and its direction.

The reference-noise finding connects the injury-surveillance auto-coding line with the crash
data-quality literature. The injury-surveillance auto-coding line classified narratives into cause codes with Bayesian and other learners \citep{lehto2009bayesian,bertke2016comparison} and routed low-confidence records to human review \citep{marucciwellman2015practical,marucciwellm2017classifyin},
and the rare-category study of \citet{nanda2018improving} found that filtering rather than
more training data rescued the rare classes. The present study inverts the direction of comparison, scoring the narrative-derived
probability against the coded field and then against human judgment of the narrative. The
result is not the one the design anticipated, and it is worth stating plainly. The two
references give almost the same calibration error, a median ratio of \ECEGapMedian{} with a
range of \ECEGapMin{} to \ECEGapMax{}, so a coded reference does not flatter the model's
probabilities. What an incomplete reference penalizes instead is the agreement statistic.
Cohen's kappa against the coded field falls below the weighted kappa against human judgment for
all
\KappaGapNPositive{} comparable variables, by a median of \KappaGapMedian{}, and
\KappaGapMaxVar{} reads as \KappaGapMaxCoded{} against the coded record and
\KappaGapMaxHuman{} against human judgment of the same narratives. The mechanism is omission in
the coded fields, whose incompleteness the crash data-quality literature lists among its most
serious problems \citep{imprialou2019quality}, and which this study measures directly as the
excess of narrative-only over code-only discrepancies. Under-reporting studies have measured
the same omission at the level of whole crashes and injured people
\citep{elvik1999incomplete,alsop2001underreporting,watson2015underreporting}.
Lemma~\ref{lem:noise} gives that structure its formal statement, and its empirical content
here is narrower than the lemma allows. Omission moves the agreement statistic rather than the
calibration statistic, because a probability close to the rate at which a factor is stated
stays close to it whether or not a clerk recorded the factor. \citet{arteaga2025a} used a language model to expose under-reporting from the same
kind of narrative evidence. The confirmed rate of \NarrOnly\% in the narrative-only cell
reported here measures how much of that apparent under-reporting is a factor the coded field
omits rather than a model error. Its split by variable accuracy shows that the answer depends
on the model as much as on the field, and Section~\ref{sec:res-downstream} converts it into
crashes per year.

The resolution floor is a general property of any model that reports probabilities on a
discrete grid, and the calibration literature has not treated it because that literature is
written for continuous outputs. Reliability diagrams and their binning have received close
attention, from the consistency bands of \citet{brocker2007diagrams} to the binning-free
constructions of \citet{dimitriadis2021stable} and the regression view of
\citet{gneiting2023regression}. \citet{kumar2019verified} showed that verifiable calibration
error requires a discretized output, and a model that discretizes its own output supplies that
property for free. The exact error of Eq.~(\ref{eq:ecegrid}) therefore needs no binning
parameter, and equal-mass binning of the kind recommended by \citet{nixon2019measuring} stops
working because the mass concentrates on a handful of attainable values.
Proposition~\ref{prop:floor} adds the constraint that a discrete grid imposes. Wherever the
smallest positive value exceeds a variable's base rate, no assignment of grid values is
calibrated and the mean probability over-states prevalence by at least the gap. That check
takes one line, and any analyst using such a model should run it before trusting summed
probabilities. The floor was invisible in the developer's documentation, and it was found only
because the audit inspected the distribution of returned values rather than their accuracy
alone.

The review budget over flagged records connects selective prediction to rare-event screening.
The risk-coverage machinery of \citet{elyaniv2010foundations} and \citet{geifman2017selective}
assumes that abstention trades coverage against error, and on a balanced problem it does. On
a variable with a base rate of a few percent, a model that answers everything already has an
error rate below conventional targets, purely because the negative class dominates. The
analysis then certifies a system nobody should trust, which is the objection the
precision-recall literature raised against error rates on imbalanced data
\citep{davis2006prroc,saito2015prplot}. Pointing the construction at the decision actually
being made, which is whether to read a flagged record, restores the trade-off. Nobody reviews
the records a model calls negative, so defining coverage over flagged positives and risk as
one minus precision produces a number an agency can staff against. The difference between the
coded-field budget, which saturates for eight of nine variables, and the human-referenced
budget of \GoldBudgetNinety\% is the reference effect seen in the agreement statistics, now
expressed in hours. A budget selected and certified on the same labels is optimistic, which is
why the value reported here is cross-fitted and carries a lower bound on the precision it
delivers. That bound is a finite-sample statement about this reference set rather than a
guarantee. The distribution-free risk-control literature supplies guarantees of the stronger
kind, with risk-controlling prediction sets \citep{bates2021distribution}, conformal risk
control \citep{angelopoulos2024conformal} and the Learn then Test framework for selecting
among many candidate thresholds \citep{angelopoulos2025learn}. Applying that machinery here
would require an exchangeable calibration sample and a fixed error level chosen in advance,
and it would return a threshold whose risk is controlled with high probability rather than an
estimate with an interval. Section~\ref{sec:res-accuracy} reports the latter, and the
construction is not specific to crash narratives, since it applies to any screening task in
which the positive class is rare and the cost is concentrated in confirming positives.

The schema and the cost model transfer to other states more readily than the reference
mapping does. The schema asks about the narrative rather than about Texas, so the same
questions can be posed to any state's narrative field, with the criteria text serving as the
complete specification of each variable for a model, a coder or a reviewer alike. The cost
model transfers more strongly still, because it is dominated by schema size rather than by
narrative length. The per-narrative cost of this schema is therefore approximately the same
in any corpus of comparable prose length, and the break-even fraction of
Eq.~(\ref{eq:twostage}) is a property of the schema. What does not transfer is the coded-field
reference. Every state codes contributing factors differently, and the mapping of
Table~\ref{tab:coded-map} had to be corrected twice against the actual value sets before it
was usable. The one-sided error that Lemma~\ref{lem:noise} describes will also have a
different rate in every field of every state. An agency adopting the pipeline should
therefore expect to reuse the schema and to refit the recalibration map on its own human
judgments. It should also expect to rebuild the reference mapping from its own data
dictionary, and the released code is organized so that those three steps are separable.

Transfer is a hypothesis here rather than a finding, because this study covers one state, one
model version and one schema, and it is stated with the conditions under which it would fail.
The cost model would fail where narratives are much longer than the \MedianNarrativeChars{}
characters of the median here, since the text term would then stop being small beside the
schema term. The schema would fail where officers write to a different convention, because the
criteria text was written against Texas shorthand and a construct such as an intentional act is
recognized through the phrases a reporting culture happens to use. The recalibration map would
fail wherever the prevalence differs, since a map fitted at one base rate moves the
probabilities toward that rate and not toward another. The accuracy results would fail on any
later model version, because the audit measures a model and a schema together rather than a
capability. Each of these is testable in one state with a few hundred labeled narratives, which
is the cost this study reports, and until that test is run the transfer claim should be read as
a design expectation.

%% =====================================================================================
\section{Conclusion}
\label{sec:conclusion}

This study set out to answer two questions that crash-narrative research has left to the
reader, namely how much human checking a narrative-derived variable needs and what changes in a
safety diagnosis when such variables are added to the coded record. Answering them required
treating narrative coding as a set of typed, gated decisions with probabilities attached and
auditing the probabilities against an external reference instead of assuming them. It also
required running that decomposition at scale, with a first-stage screen over \StageOneCoded{}
narratives and full coding of \NstageTwo{} rather than a few thousand records. The typed model
coded those \NstageTwo{} Texas narratives with a \NQuestions{}-question schema at
\$\CostPerThousand{} per thousand, which projects to \$\CostWholeCorpus{} for the full corpus
of \Ncorpus{} narratives at the same rate. The cost model showed that spend is governed by the
size of the question schema rather than by the length of the text, which makes batching
questions the dominant efficiency lever and sets a break-even fraction of \BreakEvenFraction{}
below which two-stage screening pays. The model returns probabilities
on a two-decimal grid whose floor exceeds the coded-field base rate of \NbelowFloor{} target
variables,
so for those variables no allocation of grid values can be calibrated and the mean
probability is not a prevalence estimate. And risk-coverage analysis imported unchanged from
balanced settings certifies rare-event screeners as needing no review, so the budget has to
be defined over the records the model flags.

Measured against human judgment of the same criteria, the model reaches an $F_1$ of
\GoldFone{} and a weighted kappa of \GoldKappa{} over \GoldNUsable{} usable consensus labels,
but its probabilities are not calibrated. The Spiegelhalter statistic is \GoldZ{} and the mean
probability lies above the prevalence, so the model fails calibration in the large, and its
slope of \GoldSlope{} means it fails weak calibration as well. The two references agree on the
calibration error, at a median ratio of \ECEGapMedian{}, and they disagree on agreement, where
the coded field reads lower than human judgment for every comparable variable by a median of
\KappaGapMedian{} in kappa. The coded reference is incomplete in one direction, and what that
incompleteness costs is the appearance of agreement rather than the appearance of calibration.
The same \GoldNUsable{} judgments then recalibrate the model to an out-of-fold calibration
error of \RecalIsoECE{} with a slope of \RecalPlattSlope{}, which is the operationally
important result. For a few thousand judgments a safety office obtains probabilities it can
sum, threshold and budget against over a corpus it could never label by hand. Adding those
probabilities to the coded contributing-factor fields raises the injury and fatal crashes
attributed to the nine mapped factors by \DownAddedTotal{} per year and moves
\DownNRankMove{} of them in rank, which is the answer to the second question. The frontier
comparison showed that two generative models from different vendors do not agree with each
other about how much better than the typed model they are. It also showed that calibration is a
property of the individual model rather than of the paradigm, so the transferable part of this
work is the audit rather than the model. The contributions are therefore a gated typed
decomposition whose design rules were measured, a large-scale run with a measured cost model,
and an independent calibration and selective-prediction audit against two references. To these
the paper adds a review budget defined over flagged records and certified out of fold, a
downstream measurement of what calibrated narrative variables change in a safety diagnosis, and
the released schema, code and outputs.

\limitationsparagraph The study has boundaries that the reader should hold in mind, and this
paragraph states them. The model is closed and commercial, its training data are undisclosed, and its
behavior may change without notice. The pinned identifier and the recorded per-call
identifier make such a change detectable but not preventable, so a reader cannot rerun the
analysis if the vendor withdraws the model, and audits of commercial systems should be
treated as perishable. The coded CRIS fields used as the large reference are incomplete, so
every statistic computed against them is an agreement measure bounded by the reference's own
error. The accuracy claims rest on a human reference set that is smaller by three orders of
magnitude and that is defined only on cases the coders found clear, because disputes were
excluded rather than adjudicated. Its pairwise reliability is estimated on \CoderNCalib{}
triple-coded pairs, and its stratification on the model's probability rather than on the
positive class leaves the rarest variables with a handful of positive labels, as few as
\ThreshWeakNPosMin{} for \ThreshWeakNThin{} of the \GoldNWeak{} that fail. Per-variable
precision, any threshold fitted to it, and the calibration error at the floor stratum are
correspondingly imprecise for those variables, and \RecalPooledOnlyN{} variables carry too few
positives to support a recalibration map of their own. The weights that make the reference set
speak for the population are calibration weights rather than known inclusion probabilities,
because the frame was drawn by a greedy cell-filling rule and then screened. Their validity
therefore rests on labeling being uninformative about the outcome within a cell. The recalibration maps
were fitted on \CoderN{} coders' judgment of Texas narratives and should be refitted rather
than transported. The downstream counts of Section~\ref{sec:res-downstream} inherit every one
of these boundaries, and for the \DownNConfirmLow{} factors whose narrative-only flags the
coders confirmed less than half the time they should be read as an upper bound rather than as
an estimate. The frontier arms ran with more context and more computation per judgment than the
typed model, and their cost was not metered at all, so no cost in this paper describes them and
the price comparison is a scenario over commercially listed tiers. The analysis covers one state, narratives are written by officers
with agency-specific conventions and are themselves incomplete records of the crash, and
residual identifiers survive the data owner's de-identification pass at the rates reported in
Section~\ref{sec:corpus}, which is why no narrative is reproduced here.

Four lines of work follow from those boundaries. The first is a case-by-case adjudication of
the discrepancies between narrative-derived and coded values, sampled on the discrepancies
themselves rather than on a probability-stratified frame. That design would narrow the wide
intervals on the narrative-only cell and attribute each discrepancy to the narrative, the
code or the model. The second is a revised schema for the
\GoldNWeak{} variables that failed, namely \GoldWeakVars{}, with their criteria rewritten after
error analysis. The revision would be scored against a fresh reference
set drawn against the revised schema, so that the repair is validated on labels that did not
select it. The third is replication in other states, which would test whether the schema and the cost
model transfer as Section~\ref{sec:discussion} predicts. The fourth is a repeated audit across model versions, with the same reference set
and the same metric code. It is the only way to learn whether the calibration and the review
budgets reported here are properties of the model or of one release, and the released code is
designed to make it routine.

%% Appendices: placed directly after the Conclusion and before the declarations and references.
\appendix
\renewcommand{\thetable}{\thesection.\arabic{table}}
\renewcommand{\thefigure}{\thesection.\arabic{figure}}
\renewcommand{\theequation}{\thesection.\arabic{equation}}

% an unnumbered heading, so a reader can see where the appendices begin
\section*{Appendices}

\section{The question schema}
\label{app:schema}
\setcounter{table}{0}\setcounter{figure}{0}\setcounter{equation}{0}

Table~\ref{tab:schema-full} reproduces every question in the schema as it was sent to the
model, with its type, its gate where it has one, and the complete instruction and criteria
text. The text is the entire specification of each variable, because the model sees nothing
else, and the human coders of Section~\ref{sec:reference} and the frontier models of
Section~\ref{sec:res-frontier} received the same text verbatim. The schema was produced in two steps that the released files record. A first version was written from the design rules of Section~\ref{sec:schema} and
run on a development sample of \SchemaValidateCompared{} narratives, and the error analysis of
that run produced the exclusion clause of the aggression question, whose effect is reported
in Section~\ref{sec:schema}. The file reproduced here is the second version, which is the one
used for the second-stage run, the reference set and the frontier arms, and it is released
with the code together with the lean first-stage schema of \NLeanQuestions{} presence
questions. Instructions refer to the narrative by its key in the state object, and criteria
for a presence question name the boundary cases that count and do not count, which is where
most of the design effort went. The residual-identifier question is included in the file
because it ships with the schema, but it was run as a separate display screen and was not part
of the population coding.

\begin{small}\rmfamily
\setlength{\tabcolsep}{3.0pt}
\begin{longtable}{lrr>{\raggedright\arraybackslash}p{267.2pt}}
\caption{The full question schema, version 1.1, as sent to the model.}\label{tab:schema-full}\\
\toprule
\jevhead{id} & \jevhead{type} & \jevhead{gate} & \jevheadp{instruction and criteria} \\
\midrule
\endfirsthead
\multicolumn{4}{@{}l}{\rmfamily\small\textit{Table \ref{tab:schema-full}, continued}}\\
\toprule
\jevhead{id} & \jevhead{type} & \jevhead{gate} & \jevheadp{instruction and criteria} \\
\midrule
\endhead
\bottomrule
\multicolumn{4}{@{}r}{\rmfamily\small\textit{continued on the next page}}\\
\endfoot
\bottomrule
\multicolumn{4}{@{}p{\linewidth}@{}}{\rmfamily\small\vspace{2pt}\textit{Note:}~Text is reproduced from \texttt{schemas/crash\_factors\_v1\_1.json} without abbreviation. A gated question is interpreted only when its gate exceeds $\tau_g = 0.5$, otherwise it takes its no-match option. \texttt{pii\_residual} ships in the same file but was run separately as the display screen, outside the Stage 1 and Stage 2 calls.}\\
\endlastfoot
\texttt{preg\_mentioned} & noul & -- & The 'narrative' states that a person involved in the crash was pregnant. \newline \textit{true}: Any explicit statement that a driver, passenger, pedestrian, or other involved person is pregnant, e.g. 'was 6 months pregnant', 'pregnant driver', 'expecting'. \newline \textit{false}: No pregnancy is stated. Mentions of children, infants, or car seats do not count. \\
\texttt{medical\_episode} & noul & -- & The 'narrative' states that a driver experienced a medical event (seizure, fainting, blackout, diabetic episode, heart attack, stroke, or similar) before or during the crash. \newline \textit{true}: A medical event of the driver is described as occurring before or during the crash, whether confirmed or suspected by the officer. \newline \textit{false}: No medical event of the driver is described. Injuries caused by the crash, or transport to a hospital for crash injuries, do not count. \\
\texttt{drug\_involved} & noul & -- & The 'narrative' states or suspects that a driver was under the influence of drugs or medication (not alcohol). \newline \textit{true}: Any statement that a driver used, was impaired by, or was suspected of using illegal drugs, marijuana, prescription medication, or other substances other than alcohol. \newline \textit{false}: No drug or medication involvement of a driver is mentioned. Alcohol-only impairment does not count. \\
\texttt{alcohol\_involved} & noul & -- & The 'narrative' states or suspects that a driver had been drinking alcohol or was intoxicated by alcohol. \newline \textit{true}: Any statement that a driver had been drinking, smelled of alcohol, was DWI or DUI, or was suspected of alcohol intoxication. \newline \textit{false}: No alcohol involvement of a driver is mentioned. \\
\texttt{hydroplane} & noul & -- & The 'narrative' states that a vehicle hydroplaned or lost traction because of water on the roadway. \newline \textit{true}: The word hydroplane appears, or the narrative says a vehicle lost traction, skidded, or lost control because of rain, wet roadway, or standing water. \newline \textit{false}: No loss of traction due to water is described. Wet conditions mentioned without loss of control do not count. \\
\texttt{vehicle\_defect} & noul & -- & The 'narrative' states that a mechanical failure of a vehicle (tire blowout, brake failure, steering failure, or similar) contributed to the crash. \newline \textit{true}: A tire blowout, tread separation, brake failure, steering failure, stuck accelerator, or other mechanical defect is described as contributing to the crash. \newline \textit{false}: No mechanical failure is described. Damage caused by the crash does not count. \\
\texttt{fatigue} & noul & -- & The 'narrative' states that a driver fell asleep, dozed off, was drowsy, or was fatigued. \newline \textit{true}: A driver is described as asleep, falling asleep, nodding off, drowsy, tired, or fatigued before the crash. \newline \textit{false}: No sleep or fatigue of a driver is mentioned. \\
\texttt{aggression} & noul & -- & The 'narrative' describes road rage, aggressive driving directed at another road user, or a physical or verbal confrontation between people involved. \newline \textit{true}: Road rage, brake-checking, chasing, threatening, arguing, fighting, or a confrontation before or after the crash is described. \newline \textit{false}: No aggression or confrontation is described. Ordinary traffic violations without hostility do not count. A police pursuit, a traffic stop, an officer chasing or stopping a vehicle, and a driver evading or fleeing from police do not count. A driver leaving the scene of the crash does not count by itself. \\
\texttt{intentional} & noul & -- & The 'narrative' states or suspects that a vehicle was used intentionally to cause the crash (assault with a vehicle, deliberate ramming, or a suicide attempt). \newline \textit{true}: The crash is described as deliberate, intentional, on purpose, an assault with a vehicle, or a suicide attempt. \newline \textit{false}: The crash is described as unintentional, or intent is not mentioned. \\
\texttt{hit\_and\_run} & noul & -- & The 'narrative' states that a driver left the scene of the crash without stopping to provide information. \newline \textit{true}: A driver fled, left the scene, failed to stop and render aid, or drove away without exchanging information. \newline \textit{false}: All drivers remained at the scene, or leaving the scene is not mentioned. \\
\texttt{animal\_involved} & noul & -- & The 'narrative' states that an animal on or near the roadway was involved in the crash or in the driver's evasive action. \newline \textit{true}: A vehicle struck an animal, or swerved or braked to avoid an animal. \newline \textit{false}: No animal is mentioned as involved. \\
\texttt{wrong\_way} & noul & -- & The 'narrative' states that a vehicle was traveling the wrong way on a roadway, ramp, or lane designated for the opposite direction. \newline \textit{true}: A vehicle is described as going the wrong way, wrong direction, or against traffic on a one-way road, divided highway, or ramp. \newline \textit{false}: No wrong-way travel is described. Crossing the center line during a loss of control does not count. \\
\texttt{phone\_use} & noul & -- & The 'narrative' states that a driver was using a phone or mobile device at the time of the crash. \newline \textit{true}: A driver is described as texting, talking on a phone, looking at a phone, or using a mobile device. \newline \textit{false}: No phone or device use is mentioned. General 'distracted' or 'inattention' without a device does not count. \\
\texttt{unbelted} & noul & -- & The 'narrative' states that a vehicle occupant was not wearing a seat belt. \newline \textit{true}: An occupant is described as unrestrained, not wearing a seat belt, or unbelted. \newline \textit{false}: Seat belt use is not mentioned, or occupants are described as belted. \\
\texttt{transported} & noul & -- & The 'narrative' states that a person was transported to a hospital or medical facility. \newline \textit{true}: A person was transported by EMS, ambulance, helicopter, or private vehicle to a hospital, medical center, or clinic. \newline \textit{false}: No transport to a medical facility is mentioned, or people refused transport. \\
\texttt{witness\_cited} & noul & -- & The 'narrative' cites a statement from a witness who was not a driver or occupant of the involved vehicles. \newline \textit{true}: A witness, bystander, or independent observer's statement is described. \newline \textit{false}: Only driver, passenger, or officer statements appear, or no statements appear. \\
\texttt{medical\_type} & choice & \texttt{medical\_episode} & Which type of driver medical event does the 'narrative' describe as occurring before or during the crash? \newline \textit{seizure}: Seizure or epileptic episode. \newline \textit{loss\_of\_consciousness}: Fainting, passing out, blacking out, syncope, or becoming unresponsive without a stated cause. \newline \textit{diabetic}: Diabetic episode, low blood sugar, hypoglycemia, or insulin-related event. \newline \textit{cardiac}: Heart attack, cardiac arrest, chest pain, or other heart event. \newline \textit{stroke}: Stroke or stroke-like symptoms. \newline \textit{other\_medical}: Another medical condition or episode of the driver, such as dementia, dizziness, or an unspecified medical emergency. \newline \textit{none}: No driver medical event is described. \\
\texttt{drug\_class} & choice & \texttt{drug\_involved} & Which category of drug or medication does the 'narrative' associate with a driver? Choose based only on what the narrative names. \newline \textit{marijuana}: Marijuana, THC, cannabis, or weed. \newline \textit{stimulant}: Methamphetamine, cocaine, amphetamines, or other stimulants. \newline \textit{opioid}: Heroin, fentanyl, oxycodone, hydrocodone, or other opioids. \newline \textit{sedative\_prescription}: Xanax, benzodiazepines, sleeping pills, muscle relaxers, or other prescription sedatives. \newline \textit{unspecified\_drug}: Drugs, narcotics, a controlled substance, or medication mentioned without naming the type. \newline \textit{none}: No drug or medication is mentioned. \\
\texttt{preg\_role} & choice & \texttt{preg\_mentioned} & What was the role of the pregnant person in the crash according to the 'narrative'? \newline \textit{driver}: The pregnant person was driving one of the vehicles. \newline \textit{passenger}: The pregnant person was a passenger in one of the vehicles. \newline \textit{pedestrian\_or\_other}: The pregnant person was a pedestrian, cyclist, or otherwise not inside a vehicle. \newline \textit{no\_pregnancy}: The narrative does not state that anyone was pregnant. \\
\texttt{preg\_outcome} & choice & \texttt{preg\_mentioned} & What does the 'narrative' state about the condition of the pregnant person after the crash? \newline \textit{no\_complaint}: The pregnant person reported no injury or the narrative gives no complaint. \newline \textit{pain\_or\_evaluation}: The pregnant person complained of pain, tightness, or discomfort, or was checked by EMS but not transported. \newline \textit{transported}: The pregnant person was transported to a hospital or medical facility as a precaution or for injury. \newline \textit{fetal\_harm}: The narrative states harm to, or loss of, the unborn child. \newline \textit{no\_pregnancy}: The narrative does not state that anyone was pregnant. \\
\texttt{preg\_stage} & choice & \texttt{preg\_mentioned} & What stage of pregnancy does the 'narrative' state for the pregnant person? \newline \textit{early}: Up to 13 weeks, or 1 to 3 months pregnant, or described as first trimester or early pregnancy. \newline \textit{mid}: 14 to 27 weeks, or 4 to 6 months pregnant, or described as second trimester. \newline \textit{late}: 28 weeks or more, or 7 to 9 months pregnant, or described as third trimester, due soon, or full term. \newline \textit{stage\_not\_stated}: Pregnancy is stated but no weeks, months, or trimester is given. \newline \textit{no\_pregnancy}: The narrative does not state that anyone was pregnant. \\
\texttt{animal\_type} & choice & \texttt{animal\_involved} & Which kind of animal does the 'narrative' describe as involved in the crash? \newline \textit{deer}: Deer. \newline \textit{dog}: Dog. \newline \textit{livestock}: Cow, cattle, horse, goat, or other livestock. \newline \textit{feral\_hog}: Hog, feral hog, wild boar, or pig. \newline \textit{other\_animal}: Any other animal, such as coyote, cat, or an unnamed animal. \newline \textit{none}: No animal is described as involved. \\
\texttt{defect\_type} & choice & \texttt{vehicle\_defect} & Which kind of vehicle mechanical failure does the 'narrative' describe as contributing to the crash? \newline \textit{tire}: Tire blowout, flat tire, or tread separation. \newline \textit{brakes}: Brake failure or brakes not working. \newline \textit{steering\_or\_suspension}: Steering failure, wheel came off, axle or suspension failure. \newline \textit{other\_mechanical}: Another mechanical failure such as engine stall, stuck accelerator, or trailer hitch failure. \newline \textit{none}: No mechanical failure is described. \\
\texttt{surface\_narr} & choice & -- & What roadway surface condition does the 'narrative' state at the time of the crash? \newline \textit{dry}: The roadway is described as dry. \newline \textit{wet}: The roadway is described as wet, rainy, or having standing water. \newline \textit{ice\_or\_snow}: The roadway is described as icy, frozen, slick from ice, or snowy. \newline \textit{other\_surface}: Another surface hazard is described, such as gravel, oil, mud, sand, or debris. \newline \textit{not\_stated}: The narrative does not describe the surface condition. \\
\texttt{narrative\_detail} & score & -- & How much crash-relevant detail does the 'narrative' contain? \newline \textit{level 0}: Boilerplate or a single sentence with no description of how the crash happened. \newline \textit{level 1}: A basic sequence of vehicle movements and the collision, without statements or contributing factors. \newline \textit{level 2}: A detailed account including driver or witness statements, contributing factors, or officer findings. \\
\texttt{emotion\_intensity} & score & -- & How intense is the emotional state of any driver described in the 'narrative' before or after the crash? \newline \textit{level 0}: No emotional state is described. \newline \textit{level 1}: A driver is described as upset, frustrated, crying, shaken, or scared. \newline \textit{level 2}: A driver is described as angry, hostile, yelling, threatening, or violent. \\
\texttt{injury\_narr} & score & -- & What is the most severe injury outcome stated in the 'narrative' for any person? \newline \textit{level 0}: No injury is mentioned, or the narrative says no one was injured. \newline \textit{level 1}: A person complained of pain or possible injury but was not transported. \newline \textit{level 2}: A person had a visible injury or was transported to a hospital. \newline \textit{level 3}: A person was pronounced dead or the narrative describes a fatality. \\
\texttt{pii\_residual} & noul & -- & The 'narrative' contains a person's full name, date of birth, driver license or ID number, home address, or phone number that is not already replaced by a placeholder. \newline \textit{true}: An actual name, date of birth, license or ID number, street address, or phone number appears as literal text. \newline \textit{false}: No such identifier appears, or every identifier has been replaced by a placeholder such as [NAME] or [PHONE]. \\
\end{longtable}
\end{small}

\section{Keyword families and the coded-field mapping}
\label{app:keywords}
\setcounter{table}{0}\setcounter{figure}{0}\setcounter{equation}{0}

The rule baseline of Section~\ref{sec:reference} is a set of case-insensitive regular
expressions, one family per variable, and Table~\ref{tab:keywords} lists them as they appear
in the scripts that apply them. The first ten families were written to over-find rather than
to be precise, because their original purpose was to enrich the rare classes of the
second-stage frame and to provide a population keyword rate over all \Ncorpus{} narratives.
They were tuned only on the development sample and never on any evaluated narrative. The
last three families, for alcohol involvement, phone use and restraint use, were added so that
the baseline covers every variable that has a coded reference, since comparing a
macro-averaged score over different variable sets would not be a comparison. Those three
were scored on the working set of narratives only. The residual-identifier patterns of
Section~\ref{sec:corpus} are of the same kind and are listed in the released code beside
them. Table~\ref{tab:coded-map} gives the mapping from each narrative variable to the coded
CRIS fields that define its reference, which is implemented in the released join script over
the crash-unit-person extract. Each reference is true for a crash when the rule holds on any
person or unit row of that crash, because a contributing factor is recorded per unit and a
restraint or test result per person. The hydroplaning reference is a proxy built from the
loss-of-control, surface and weather codes, because CRIS has no hydroplaning field. The
final row is the opposite-direction collision type that the original plan proposed for
wrong-way travel, retained only as a sensitivity check after it proved a factor of
\WrongWayProxyRatio{} too broad.

\begin{small}\rmfamily
\setlength{\tabcolsep}{3.0pt}
\begin{longtable}{l>{\raggedright\arraybackslash}p{71.2pt}>{\raggedright\arraybackslash}p{44.5pt}>{\raggedright\arraybackslash}p{289.2pt}}
\caption{Regular-expression families of the rule baseline.}\label{tab:keywords}\\
\toprule
\jevhead{family} & \jevheadp{narrative variable} & \jevheadp{scope} & \jevheadp{regular expression (case-insensitive)} \\
\midrule
\endfirsthead
\multicolumn{4}{@{}l}{\rmfamily\small\textit{Table \ref{tab:keywords}, continued}}\\
\toprule
\jevhead{family} & \jevheadp{narrative variable} & \jevheadp{scope} & \jevheadp{regular expression (case-insensitive)} \\
\midrule
\endhead
\bottomrule
\multicolumn{4}{@{}r}{\rmfamily\small\textit{continued on the next page}}\\
\endfoot
\bottomrule
\multicolumn{4}{@{}p{\linewidth}@{}}{\rmfamily\small\vspace{2pt}\textit{Note:}~The first ten families apply to all narratives and provide the population keyword rates of Table~\ref{tab:prevalence}. The last three were added so the baseline covers every variable with a coded reference, and were scored on the working set only. Patterns are read from the scripts that apply them.}\\
\endlastfoot
preg & preg\_mentioned & all narratives & \texttt{pregnan|\allowbreak{}unborn|\allowbreak{}fetus|\allowbreak{}fetal|\allowbreak{}trimester|\allowbreak{}weeks along|\allowbreak{}miscarr} \\
medical & medical\_episode & all narratives & \texttt{seizure|\allowbreak{}diabet|\allowbreak{}passed out|\allowbreak{}blacked out|\allowbreak{}fainted|\allowbreak{}heart attack|\allowbreak{}stroke|\allowbreak{}unconscious|\allowbreak{}medical (episode|\allowbreak{}condition|\allowbreak{}emergency|\allowbreak{}event)|\allowbreak{}dementia|\allowbreak{}alzheimer} \\
drug & drug\_involved & all narratives & \texttt{marijuana|\allowbreak{}methamphetamine|\allowbreak{}meth |\allowbreak{}cocaine|\allowbreak{}fentanyl|\allowbreak{}xanax|\allowbreak{}heroin|\allowbreak{}pills|\allowbreak{}narcotic|\allowbreak{}controlled substance|\allowbreak{}under the influence of (a )?drug} \\
hydro & hydroplane & all narratives & \texttt{hydroplan} \\
defect & vehicle\_defect & all narratives & \texttt{tire blow|\allowbreak{}blowout|\allowbreak{}blew out|\allowbreak{}brake fail|\allowbreak{}brakes fail|\allowbreak{}mechanical fail|\allowbreak{}steering fail} \\
rage & aggression & all narratives & \texttt{road rage|\allowbreak{}angry|\allowbreak{}anger|\allowbreak{}enraged|\allowbreak{}argument|\allowbreak{}fight|\allowbreak{}confront|\allowbreak{}brandish|\allowbreak{}assault} \\
fatigue & fatigue & all narratives & \texttt{fell asleep|\allowbreak{}asleep|\allowbreak{}drowsy|\allowbreak{}fatigue|\allowbreak{}nodded off} \\
intent & intentional & all narratives & \texttt{suicid|\allowbreak{}intentional|\allowbreak{}deliberate|\allowbreak{}on purpose|\allowbreak{}rammed} \\
wrongway & wrong\_way & all narratives & \texttt{wrong way|\allowbreak{}wrong direction} \\
animal & animal\_involved & all narratives & \texttt{deer|\allowbreak{}hog|\allowbreak{}cow|\allowbreak{}horse|\allowbreak{}dog|\allowbreak{}coyote|\allowbreak{}cattle} \\
alcohol & alcohol\_involved & working set & \texttt{\textbackslash{}balcohol|\allowbreak{}intoxicat|\allowbreak{}\textbackslash{}bdwi\textbackslash{}b|\allowbreak{}\textbackslash{}bdui\textbackslash{}b|\allowbreak{}drunk|\allowbreak{}\textbackslash{}bbac\textbackslash{}b|\allowbreak{}been drinking|\allowbreak{}odor of an alcoholic|\allowbreak{}open container|\allowbreak{}breathalyz|\allowbreak{}field sobriety} \\
phone & phone\_use & working set & \texttt{cell ?phone|\allowbreak{}cellular|\allowbreak{}mobile (phone|\allowbreak{}device)|\allowbreak{}texting|\allowbreak{}\textbackslash{}btext(ed|\allowbreak{}ing)?\textbackslash{}b|\allowbreak{}on (the|\allowbreak{}his|\allowbreak{}her|\allowbreak{}their) phone|\allowbreak{}looking at (his|\allowbreak{}her|\allowbreak{}their) phone} \\
unbelted & unbelted & working set & \texttt{not wearing (a )?(seat ?belt|\allowbreak{}safety belt)|\allowbreak{}unbelted|\allowbreak{}unrestrained|\allowbreak{}no (seat ?belt|\allowbreak{}restraint)|\allowbreak{}seat ?belt was not|\allowbreak{}without a seat ?belt} \\
\end{longtable}
\end{small}

\begin{small}\rmfamily
\setlength{\tabcolsep}{3.0pt}
\begin{longtable}{>{\raggedright\arraybackslash}p{73.0pt}r>{\raggedright\arraybackslash}p{203.7pt}>{\raggedright\arraybackslash}p{70.6pt}}
\caption{Mapping from narrative variables to coded CRIS reference fields.}\label{tab:coded-map}\\
\toprule
\jevheadp{narrative variable} & \jevhead{reference column} & \jevheadp{CRIS fields and values} & \jevheadp{aggregation to crash level} \\
\midrule
\endfirsthead
\multicolumn{4}{@{}l}{\rmfamily\small\textit{Table \ref{tab:coded-map}, continued}}\\
\toprule
\jevheadp{narrative variable} & \jevhead{reference column} & \jevheadp{CRIS fields and values} & \jevheadp{aggregation to crash level} \\
\midrule
\endhead
\bottomrule
\multicolumn{4}{@{}r}{\rmfamily\small\textit{continued on the next page}}\\
\endfoot
\bottomrule
\multicolumn{4}{@{}p{\linewidth}@{}}{\rmfamily\small\vspace{2pt}\textit{Note:}~Each reference is true for a crash when the rule holds on any of its person or unit rows. The hydroplaning reference is a proxy built from surface, weather and loss-of-control codes, because CRIS has no hydroplaning field. The final row is the opposite-direction collision type the original plan proposed for wrong-way travel, retained only as a sensitivity check.}\\
\endlastfoot
alcohol\_involved & \texttt{coded\_alcohol} & \texttt{Contrib\_Factr\_1\_ID} or \texttt{Contrib\_Factr\_2\_ID} matches ``Under Influence - Alcohol'' or ``Had Been Drinking''; or \texttt{Prsn\_Alc\_Rslt\_ID} = 1; or \texttt{Prsn\_Bac\_Test\_Rslt} $>$ 0 & any person or unit row of the crash \\
drug\_involved & \texttt{coded\_drug} & contributing factor matches ``Under Influence - Drug''; or \texttt{Prsn\_Drg\_Rslt\_ID} = 1 & any row \\
fatigue & \texttt{coded\_fatigue} & contributing factor matches ``Fatigued Or Asleep'' & any row \\
medical\_episode & \texttt{coded\_medical} & contributing factor matches ``Ill (Explain In Narrative)'' & any row \\
animal\_involved & \texttt{coded\_animal} & contributing factor matches ``Animal On Road''; or \texttt{Harm\_Evnt\_ID} contains ``Animal'' & any row \\
phone\_use & \texttt{coded\_phone} & contributing factor matches ``Cell/Mobile Device Use'' & any row \\
unbelted & \texttt{coded\_unbelted} & \texttt{Prsn\_Rest\_ID} equals ``None'' (the only unbelted code; ``Not Applicable'' excluded) & any person row \\
hydroplane & \texttt{coded\_hydro\_proxy} & \texttt{Othr\_Factr\_ID} contains ``Lost Control Or Skidded'' and (\texttt{Surf\_Cond\_ID} contains ``Wet'' or ``Standing Water'', or \texttt{Wthr\_Cond\_ID} contains ``Rain'') & any row; proxy reference \\
wrong\_way & \texttt{coded\_wrongway} & contributing factor matches ``Wrong Way - One Way Road'', ``Wrong Side - Not Passing'' or ``Wrong Side - Approach'' & any row \\
(sensitivity only) & \texttt{coded\_wrongway\_proxy} & \texttt{FHE\_Collsn\_ID} contains ``Opposite Direction'' & any row; not used as a reference \\
\end{longtable}
\end{small}

\section{Coder protocol and additional reliability diagrams}
\label{app:coders}
\setcounter{table}{0}\setcounter{figure}{0}\setcounter{equation}{0}

The human reference set of Section~\ref{sec:reference} was produced under a written protocol
that this appendix summarizes, and Figures~\ref{fig:goldrel} and \ref{fig:goldprec} draw the
two human-referenced diagrams that the main text refers to. The protocol defined the task as
one narrow question repeated, whether the narrative states the factor, under the governing
rule to label what the narrative states rather than what probably happened. A crash that an
experienced analyst would read as alcohol-involved is therefore labeled negative if the
officer did not write it down. That rule inverts the usual assumption that more domain
expertise is better, because the dominant risk is over-inference. Coders who know crashes
may fill in what the officer omitted, agree with each other, and produce a reference that is
confidently wrong in the direction the model might also be wrong. The panel was therefore
mixed by design, with one crash-data practitioner who reads officer shorthand, one
transportation-safety researcher familiar with the constructs, and one careful non-specialist
reader who cannot over-infer and who serves as the check on the other two. Coders were
required to be independent of the modeling work, never to have seen the model's outputs or
helped write the criteria, and not to discuss items while coding. All three first coded a
shared calibration block, which was reviewed together once to catch inference, ambiguous
criteria and shorthand the non-specialist could not read, after which no further discussion
took place. Each coder then labeled an allocation of narratives determined by position in a
fixed roster, with every narrative coded by two of the three and the calibration block by all
three. The roster rule was deterministic and fixed before any narrative was seen. Narratives
were held in the order the export produced, and the calibration block occupied the first
positions and was assigned to every coder. Each later position went to the two coders whose
index matched that position under a fixed rotation, so no coder chose a narrative and no
narrative was reassigned after labeling began. Each judgment carried the coder identifier, the
time taken and a yes, no or unclear verdict. Reliability was computed from the double- and
triple-coded pairs as reported in Section~\ref{sec:reference}, majority labels were formed, and
ties and unclear verdicts were recorded as disputed and excluded.

The calibration weights of Section~\ref{sec:reference} are built from the frame rather than
from the labels, and their construction is recorded here in full. The population is the
second-stage random stratum. For each of the \FrontierNVars{} presence variables the population
is divided into the cells \GoldBinsText{} and the labeled pairs falling in each cell are scaled
so that their weights sum to the population count of that cell. \DesignNActivePairs{} of the
pairs entered through the active route, with assignment probability one, and
\DesignNControlPairs{} through the control route, with assignment probability
\GoldNegControls{} divided by the narrative's count of low-probability variables. Cells holding
fewer than \DesignMinCell{} labeled pairs were merged with an adjacent cell of the same
variable before the scaling, which affected \DesignNCollapsed{} variables, and the merges are
listed in the released design report. The weights span a factor of \DesignWeightRatio{} and
give a design effect of \DesignDeff{}, with per-variable effective sample sizes running from
\DesignNEffVarLo{} to \DesignNEffVarHi{}. The cluster bootstrap resamples narratives within the
\DesignNStrata{} draw routes the frame records, which are the cells the greedy rule drew each
narrative to fill.

\begin{small}\rmfamily
\setlength{\tabcolsep}{4.0pt}
\begin{longtable}{l>{\raggedright\arraybackslash}p{186.9pt}rrrr}
\caption{Unresolved pairs and the sensitivity of the headline results to them.}\label{tab:exclusions}\\
\toprule
\jevhead{block} & \jevheadp{item} & \jevhead{count} & \jevhead{$F_1$} & \jevhead{ECE} & \jevhead{$H(0.90)$ \%} \\
\midrule
\endfirsthead
\multicolumn{6}{@{}l}{\rmfamily\small\textit{Table \ref{tab:exclusions}, continued}}\\
\toprule
\jevhead{block} & \jevheadp{item} & \jevhead{count} & \jevhead{$F_1$} & \jevhead{ECE} & \jevhead{$H(0.90)$ \%} \\
\midrule
\endhead
\bottomrule
\multicolumn{6}{@{}r}{\rmfamily\small\textit{continued on the next page}}\\
\endfoot
\bottomrule
\multicolumn{6}{@{}p{\linewidth}@{}}{\rmfamily\small\vspace{2pt}\textit{Note:}~The \ExclN{} excluded pairs are those the coders marked unclear or disagreed on. They are dropped from every rate in Section~\ref{sec:res-accuracy}, so those rates are conditional on the pairs the coders found clear. The sensitivity block replaces that conditioning with three assumptions. Routing the excluded pairs to review leaves $F_1$ undefined, because a record sent to a human is not scored as a model decision.}\\
\endlastfoot
by variable & hydroplane & 9 & -- & -- & -- \\
by variable & hit\_and\_run & 7 & -- & -- & -- \\
by variable & phone\_use & 7 & -- & -- & -- \\
by variable & transported & 6 & -- & -- & -- \\
by variable & witness\_cited & 4 & -- & -- & -- \\
by variable & alcohol\_involved & 4 & -- & -- & -- \\
by variable & vehicle\_defect & 3 & -- & -- & -- \\
by variable & aggression & 3 & -- & -- & -- \\
by variable & medical\_episode & 2 & -- & -- & -- \\
by variable & wrong\_way & 2 & -- & -- & -- \\
by variable & fatigue & 1 & -- & -- & -- \\
by variable & drug\_involved & 1 & -- & -- & -- \\
by probability bin & [0.05,0.3) & 9 & -- & -- & -- \\
by probability bin & [0.3,0.7) & 22 & -- & -- & -- \\
by probability bin & [0.7,0.95) & 17 & -- & -- & -- \\
by probability bin & [0.95,1] & 1 & -- & -- & -- \\
sensitivity & excluded, as reported throughout & 49 & 0.908 & 0.0231 & 0.0 \\
sensitivity & counted as model errors & 49 & 0.875 & 0.0228 & 6.9 \\
sensitivity & counted as model-correct & 49 & 0.912 & 0.0232 & 0.0 \\
sensitivity & routed to review regardless of probability & 49 & -- & -- & 3.7 \\
\end{longtable}
\end{small}

\begin{figure}[pos=!htbp]\centering
\includegraphics[width=\textwidth]{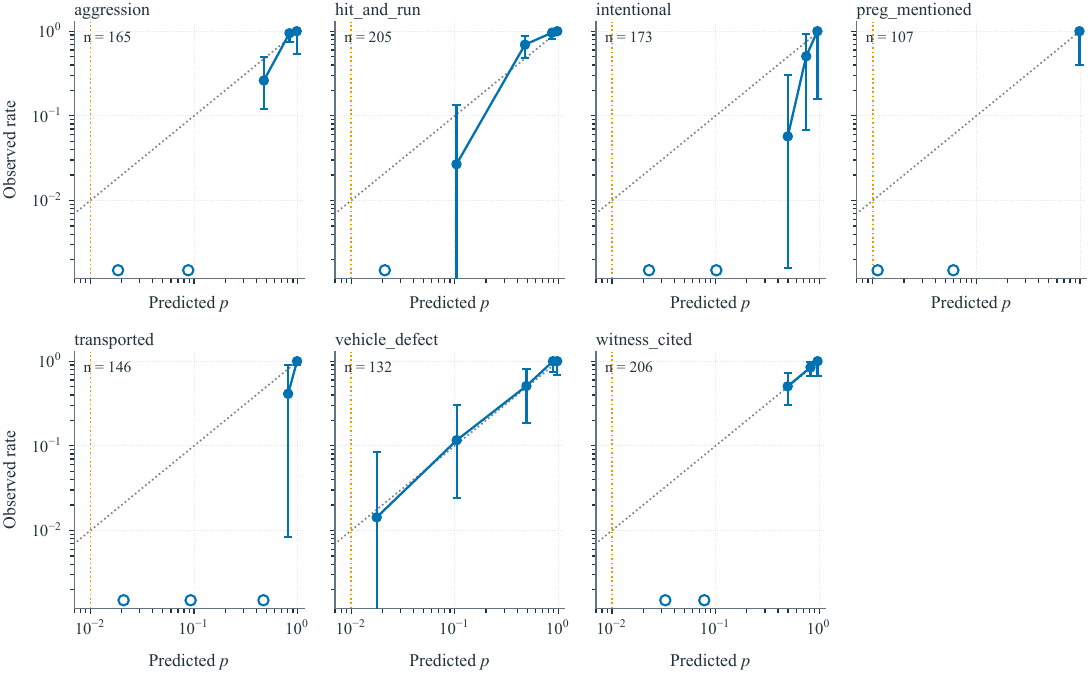}
\caption{Reliability against the human reference set for the seven presence variables with no
coded counterpart, which therefore do not appear in Figure~\ref{fig:calibration}. Each panel
gives the observed positive rate among the human labels within each sampling stratum,
weighted by the calibration weights of Section~\ref{sec:reference}, with approximate
intervals on the Kish effective sample size, on logarithmic axes. The dotted diagonal marks
calibration, the vertical guide the grid floor, open circles strata with no positive, and the
number of judgments is printed.}
\label{fig:goldrel}
\end{figure}

\begin{figure}[pos=!htbp]\centering
\includegraphics[width=\textwidth]{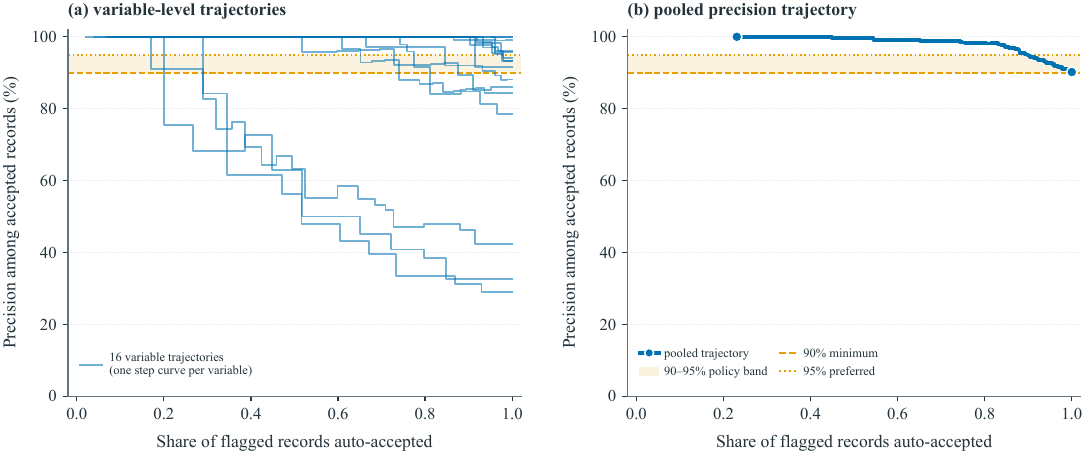}
\caption{Precision among accepted records against the share of flagged records auto-accepted,
measured on the human reference set with the same construction as
Figure~\ref{fig:riskcoverage}b. (a) Each variable as a thin line, most holding
\PrecTargetNinety\% precision until more than four fifths of their flags are accepted.
(b) All variables pooled, crossing \PrecTargetNinetyFive\% near nine tenths of the range and
\PrecTargetNinety\% only at its right-hand end. Both panels use the whole reference set, so
they give the in-sample budgets \GoldBudgetNinetyFiveInSample\% and
\GoldBudgetNinetyInSample\%; Section~\ref{sec:res-accuracy} reports the cross-fitted budgets
\GoldBudgetNinetyFive\% and \GoldBudgetNinety\% instead.}
\label{fig:goldprec}
\end{figure}

\FloatBarrier
\section{Proofs and the implementation map}
\label{app:proofs}
\setcounter{table}{0}\setcounter{figure}{0}\setcounter{equation}{0}
\appendixtheorems

This appendix collects the mathematics on which the audit of Section~\ref{sec:evaluation}
rests. It fixes the notation shared by every statement and proves that the Murphy
decomposition of the Brier score is exact on the output grid. It then proves
Proposition~\ref{prop:floor} and Lemma~\ref{lem:noise} together with a tightness statement
for each, derives the closed form of the break-even fraction of Eq.~(\ref{eq:twostage}), and
closes with the map from every numbered equation to the released code. Every statement is written for the weighted empirical
distribution of a labeled sample, with weights $w_i$ that sum to $n$, and each holds verbatim
for the population once weighted means are replaced by expectations.

Throughout, $G \subset [0,1]$ is the finite output grid, $W_v = \sum_i w_i\,\mathbb{1}[p_i = v]$
is the weight at grid value $v$, $r_v = \frac{1}{W_v}\sum_i w_i\,\mathbb{1}[p_i = v]\,y_i$ is the
observed positive rate at $v$ whenever $W_v > 0$, $\bar{p} = \sum_v (W_v/n)\,v$ is the mean
probability, and $\bar{y} = \sum_v (W_v/n)\,r_v = \frac{1}{n}\sum_i w_i y_i$ is the base rate,
written $\rho$ in the main text. Grouping by grid value is exact rather than approximate,
because every record at a value $v$ carries the same probability. Two identities follow
directly from the definitions,
\begin{equation}
\sum_i w_i\,\mathbb{1}[p_i = v]\,(y_i - r_v) = 0 \quad\text{for every } v,\qquad
\sum_{v \in G} \frac{W_v}{n}\,(v - r_v) = \bar{p} - \bar{y}.
\label{eq:app-ident}
\end{equation}
A model is calibrated on the sample when $r_v = v$ at every grid value with $W_v > 0$, and
$w_0 = W_0/n$ denotes the weight the model places on the value zero.

\begin{lemma}[Exact Murphy decomposition on the grid]
\label{lem:murphy}
With $\mathrm{BS}$ as in Eq.~(\ref{eq:brier}) and the grouping by grid value,
\begin{equation}
\mathrm{BS} \;=\; \sum_{v \in G} \frac{W_v}{n}\,(v - r_v)^2
\;-\; \sum_{v \in G} \frac{W_v}{n}\,(r_v - \bar{y})^2 \;+\; \bar{y}\,(1 - \bar{y}),
\label{eq:app-murphy}
\end{equation}
with equality and no binning error, and the reliability term bounds the exact calibration
error of Eq.~(\ref{eq:ecegrid}) through $\mathrm{ECE}_G^2 \le \sum_v (W_v/n)(v - r_v)^2$.
\end{lemma}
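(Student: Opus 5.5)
The plan is to expand the Brier score group by group on the grid and use the within-group orthogonality in Eq.~(\ref{eq:app-ident}).

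\textbf{Step 1: split each record's error.} For a record with $p_i = v$, write $p_i - y_i = (v - r_v) + (r_v - y_i)$ and square. Summing $w_i$ times the square over records at $v$ gives three pieces:
\begin{itemize}
\item the term $W_v (v - r_v)^2$;
\item a cross term $2(v - r_v)\sum_i w_i\,\mathbb{1}[p_i = v]\,(r_v - y_i)$, which vanishes by the first identity of Eq.~(\ref{eq:app-ident});
\item the within-group term $\sum_i w_i\,\mathbb{1}[p_i=v]\,(y_i - r_v)^2$.
\end{itemize}
Because $y_i \in \{0,1\}$, the within-group term equals $W_v r_v(1 - r_v)$, since $y_i^2 = y_i$. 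Dividing by $n$ and summing over $v$ gives
\[
\mathrm{BS} = \sum_v \frac{W_v}{n}(v - r_v)^2 + \sum_v \frac{W_v}{n}\, r_v(1 - r_v).
\]

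\textbf{Step 2: rewrite the second sum.} This is the only step needing care, and it is a standard variance identity. Write $r_v(1 - r_v) = \bar{y}(1 - \bar{y}) - (r_v - \bar{y})^2 + (r_v - \bar{y})(1 - 2\bar{y})$, which can be checked by expanding. Weight by $W_v/n$ and sum. The last term drops out because $\sum_v (W_v/n)\, r_v = \bar{y}$, which holds by the definition of $\bar{y}$, and $\sum_v W_v/n = 1$, which holds because the weights sum to $n$. What remains is $\bar{y}(1 - \bar{y}) - \sum_v (W_v/n)(r_v - \bar{y})^2$, which gives Eq.~(\ref{eq:app-murphy}) exactly. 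No approximation enters anywhere, because grouping by grid value assigns every record in a group the same probability $v$. This is the sense in which the decomposition carries no binning error.

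\textbf{Step 3: bound the calibration error.} Apply Jensen's inequality, or equivalently Cauchy--Schwarz, with the probability weights $W_v/n$:
\[
\mathrm{ECE}_G^2 = \Bigl(\sum_v \frac{W_v}{n}\,|v - r_v|\Bigr)^2 \le \sum_v \frac{W_v}{n}(v - r_v)^2.
\]
The right-hand side is the reliability term.

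Grid values with $W_v = 0$ contribute nothing to any sum, so $r_v$ need not be defined there. The main obstacle is bookkeeping rather than substance: the weights must be normalized to $n$ so that the $W_v/n$ form a probability vector, which is what makes the cross term in Step 2 cancel.
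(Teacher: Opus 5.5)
Your proof is correct and follows the paper's argument step for step. It uses the same within-group split with the cross term removed by the first identity of Eq.~(\ref{eq:app-ident}), the same use of $y_i^2 = y_i$, and Jensen for the bound; the only difference is cosmetic, in Step 2, where you average a pointwise identity instead of expanding the resolution term directly as $\sum_v (W_v/n)\,r_v^2 - \bar{y}^2$, and both routes rest on the same two facts, $\sum_v W_v/n = 1$ and $\sum_v (W_v/n)\,r_v = \bar{y}$.
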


\begin{pf}
Fix $v$ with $W_v > 0$. For every record at $v$,
$(v - y_i)^2 = (v - r_v)^2 + 2(v - r_v)(r_v - y_i) + (r_v - y_i)^2$. Summing with the weights
$w_i$ over the records at $v$, the middle term vanishes by the first identity of
Eq.~(\ref{eq:app-ident}). Because $y_i \in \{0, 1\}$ gives $y_i^2 = y_i$, the last term sums to
$W_v r_v - W_v r_v^2 = W_v\, r_v(1 - r_v)$. Summing over $v$ and dividing by $n$,
$\mathrm{BS} = \sum_v (W_v/n)(v - r_v)^2 + \sum_v (W_v/n)\, r_v(1 - r_v)$. Expanding
$\sum_v (W_v/n)(r_v - \bar{y})^2 = \sum_v (W_v/n)\, r_v^2 - \bar{y}^2$ and substituting gives
$\sum_v (W_v/n)\, r_v(1 - r_v) = \bar{y}(1 - \bar{y}) - \sum_v (W_v/n)(r_v - \bar{y})^2$, which
is Eq.~(\ref{eq:app-murphy}). The inequality is Jensen's inequality for the square applied to
the probability weights $W_v/n$ and the values $|v - r_v|$.
\end{pf}

\begin{proofprop}
Suppose first that the model is calibrated on the sample, so that $r_v = v$ wherever
$W_v > 0$. Then $\rho = \bar{y} = \sum_v (W_v/n)\, r_v = \sum_v (W_v/n)\, v = \bar{p}$, and
because every positive grid value is at least $\delta$,
$\bar{p} \ge \delta \sum_{v > 0} W_v/n = \delta(1 - w_0)$. Rearranging $\rho \ge \delta(1 - w_0)$
gives $w_0 \ge 1 - \rho/\delta$. The bound used only $v \ge \delta$, so it holds for every
assignment of positive grid values, and the contrapositive is the first claim. If
$\rho < \delta(1 - w_0)$, then $r_v = v$ fails at some grid value carrying weight, whatever
values the model returns. For the second claim, the second identity of Eq.~(\ref{eq:app-ident})
and the triangle inequality give
$\mathrm{ECE}_G = \sum_v (W_v/n)\,|v - r_v| \ge \bigl|\sum_v (W_v/n)(v - r_v)\bigr|
= |\bar{p} - \rho| \ge \bar{p} - \rho$. The inequality $\bar{p} \ge \delta(1 - w_0)$ holds
whether or not the model is calibrated, so $\bar{p} - \rho \ge \delta(1 - w_0) - \rho > 0$,
which is Eq.~(\ref{eq:floor}). Equality in the triangle inequality holds exactly when
$v - r_v$ has one sign at every grid value carrying weight, so $\mathrm{ECE}_G = \bar{p} - \rho$
for a model whose reliability curve lies on or above the diagonal everywhere, and the floor
$\delta(1 - w_0) - \rho$ is attained when in addition all positive weight sits at $\delta$.
Finally, let $g \colon G \to [0,1]$ be any recalibration map. If $g(p)$ is calibrated then
$\sum_v (W_v/n)\, g(v) = \rho < \delta(1 - w_0)$, so $g$ must send some positive grid value
carrying weight below $\delta$. The maps of Eq.~(\ref{eq:platt}) can do so and the model
cannot, which is why the floor is a property of the grid and not of the ranking.
\end{proofprop}

\begin{prooflem}
Write the coded field as $y' = y\,(1 - m)$, where $m$ is an omission indicator with
$\Pr(m = 1 \mid y = 1, p = v) = f_v$, so that $y' = 1$ implies $y = 1$ and
$r'_v = \Pr(y' = 1 \mid p = v) = (1 - f_v)\, r_v$. The weights $W_v$ depend on $p$ alone and are
therefore the same under both references. Since $0 \le f_v \le 1$, $r'_v \le r_v$, which is
the first claim. If $(1 - f_v)\, r_v \ge v$ then $v \le r'_v \le r_v$, so
$|v - r'_v| = r'_v - v \le r_v - v = |v - r_v|$, which is the second claim, and summing with the
weights $W_v/n$ over every grid value with $W_v > 0$ gives Eq.~(\ref{eq:noise}). Under that
condition the gap is exact rather than an inequality,
\begin{equation}
\mathrm{ECE}_G(y) - \mathrm{ECE}_G(y') \;=\; \sum_{v \in G} \frac{W_v}{n}\,(r_v - r'_v)
\;=\; \sum_{v \in G} \frac{W_v}{n}\, f_v\, r_v \;=\; \Pr(y = 1,\; y' = 0),
\label{eq:app-gap}
\end{equation}
the share of records that the narrative reference marks positive and the coded field does
not, which is the narrative-only cell measured in Section~\ref{sec:res-frontier}. Where the
condition fails at some $v$, $r'_v < v$ and $|v - r'_v| = v - r'_v$. This exceeds $|v - r_v|$
whenever $r_v < v$, and whenever $r_v \ge v$ with $2v > r_v + r'_v$, so omission then inflates
the measured error and the bias reverses on that part of the grid. If the coded field also
records factors the narrative does not state, $r'_v$ can exceed $r_v$ and the direction of the
bias is no longer determined by the omission rate alone.
\end{prooflem}

\begin{lemma}[Break-even fraction]
\label{lem:breakeven}
Under the cost model of Eq.~(\ref{eq:cost}) with $k_f > k_s$, the break-even fraction of
Eq.~(\ref{eq:twostage}) has the closed form
\begin{equation}
q^{*} \;=\; \frac{\beta\,(k_f - k_s)}{\alpha + \beta k_f + b\bar{\ell}},\qquad
\frac{\partial q^{*}}{\partial \bar{\ell}} \;=\; -\,\frac{b\,\beta\,(k_f - k_s)}{(\alpha + \beta k_f + b\bar{\ell})^2},\qquad
\frac{\partial \ln q^{*}}{\partial \ln \bar{\ell}} \;=\; -\,\frac{b\bar{\ell}}{\alpha + \beta k_f + b\bar{\ell}},
\label{eq:app-breakeven}
\end{equation}
so $q^{*}$ decreases in narrative length, and its elasticity with respect to length equals
minus the share of the full-schema token count that the text contributes.
\end{lemma}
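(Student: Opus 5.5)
The plan is to start from the definition of $q^{*}$ in Eq.~(\ref{eq:twostage}) and substitute the linear token model of Eq.~(\ref{eq:cost}), evaluated at the mean length $\bar{\ell}$. Writing $D = \alpha + \beta k_f + b\bar{\ell}$ for the full-schema token count, we have $q^{*} = 1 - (\alpha + \beta k_s + b\bar{\ell})/D$. Putting the two terms over the common denominator $D$, everything cancels except $\beta k_f - \beta k_s$, which gives the closed form $q^{*} = \beta(k_f - k_s)/D$.

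Before differentiating, I will state the standing assumption that makes the expressions meaningful, namely $D > 0$. This holds because $\alpha, \beta, b \ge 0$ are token counts and the fitted values are positive. With $k_f > k_s$ and $\beta > 0$, the numerator is then positive, so $q^{*} \in (0,1)$.

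The derivative with respect to $\bar{\ell}$ follows by the quotient rule. The numerator $\beta(k_f - k_s)$ does not depend on $\bar{\ell}$, and $\partial D/\partial \bar{\ell} = b$. This yields $-b\beta(k_f-k_s)/D^2$, which is nonpositive, and strictly negative when $b > 0$. That establishes that $q^{*}$ decreases in narrative length.

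For the elasticity, I will write $\ln q^{*} = \ln\beta(k_f - k_s) - \ln D$, so that $\partial \ln q^{*}/\partial \ln \bar{\ell} = -\bar{\ell}\,\partial \ln D/\partial \bar{\ell} = -b\bar{\ell}/D$. The right-hand side is minus the text term $b\bar{\ell}$ divided by the total full-schema count $T(k_f,\bar{\ell}) = D$, which is the stated interpretation as a share.

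None of these steps is a real obstacle; the calculation is routine. The only point requiring care is to make the positivity of $D$ and of $k_f - k_s$ explicit, since the logarithm and the sign claims depend on them. I would also add a remark tying the elasticity back to the results: with $b\bar{\ell}$ of roughly $89$ tokens against a full-schema count of about $3{,}500$, the elasticity is small in magnitude. This is the formal content of the claim that $q^{*}$ is a property of the schema rather than of the text.
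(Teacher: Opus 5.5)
Your proposal is correct and follows essentially the same route as the paper's proof. You rewrite $q^{*}$ as $[T(k_f,\bar{\ell})-T(k_s,\bar{\ell})]/T(k_f,\bar{\ell})$ so that $\alpha$ and $b\bar{\ell}$ cancel, then apply the quotient rule, and your log-derivative for the elasticity is the paper's step of multiplying the derivative by $\bar{\ell}/q^{*}$. Making the positivity assumptions explicit is a small improvement the paper leaves implicit, but note that $q^{*}<1$ follows from $T(k_s,\bar{\ell})>0$ (here guaranteed by $\alpha>0$), not from positivity of the numerator and $D$.
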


\begin{pf}
Screening costs less than coding everything once exactly when
$T(k_s, \bar{\ell}) + q\,T(k_f, \bar{\ell}) < T(k_f, \bar{\ell})$, that is when
$q < 1 - T(k_s, \bar{\ell})/T(k_f, \bar{\ell}) = [T(k_f, \bar{\ell}) - T(k_s, \bar{\ell})]/T(k_f, \bar{\ell})$.
The intercept $\alpha$ and the text term $b\bar{\ell}$ cancel in the difference, which leaves
$\beta(k_f - k_s)$ in the numerator. Differentiating the quotient in $\bar{\ell}$ gives the
second expression, and multiplying it by $\bar{\ell}/q^{*}$ gives the third. When
$\beta k_f$ dominates $b\bar{\ell}$ the elasticity is small, which is the claim of
Section~\ref{sec:schema} that the schema rather than the text governs the two-stage design.
\end{pf}

The weighting of Eq.~(\ref{eq:ht}) is a calibration estimator rather than an inverse-probability
one, and the distinction matters for what can be claimed about it. The classical result is that
$\mathbb{E}\bigl[\sum_{i \in s} z_i/\pi_i\bigr] = \sum_{i \in U} z_i$ for any sample $s$ drawn
from the frame $U$ with inclusion probabilities $\pi_i > 0$, because
$\mathbb{E}\,\mathbb{1}[i \in s] = \pi_i$ \citep{horvitz1952generalization}. That result is
background here rather than an assertion about this reference set, because the greedy draw and
the identifier screen of Section~\ref{sec:reference} leave no $\pi_i$ that can be written down.
The weights used instead satisfy the calibration equations exactly, so the weighted count of
every cell reproduces its population count, and the resulting estimator is design-consistent
under the condition that labeling is uninformative about the outcome within a cell
\citep{deville1992calibration}. The ratio of two weighted sums is consistent but not unbiased,
with a bias of order $1/n$ \citep{sarndal1992model}, which is why the percentile interval of
Eq.~(\ref{eq:htvar}) is re-centered by the bootstrap estimate of that bias. Table~\ref{tab:eqmap} lists, for every numbered equation of
Section~\ref{sec:methods}, the released script, the function and the result file that compute
it, so that a reader can locate the implementation of each quantity without searching the
repository.

\begin{sidewaystable}[!htbp]
\centering
\rmfamily\small
\setlength{\tabcolsep}{3.0pt}
\jevrotcaption{tab:eqmap}{Where each formalized quantity is computed.}
\begin{threeparttable}
\begin{tabular}{>{\raggedright\arraybackslash}p{64.1pt}>{\raggedright\arraybackslash}p{132.5pt}>{\raggedright\arraybackslash}p{135.4pt}>{\raggedright\arraybackslash}p{130.6pt}>{\raggedright\arraybackslash}p{136.7pt}}
\toprule
\jevheadp{equation} & \jevheadp{quantity} & \jevheadp{script (\texttt{src/})} & \jevheadp{function} & \jevheadp{result file (\texttt{data/})} \\
\midrule
Eq.~(\ref{eq:gate}) & gating operator and no-match override & \texttt{s06\_flatten.py} & \texttt{flatten} & \texttt{stage2\_flat.parquet} \\
Eq.~(\ref{eq:leak}) & leakage rate $\lambda_v$ & \texttt{s06\_flatten.py} & \texttt{flatten} & \texttt{stage2\_leakage.json} \\
Eq.~(\ref{eq:ht}) & H\'ajek ratio with calibration weights $w_i$ & \texttt{s05b\_design\_weights.py; s17\_gold\_analysis.py} & \texttt{design\_weights; wrate, block} & \texttt{gold\_design\_weights.csv; gold\_analysis.json} \\
Eq.~(\ref{eq:htvar}) & stratified cluster bootstrap interval & \texttt{s07\_metrics.py} & \texttt{cluster\_bootstrap; weighted\_bootstrap} & \texttt{analysis.json; gold\_analysis.json} \\
Eq.~(\ref{eq:brier}) & Brier score and Murphy decomposition & \texttt{s07\_metrics.py} & \texttt{brier, brier\_decomposition} & \texttt{analysis.json} \\
Eq.~(\ref{eq:ece}) & expected and maximum calibration error, equal-mass bins & \texttt{s07\_metrics.py} & \texttt{ece, mce, equal\_mass\_bins} & \texttt{analysis.json} \\
Eq.~(\ref{eq:ecegrid}) & exact calibration error on the output grid & \texttt{s07\_metrics.py} & \texttt{ece\_discrete, discrete\_reliability} & \texttt{analysis.json; gold\_analysis.json} \\
Eq.~(\ref{eq:cox}) & calibration slope and intercept & \texttt{s07\_metrics.py} & \texttt{calibration\_slope} & \texttt{analysis.json; gold\_analysis.json} \\
Eq.~(\ref{eq:spieg}) & Spiegelhalter statistic & \texttt{s07\_metrics.py} & \texttt{spiegelhalter\_z} & \texttt{analysis.json; gold\_analysis.json} \\
Eq.~(\ref{eq:floor}) & resolution-floor bound & \texttt{s07\_metrics.py} & \texttt{resolution\_floor\_report} & \texttt{analysis.json} \\
Eq.~(\ref{eq:prec}) & precision and coverage over flagged records & \texttt{s07\_metrics.py} & \texttt{precision\_coverage, coverage\_at\_precision} & \texttt{analysis.json} \\
Eq.~(\ref{eq:budget}) & review budget $H(\pi^*)$, cross-fitted & \texttt{s07\_metrics.py; s17\_gold\_analysis.py} & \texttt{positive\_review\_budget; held\_out\_budget} & \texttt{analysis.json; gold\_analysis.json} \\
Eq.~(\ref{eq:riskcov}) & risk--coverage curve and area & \texttt{s07\_metrics.py} & \texttt{risk\_coverage, aurc, coverage\_at\_risk} & \texttt{analysis.json} \\
Eq.~(\ref{eq:noise}) & one-sided reference-noise bound & \texttt{s17\_gold\_analysis.py} & \texttt{narrative\_only} & \texttt{gold\_analysis.json} \\
Eq.~(\ref{eq:cost}) & token cost model & \texttt{s08\_cost\_model.py} & \texttt{main} & \texttt{cost\_model.json} \\
Eq.~(\ref{eq:twostage}) & two-stage cost and break-even fraction & \texttt{s08\_cost\_model.py; pricing.py} & \texttt{main; jev\_cost\_per\_1k} & \texttt{cost\_model.json} \\
Eq.~(\ref{eq:platt}) & Platt and isotonic recalibration, split over narratives & \texttt{s17\_gold\_analysis.py} & \texttt{recalibration} & \texttt{gold\_analysis.json} \\
Eq.~(\ref{eq:kappa}) & Cohen's kappa, weighted against the reference set & \texttt{s07\_metrics.py; s16\_gold\_ingest.py} & \texttt{cohen\_kappa} & \texttt{gold\_reliability.json; gold\_analysis.json} \\
Eq.~(\ref{eq:cp}) & Clopper--Pearson interval & \texttt{s07\_metrics.py} & \texttt{clopper\_pearson} & \texttt{analysis.json} \\
Eq.~(\ref{eq:mcnemar}) & McNemar test & \texttt{s07\_metrics.py} & \texttt{mcnemar} & \texttt{analysis.json} \\
\bottomrule
\end{tabular}
% \begin{tablenotes}[flushleft]\small\rmfamily
% \item \textit{Note:}~Scripts are in \texttt{paper1/src/} and result files in \texttt{paper1/data/} of the released repository. Every equation in Section~\ref{sec:methods} maps to a function that a reader can execute on the released outputs.
% \end{tablenotes}
\end{threeparttable}
\end{sidewaystable}

\section*{Data and code availability}
The question schema, the asynchronous runner, the metric implementations with their unit
tests, the figure and table generators, the dependency file and a release manifest naming what
is public and what is not, and all aggregated outputs behind every figure and table are
released at \url{https://github.com/pozapas/jev-calibrated-narrative-coding}. All runs use a
pinned model identifier, recorded per call, and every script resolves its paths from the
repository root.

The underlying narratives and the record-level files that carry crash identifiers are not
redistributable under the data agreement, and the reference frame is among them, so it is not
released. A de-identified pair-level audit table has been prepared instead, carrying an
anonymous narrative identifier, the variable, the model probability, the consensus label, the
dispute status, the calibration cell, the draw route, the narrative weight, the assignment
probability, the pair weight and both frontier arms' predictions, with no crash identifier and
no narrative text. It is the file from which the human-reference results of
Section~\ref{sec:res-accuracy}, the recalibration of Section~\ref{sec:res-calibration} and the
frontier comparison of Section~\ref{sec:res-frontier} can be recomputed in full. It will be
made available on request once the data owner has confirmed that the agreement permits it. The
population-scale results of Section~\ref{sec:res-population} and the downstream counts of
Section~\ref{sec:res-downstream} depend on the Stage-2 outputs joined to coded fields at crash
level and cannot be recomputed from any releasable file, so the aggregated result files behind
them are released in place of the inputs, and a controlled-access route through the data owner
is the only way to reproduce them from source.

% \section*{Declaration of competing interest}
% The authors declare that they have no known competing financial interests or personal
% relationships that could have appeared to influence the work reported in this paper. The
% authors have no relationship with the developer of the model audited here, and the model
% was accessed at published list prices.

% ---------------------------------------------------------------------------
% REMOVED AT THE OWNER'S REQUEST, 2026-09-20. To restore, delete this banner
% and the leading %% on each line of the block below.
%% \section*{Funding}
%% This research did not receive any specific grant from funding agencies in the public,
%% commercial, or not-for-profit sectors.

% ---------------------------------------------------------------------------
% REMOVED AT THE OWNER'S REQUEST, 2026-09-20. To restore, delete this banner
% and the leading %% on each line of the block below.
%% \section*{Declaration of generative AI and AI-assisted technologies in the writing process}
%% During the preparation of this work the authors used Claude (Anthropic) in order to edit the
%% manuscript for structure and style and to check the consistency of reported numbers against
%% the generated analysis files. After using this tool, the authors reviewed and edited the
%% content as needed and take full responsibility for the content of the publication.

\printcredits

\bibliographystyle{cas-model2-names}
\bibliography{refs}

\end{document}